\documentclass[final,12pt]{colt2024} 

\title[Topological Expressivity of ReLU NNs]{Topological Expressivity of ReLU Neural Networks}
\usepackage{subcaption}
\usepackage{times}



\coltauthor{\Name{Ekin Ergen} \Email{ergen@math.tu-berlin.de}
\and
\Name{Moritz Grillo} \Email{grillo@math.tu-berlin.de}\\
\addr Technische Universit\"at Berlin, Berlin, Germany}

\usepackage[utf8]{inputenc}
\usepackage{amsfonts}
\usepackage{enumitem}
\usepackage{forloop}
\usepackage{cleveref}
\usepackage{mathtools}
\usepackage{notoccite}
\usepackage{pgfmath}
\usepackage{tikz}
\usepackage{tikz-cd}
\usetikzlibrary{intersections}
\usepackage{xcolor}
\usepackage{thmtools}
\usepackage{thm-restate}

\numberwithin{subcase}{case}
\newlist{casesp}{enumerate}{3} 
\setlist[casesp]{align=left, 
                 listparindent=\parindent, 
                 parsep=\parskip, 
                 font=\normalfont\bfseries, 
                 leftmargin=0pt, 
                 labelwidth=0pt, 
                 itemindent=.4em,labelsep=.4em, 
                 partopsep=0pt, 
                 }
\setlist[casesp,1]{label=Case~\arabic*:,ref=\arabic*}
\setlist[casesp,2]{label=Case~\thecasespi.\roman*:,ref=\thecasespi.\roman*}
\setlist[casesp,3]{label=Case~\thecasespii.\alph*:,ref=\thecasespii.\alph*}
\hyphenation{ho-meo-morph-ism}
\definecolor{light blue}{rgb}{0.68, 0.85, 0.9} 
\hypersetup{
	colorlinks=true,
	linkcolor=blue}

\newcommand{\altfrac}[2]{\ifmmode\def\tmp{$}\else\def\tmp{}\fi\mbox{%
    {\footnotesize \raisebox{.24\ht\strutbox}{\tmp#1\tmp}}%
    \kern-2.2pt\scalebox{1.6}[1]{/}\kern-1.8pt
    {\footnotesize\tmp#2\tmp}%
    }}

\newtheorem{def1}[theorem]{Definition}
\newtheorem{prop}[theorem]{Proposition}
\newtheorem{cor}[theorem]{Corollary}

\newtheorem{obs}[theorem]{Observation}
\newtheorem*{claim}{Claim}

\newcommand{\N}{\mathbb{N}}
\newcommand{\Po}{\mathcal{P}}
\newcommand{\R}{\mathbb{R}}
\newcommand{\Z}{\mathbb{Z}}
\newcommand{\1}{\mathbf{1}^T}
\newcommand{\sgn}{\operatorname{sgn}}
\newcommand{\Int}{\operatorname{int}}

\newcommand{\im}{\operatorname{im}}

\newcommand{\aff}{\operatorname{Aff}}
\newcommand{\conv}{\operatorname{conv}}
\newcommand{\sk}{\operatorname{sk}}
\newcommand{\interior}[1]{%
  {\kern0pt#1}^{\mathrm{o}}%
}
\newcommand{\ReLU}{
	\begin{tikzpicture}
		\draw [line width=2pt] (-1.1ex,0) -- (0,0) -- (0.9ex,0.9ex);
	\end{tikzpicture}
}
\tikzset{connection/.style={ -{Stealth[scale=1.5]}}}
\tikzset{neuron/.style={thick, circle, draw, inner sep = 0ex, minimum width = 4.5ex}}


\begin{document}
\maketitle
\begin{abstract}%
    We study the expressivity of ReLU neural networks in the setting of a binary classification problem from a topological perspective. Recently, empirical studies showed that neural networks operate by changing topology, transforming a topologically complicated data set into a topologically simpler one as it passes through the layers. This topological simplification has been measured by Betti numbers, which are algebraic invariants of a topological space. We use the same measure to establish lower and upper bounds on the topological simplification a ReLU neural network can achieve with a given architecture. We therefore contribute to a better understanding of the expressivity of ReLU neural networks in the context of binary classification problems by shedding light on their ability to capture the underlying topological structure of the data. In particular the results show that deep ReLU neural networks are exponentially more powerful than shallow ones in terms of topological simplification. This provides a mathematically rigorous explanation why deeper networks are better equipped to handle complex and topologically rich data sets.%
\end{abstract}
\begin{keywords}%
ReLU neural networks, Betti numbers, expressivity, topology%
\end{keywords}
\section{Introduction}
Neural networks are at the core of many AI applications. A crucial task when working with neural networks is selecting the appropriate architecture to effectively tackle a given problem. Therefore, it is of fundamental interest to understand the range of problems that can be solved by neural networks with a given architecture, i.e., its \emph{expressivity}. 

 In recent years,  many theoretical findings have shed light on the expressivity of neural networks. Universal approximation theorems \citep{Cybenko1989ApproximationBS,Hornik1991ApproximationCO} state that one hidden layer is already sufficient to approximate any continuous function with arbitrary accuracy. On the other hand, it is known that deep networks can represent more complex functions than their shallow counterparts (see e.g.,~\cite{pmlr-v49-telgarsky16, pmlr-v49-eldan16, Arora2016UnderstandingDN}).

The measure of expressivity of a neural network should always be related to the problem it has to solve. A common scenario in which neural networks are employed is the binary classification problem, where the network serves as a classifier for a binary labeled data set. Since topological data analysis has revealed that data often has nontrivial topology, it is important to consider the topological structure of the data when dealing with a binary classification problem.  \citet{Topology_of_NN} show through empirical methods that neural networks operate topologically, transforming a topologically complicated data set into a topologically simple one as it passes through the layers. Given a binary labeled data set, they assume that the positively labeled and the negatively labeled points are sampled from topological spaces $X_a$ and $X_b$ respectively that are entangled with each other in a nontrivial way. Their experiments show that a well-trained neural network gradually disentangles the topological spaces  until they are linearly separable in the end, i.e, the space $X_b$ is mapped to the positive real line and $X_a$ to the negative real line. From a theoretical point of view, it is of interest to determine the extent of ``topological change'' that can be achieved by neural networks of a particular architecture. The topological expressivity of a neural network can therefore be measured by the complexity of the most complex topological spaces it can separate and is directly related to the complexity of binary classification problems.
 
In this paper we investigate the topological expressivity of ReLU neural networks, which are one of the most commonly used types of neural networks \citep{pmlr-v15-glorot11a,GoodBengCour16}. A \emph{$(L+1)$-layer neural network (NN)} is defined by $L+1$ affine transformations $T_\ell \colon \R^{n_{\ell-1}} \to \R^{n_\ell}$, $x \mapsto A_\ell x +b_\ell$ 
    for $A_\ell \in \R^{n_{\ell-1} \times n_\ell}, b_\ell \in \R^{n_\ell}$ and $ \ell =1,\ldots,L+1$.
    The tuple $(n_0,n_1,\ldots,n_{L},n_{L+1})$ is called the \emph{architecture}, $L+1$ the \emph{depth}, $n_\ell$ the \emph{width of the $\ell$-layer}, $\max\{n_1,\ldots,n_{L}\}$ the \emph{width} of the NN
   and $\sum_{\ell=1}^{L}n_\ell$ the \emph{size} of the NN.
   The entries of $A_\ell$ and $b_\ell$ for $\ell=1,...,L+1$ are called \emph{weights} of the NN and the vector space of all possible weights is called the \emph{parameter space} of an architecture. A {ReLU neural network} computes the function \[F = T_{L+1} \circ \sigma_{n_L} \circ T_{L} \circ \cdots \circ \sigma_{n_1} \circ T_1,\] where  $\sigma_n \colon \mathbb{R}^n \to \mathbb{R}^n$ is the \emph{ReLU function} given by \mbox{$\sigma_n(x)=(\max(0,x_1),\ldots,\max(0,x_n)).$}
   
    Note that the function $F$ is piecewise linear and continuous. In fact, it is known that any continuous piecewise linear function $F$ can be computed by a ReLU neural network~\citep{Arora2016UnderstandingDN}.  However, for a fixed architecture $A$, the class $\mathcal{F}_A$ of piecewise linear functions  that is representable by this architecture is not known \citep{hertrich2021towards,haase2023lower}.
    Conveniently, in the setting of a binary classification problem we are merely interested in the \emph{decision regions}, i.e., $F^{-1}((-\infty,0])$ and $F^{-1}((0,\infty))$ rather than the continuous piecewise linear function $F$ itself.
    
    {A common choice to measure the complexity of a topological space $X$ is the use of algebraic invariants. Homology groups are the essential algebraic structures with which topological data analysis analyzes data~\citep{tdabook} and hence Betti numbers as the ranks of these groups are a natural measure of topological expressivity. Intuitively, the $k$-th Betti number $\beta_k(X)$ corresponds to the number of $(k+1)$-dimensional holes in the space $X$ for $k>0$ and $\beta_0(X)$ corresponds to the number of path-connected components of $X$. Thus, one can argue that when a space (the support of one class of the data) has many connected components and higher dimensional holes, it is more difficult to separate this space from the rest of the ambient space, e.g., mapping it to the negative line. In Appendix \ref{top_background}, we present a brief introduction to homology groups. For an in-depth discussion of the aforementioned concepts, we refer to textbooks on algebraic topology (e.g.,~\cite{hatcher}).}

In order to properly separate $X_a$ and $X_b$, the sublevel set $F^{-1}((-\infty,0])$ of the function $F$ computed by the neural network should have the same topological complexity as $X_a$. 
\citet{Bianchini2014OnTC} measured the topological complexity of the decision region $F^{-1}((-\infty,0])$ with the sum of all its Betti numbers. This notion of topological expressivity does not differentiate between connected components and higher dimensional holes. 
On the other hand, if an architecture is not capable of expressing the Betti numbers of different dimensions of the underlying topological space of the data set, then for every $F \in \mathcal{F}_A$ there is a set of data points $U$ such that $F$ misclassifies every $x \in U$ \citep{Guss}. Therefore it is of fundamental interest to understand each Betti number of the decision regions, by which we define our notion of expressivity:
\begin{def1}
    The \emph{topological expressivity} of a ReLU neural network $F \colon \R^d \to \R$ is defined as the vector $\beta(F) = (\beta_k(F))_{k=0,\ldots,d-1}=(\beta_k(F^{-1}((-\infty,0]))_{k=0,\ldots,d-1}$.
\end{def1}

\subsection{Main Results}
Our main contribution consists of lower and upper bounds on the topological expressivity for ReLU NNs. These bounds demonstrate that  
    the growth of Betti numbers in neural networks depends on their depth. With an unbounded depth, Betti numbers in every dimension can exhibit exponential growth as the network size increases. However, in the case of a shallow neural network, where the depth remains constant, the Betti numbers of the sublevel set are polynomially bounded in size.
This implies that increasing the width of a network while keeping the depth constant prevents exponential growth in the Betti numbers.  Consequently, if a data set possesses exponentially high Betti numbers (parameterized by some parameter $p$), accurate modeling of the data set requires a deep neural network when the size of the neural network is constrained to be polynomial in parameter $p$ since the topological expressivity serves, as discussed above, as a bottleneck measure for effective data representation.

In Theorem~\ref{cor:exactformulaclosure}, the lower bounds for the topological expressivity are given by an explicit formula, from which we can derive the following asymptotic lower bounds:
    \begin{restatable}{cor}{asymptotic}
    \label{cor:main}
  Let $A=(d, n_1, \ldots, n_{L},1)$ with $n_{L}\geq 4d$ and $M=\prod_{\ell=1}^{L-1}2\cdot\left\lfloor\frac{n_\ell}{2d}\right\rfloor$, then there is a ReLU NN $F \colon \R^d \mapsto \R$ with architecture $A$ such that
  \begin{enumerate}[label=(\roman*)]
      \item $\beta_0(F)\in \Omega(M^d\cdot n_{L})$
      \item $\beta_k(F)\in \Omega(M^k\cdot n_{L})$ \ for \  $0<k < d$.

  \end{enumerate}
  In particular,  given $\textbf{v}=(v_1,\ldots, v_{d}) \in \N^{d-1}$, there is a ReLU NN $F$ of size $O\left(\log\left(\sum_{k=1}^{d-1}v_k\right)\right)$ such that
      $\beta_k(F) \geq v_{k+1}$ for all $k \in  \{0,\ldots, d-1\}$.
\end{restatable}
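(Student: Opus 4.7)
My plan is to deduce the corollary from the explicit formula of Theorem~\ref{cor:exactformulaclosure} by extracting its asymptotically dominant terms, and then to realize the desired Betti numbers via a careful choice of architecture with constant hidden-layer widths.

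For parts (i) and (ii), I would substitute the architecture $A=(d,n_1,\dots,n_L,1)$ with $n_L\geq 4d$ into Theorem~\ref{cor:exactformulaclosure}. The explicit formula expresses each $\beta_k(F)$ as a nonnegative sum whose terms are products involving the fold factors $2\lfloor n_\ell/(2d)\rfloor$ and the final width $n_L$. From the stated exponents, I expect the dominant contribution to $\beta_k$ for $0<k<d$ to be a single summand of order $M^k\cdot n_L$, and for $\beta_0$ to be a summand of order $M^d\cdot n_L$, with the higher exponent reflecting that new connected components can be generated by folds in all $d$ coordinate directions, while a nontrivial $k$-dimensional hole only requires $k$ such directions to coexist. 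Dropping all remaining nonnegative summands immediately yields the claimed $\Omega$-bounds.

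For the ``in particular'' statement, given $\mathbf{v}\in\N^{d-1}$, I would choose the constant-width architecture $n_1=\dots=n_L=4d$, which yields $M=4^{L-1}$ and $n_L=4d$, and satisfies the hypothesis $n_L\geq 4d$. The bounds from (i), (ii) then read $\beta_0\in\Omega(4^{d(L-1)})$ and $\beta_k\in\Omega(4^{k(L-1)})$ for $1\leq k<d$, after absorbing the $4d$ into the hidden constant. The binding requirement for $\beta_k(F)\geq v_{k+1}$ across all admissible $k$ is the one with the smallest exponent, namely $k=1$, demanding $L-1\geq C\log \max_k v_k$ for some constant $C$; every other constraint (including the $k=0$ constraint, whose exponent $d$ is strictly larger) is then automatically satisfied. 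Hence $L=O\bigl(\log\sum_{k=1}^{d-1}v_k\bigr)$ is enough, and the total size $\sum_\ell n_\ell=4dL$ also lies in $O(\log\sum_k v_k)$ with $d$ treated as fixed.

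The principal obstacle is the clean identification of the dominant term in Theorem~\ref{cor:exactformulaclosure}---in particular, recognizing and justifying the asymmetric exponent $d$ in the $\beta_0$-bound relative to the exponent $k$ for the higher Betti numbers. Once this term is isolated, the remainder of the argument is a routine one-variable optimization of $L$ under the logarithmic size budget, together with a monotone comparison of $\max_k v_k$ with $\sum_k v_k$.
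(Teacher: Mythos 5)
Your proposal is correct and follows essentially the same route as the paper: Corollary~\ref{cor:main} is obtained from Theorem~\ref{cor:exactformulaclosure} by the equal-split specialization of the cutting widths (each $w_k=\Theta(n_L/(d-1))$, which the paper states as the case $\lfloor w_1/2\rfloor=\ldots=\lfloor w_{d-1}/2\rfloor$), after which the single term of order $M^k\cdot n_L$ for $0<k<d$ and the dominant $k=d$ summand of order $M^d\cdot n_L$ in the $\beta_0$ formula give (i) and (ii), and the ``in particular'' part follows, as you argue, by taking constant widths $n_\ell=4d$ and depth $L=O\bigl(\log\sum_k v_k\bigr)$. The only thing to make explicit is that choice of the $w_k$, since the factor $n_L$ in every bound comes precisely from it.
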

  Corollary~\ref{cor:main} provides a proof for a conjecture on lower bounds for the zeroth Betti number of the decision region given in the paper of~\citet{Guss}; in fact, it generalizes the statement to arbitrary dimensions.
  Furthermore, we observe that $L=2$ hidden layers are already sufficient to increase the topological expressivity as much as we want at the expense of an increased width due to the above lower bound.

\begin{cor}
    Given $v\in \N^d$, there exists an NN $F \colon \R^d \to \R$ of depth 2 such that $ \beta_k(F) \geq v_{k+1}$ for all $k \in  \{0,\ldots, d-1\}$.
\end{cor}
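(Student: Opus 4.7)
My plan is to invoke Corollary~\ref{cor:main} in the boundary case $L=1$, which corresponds exactly to a network of depth $L+1=2$ and architecture $(d,n_1,1)$. The key observation I will use is that for $L=1$ the product $M=\prod_{\ell=1}^{L-1}2\lfloor n_\ell/(2d)\rfloor$ runs over an empty index set and is therefore equal to $1$, which collapses parts~(i) and~(ii) of Corollary~\ref{cor:main} into the single conclusion
\[
\beta_k(F)\in\Omega(n_1)\quad\text{for every }k\in\{0,\ldots,d-1\},
\]
with hidden constants depending only on $d$.

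Once this is established, I will choose the width $n_1$ of the unique hidden layer large enough: at least $4d$, to satisfy the hypothesis $n_L\geq 4d$ of Corollary~\ref{cor:main}, and at least a suitable constant multiple of $\max_k v_{k+1}$, so that the linear lower bound on each $\beta_k(F)$ dominates the prescribed threshold $v_{k+1}$ simultaneously for every $k$. Invoking the construction of Corollary~\ref{cor:main} on the architecture $(d,n_1,1)$ for this choice of $n_1$ then yields the desired depth-$2$ ReLU NN.

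There is essentially no technical obstacle here: the entire content of the corollary is the observation that the lower-bound construction underpinning Corollary~\ref{cor:main} specializes gracefully to the case of a single hidden layer, producing arbitrarily high Betti numbers in every dimension. The price paid, compared with the deeper construction in the second half of Corollary~\ref{cor:main}, is that the hidden-layer width must grow linearly rather than logarithmically in $\max_k v_{k+1}$, which is of no concern for the qualitative statement claimed here.
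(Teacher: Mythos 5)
There is a genuine gap, and it lies exactly in the step where you instantiate Corollary~\ref{cor:main} at $L=1$ and ``collapse'' the bounds by setting $M=1$. The asymptotic statement $\beta_k(F)\in\Omega(M^k\cdot n_L)$ is shorthand for the exact formula of Theorem~\ref{cor:exactformulaclosure}, and that formula reads $\beta_k(F)=\frac{M^{k-1}}{2^{k-1}}\bigl(\frac{M}{2}-1\bigr)\bigl\lceil\frac{w_{k-1}}{2}\bigr\rceil$ for $0<k<d$. The factor $\frac{M}{2}-1$ counts interior cutting points; it is $0$ for $M=2$ and meaningless (negative) for $M=1$, because with no folding layer the only cutting point lies on the boundary of the cube and the construction produces only disks, hence nothing beyond $\beta_0$. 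In other words, the construction behind Corollary~\ref{cor:main} requires at least one folding layer with $M\geq 4$, i.e.\ at least two hidden layers with $n_1\geq 4d$; the hidden constant in the $\Omega$-notation is not ``depending only on $d$'' but vanishes in your boundary case, so the claimed conclusion $\beta_k(F)\in\Omega(n_1)$ for all $0<k<d$ with architecture $(d,n_1,1)$ is not supported by anything in the paper (and for intermediate $k$ it is not even clear it is true for genuinely one-hidden-layer networks; note that such a network restricted to the relevant cutting gadget is a sum of ridge functions, and the paper proves nothing about its higher Betti numbers).

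The paper's intended argument, signalled by the sentence preceding the corollary, is the opposite bookkeeping: ``depth 2'' is to be read as two hidden layers, i.e.\ architecture $(d,n_1,n_2,1)$ with $L=2$ in Theorem~\ref{cor:exactformulaclosure}. There one folding layer of width $n_1$ makes $M=2\lfloor n_1/(2d)\rfloor$ as large as desired, and the cutting layer of width $n_2$ makes the $w_k$ as large as desired; choosing, say, $M\geq\max\{4,2v_1\}$ and $w_{k-1}\geq 2v_{k+1}$ gives $\beta_k(F)\geq\lceil w_{k-1}/2\rceil\geq v_{k+1}$ for $0<k<d$ and $\beta_0(F)\geq\frac{M}{2}+1\geq v_1$. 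So the statement is obtained by fixing the number of hidden layers at two and letting both widths grow, not by specializing Corollary~\ref{cor:main} to a single hidden layer. To repair your proof, replace the $L=1$ instantiation by this $L=2$ instantiation (and note the hypothesis $n_1\geq 4d$ needed so that $M\geq 4$).
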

We obtain the lower bound by making choices for the weights of the NN,  nevertheless, we show that our construction is robust with respect to small perturbations. In fact, in Proposition~\ref{prop:robust_to_pertubation} we prove that we actually have an open set in the parameter space in which the respective functions all have the same topological expressivity.

Using an upper bound on the number of linear regions \citep{Bounding_Serra}, we obtain an explicit formula for an upper bound on $\beta_k(F)$ for an arbitrary architecture in Proposition~\ref{prop:upperbound}. This gives rise to the following asymptotic upper bounds:

\begin{restatable}{cor}{corupperbound}
\label{cor:main2}
     Let $F \colon \R^d \to \R$ be a neural network of architecture $(d,n_1,\ldots,n_L,1)$. Then it holds that
    $\beta_k(F) \in O\left(\left(\prod_{i=1}^Ln_i\right)^{d^2}\right)$ for $k \in [d-2]$ and $\beta_0,\beta_{d-1} \in O\left(\left(\prod_{i=1}^Ln_i\right)^{d}\right)$.
\end{restatable}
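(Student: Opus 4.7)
The plan is to derive Corollary~\ref{cor:main2} as a direct asymptotic consequence of Proposition~\ref{prop:upperbound} together with the bound on the number of linear regions from~\cite{Bounding_Serra}. Let $R$ denote the number of linear regions of $F$. The Serra et al.\ bound gives $R\le\prod_{\ell=1}^{L}\binom{n_\ell}{d}$, which, treating $d$ as a constant, is $O\bigl((\prod_{\ell=1}^{L}n_\ell)^{d}\bigr)$. This is the only non-trivial ingredient that needs to be combined with Proposition~\ref{prop:upperbound}, so the corollary itself should ultimately be a short exponent-counting argument.

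For the extreme Betti numbers $\beta_0$ and $\beta_{d-1}$, I would use a linear-in-$R$ estimate. Within each linear region the decision region is the intersection of a convex polytope with the closed halfspace $\{F\le 0\}$, and is therefore convex, in particular connected. Consequently $F^{-1}((-\infty,0])$ has at most $R$ connected components, so $\beta_0(F)\le R$. Applying the same reasoning to the complementary superlevel set $F^{-1}((0,\infty))$, and using that $\beta_{d-1}$ of a subset of $\R^d$ is controlled (via Alexander-type duality) by the number of bounded components of its complement, gives $\beta_{d-1}(F)\le R$ as well. Substituting the linear-region bound yields $\beta_0(F),\beta_{d-1}(F)\in O\bigl((\prod_\ell n_\ell)^{d}\bigr)$.

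For the intermediate Betti numbers $\beta_k$ with $k\in[d-2]$, Proposition~\ref{prop:upperbound} provides a bound that is polynomial of degree $d$ in $R$. The underlying reason is that $F^{-1}((-\infty,0])$ is a union of at most $R$ convex polytopes; since any finite intersection of convex sets is convex and hence contractible, the nerve theorem makes this union homotopy equivalent to a simplicial complex on $R$ vertices whose face count is restricted by Helly-type considerations in $\R^d$, producing a bound of shape $R^d$. Plugging in $R\in O\bigl((\prod_\ell n_\ell)^{d}\bigr)$ then yields $\beta_k(F)\in O\bigl((\prod_\ell n_\ell)^{d^2}\bigr)$, which is the second claim.

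The corollary's proof is therefore a brief asymptotic bookkeeping step once Proposition~\ref{prop:upperbound} is granted. The hard part, which lives in the proposition and not in the corollary, is the polynomial-in-$R$ bound on intermediate $\beta_k$: making the nerve-theoretic reduction rigorous for a cover by possibly unbounded polyhedra, and extracting a Helly-controlled bound on the face numbers of the resulting nerve. If I had to reprove everything from scratch, this would be where the effort is concentrated; everything else reduces to invoking \cite{Bounding_Serra} and performing the substitution above.
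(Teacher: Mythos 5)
Your overall plan coincides with the paper's: the corollary is pure asymptotic bookkeeping on top of Proposition~\ref{prop:upperbound}, using that the Serra et al.\ count $r$ of linear regions is $O\bigl((\prod_{\ell=1}^L n_\ell)^d\bigr)$ for fixed $d$ (and depth), so that $\beta_0\le r$ and $\beta_k\le\binom{r}{d-k-s}\le r^{d-k-s}\le r^{d}$ give the stated exponents. Two remarks on where you deviate. First, for $\beta_{d-1}$ you abandon the proposition and argue via Alexander duality that $\beta_{d-1}$ is bounded by the number of bounded components of $F^{-1}((0,\infty))$, which is at most $r$ by convexity of the pieces. This is an unnecessary detour: Proposition~\ref{prop:upperbound} with $k=d-1$ already yields $\beta_{d-1}(F)\le\binom{r}{1-s}\le r$, which is exactly the linear-in-$r$ bound you want. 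Moreover, as sketched your duality step is not fully justified, since $F^{-1}((-\infty,0])$ is closed but in general unbounded, and Alexander duality applies to compact (or one-point compactified) subsets of $S^d$; making ``$\beta_{d-1}$ is controlled by bounded complementary components'' precise for unbounded closed polyhedral sets requires an extra argument that you do not supply. Since the conclusion follows directly from the proposition, this is a soft spot in an optional branch rather than a fatal gap. Second, your form of the region bound, $R\le\prod_\ell\binom{n_\ell}{d}$, is not the literal bound of \citet{Bounding_Serra} (already for $L=1$ the correct bound $\sum_{j\le d}\binom{n_1}{j}$ exceeds $\binom{n_1}{d}$); this is harmless asymptotically for fixed $d$ and $L$, which is the regime both you and the corollary work in. Finally, your nerve-theorem/Helly narrative for why intermediate Betti numbers are polynomial in $r$ describes a genuinely different proof strategy for the \emph{proposition} than the paper's (which counts faces of the canonical polyhedral complex and uses cellular homology via Lemma~\ref{lemma:boundingbettinumberbyfaces}); since the corollary is entitled to cite the proposition as a black box, this does not affect the correctness of your derivation of Corollary~\ref{cor:main2}, but it should not be presented as what the proposition actually proves.
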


By combining Corollary~\ref{cor:main} and Corollary~\ref{cor:main2}, we can conclude that there is an exponential gap in the topological expressivity between shallow and deep neural networks. This aligns with other popular measures of expressivity, such as the number of linear regions, where similar exponential gaps are known \citep{Bounding_Serra,LinearRegions_Montufar,Notes_Montufar}.

\subsection{Related Work}
\subsubsection{Topology and Neural Networks}
Vast streams of research studying neural networks by means of topology using empirical methods \citep{petri2020on,Guss,Topology_of_NN,NEURIPS2020_5f146156}  as well as from a theoretical perspective  \citep{manifold_embedding,Melodia_2021,benthyperplane,Bianchini2014OnTC,grigsby2022local,topframework} have emerged recently. 
\citet{Bianchini2014OnTC} were the first that used Betti numbers as a complexity measure for decision regions of neural networks. 
Their work studies NNs with sigmoidal activation functions and shows that there is an exponential gap with respect to the sum of Betti numbers between deep neural networks and neural networks with one hidden layer. However, there are no insights about distinct Betti numbers. In \citet{Guss}, the decision regions of ReLU neural networks ares studied with empirical methods and an exponential gap for the zeroth Betti number is conjectured. Our results prove the conjecture and extend the results of \citet{Bianchini2014OnTC} for the ReLU case (see Section~\ref{Upperbound} and Appendix). 
Furthermore, topological characteristics such as connectivity or boundedness of the decision regions are also investigated in \citep{empirical_study,benthyperplane,grigsby2022local,wide_enough}.
\newline
 
\subsubsection{Expressivity of (ReLU) Neural Networks}
In addition to the universal approximation theorems \citep{Cybenko1989ApproximationBS,Hornik1991ApproximationCO}, there is a significant amount of research on the expressivity of neural networks, e.g., indicating that deep neural networks can be exponentially smaller in size than shallow ones. For ReLU neural networks, the number of linear regions is often used as a measure of complexity for the continuous piecewise linear (CPWL) function computed by the network. It is well established that deep ReLU neural networks can compute CPWL functions with exponentially more linear regions than shallow ones, based on various results such as lower and upper bounds on the number of linear regions for a given architecture \citep{Notes_Montufar,Bounding_Serra,LinearRegions_Montufar,Arora2016UnderstandingDN}. We partially use techniques from aforementioned works to establish our bounds on topological expressivity, which offers the advantage of being directly related to the complexity of binary classification problems.

\subsection{Notation and Definitions}
A function $F \colon \R^d \to \R^d$ is continuous piecewise linear (CPWL) if there is a polyhedral complex covering $\R^d$, such that $F$ is affine linear over each polyhedron of this complex. A linear region of $f$ is a maximal connected convex subspace $R$ such that $f$ is affine linear on $R$, i.e., a full-dimensional polyhedron of the complex.\footnote{In the literature there exists also a slightly different definition of a linear region leaving out the necessity of the region being convex, but the bounds we use are all applicable to this definition of a linear region.} 

For a survey on polyhedral theory in deep learning see \citet{huchette2023deep}, and for a general introduction to polyhedra we refer to \citet{Schrijver1986TheoryOL}.

    We denote by $[n]$ the set $\{1,\ldots,n\}$ and by $[n]_0$ the set $\{0,\ldots,n\}$.  We denote by $\pi_j \colon \R^d \to \R$ the projection onto the $j$-th component of $\R^d$ and by $p_j \colon \R^d \to \R^j$ the projection onto the first $j$ components.
    
    A crucial part of our construction is decomposing a unit cube into a varying number of small cubes. Thereby, given $\mathbf{m}=(m_1,\ldots, m_L) \in \N^L$ and $M = \left(\prod_{\ell=1}^Lm_\ell\right)$, the set $W^{(L,\mathbf{m},d)}_{i_1,\ldots,i_d}$ is defined as the cube of volume $\frac{1}{M^{d}}$ with ``upper right point'' $\frac{1}{M} \cdot (i_1,\ldots,i_d)$, i.e., the cube \mbox{$\prod_{k=1}^{d} [\frac{(i_k-1)}{M}, \frac{i_k}{M}] \subset [0,1]^d$}. The indices $(L,\mathbf{m},d)$ are omitted whenever they are clear from the context.

    {We denote by $D^k=\{x\in \R^k\colon \|x\|< 1\}$ the $k$-dimensional standard (open) disk and by \mbox{$S^k=\{x\in \R^{k+1}\colon \|x\|=1\}$} the $k$-dimensional standard sphere. We consider these sets as ``independent'' topological spaces. Therefore, it is justified to abstain from picking a specific norm, since all norms on $\R^k$ are equivalent.
    
    For $k,m\in \N$ with $m\leq k$, the \emph{($j$-dimensional open) $k$-annulus} is the product space $S^k\times D^{j-k}$.}
Note that since $S^k$ has one connected component and a $(k+1)$-dimensional hole, it holds that $\beta_0(S^k)=\beta_{k}(S^k)=1$ and the remaining Betti numbers equal zero.
The $j$-dimensional \mbox{$k$-annulus} is a $j$-dimensional manifold that can be thought as a thickened $k$-sphere and hence its Betti numbers coincide with the ones from the $k$-sphere. In Appendix \ref{top_background} the reader can find a more formal treatment of the latter fact.

    In contrast to $D^k$ and $S^k$, which are only seen as spaces equipped with a topology, we also consider neighborhoods around certain points $x\in \R^d$ as subsets of $\R^d$. To make a clear distinction, we define the space $B_r^d(x)$ as the \emph{$d$-dimensional open $r$-ball around $x$ with respect to the $1$-norm}, i.e., the space $\{y\in \R^d\colon \|x-y\|_1<r \}$. {Note that for $r<r'$, the set $B_r^k(x)\setminus \overline{B_{r'}^k(x)}$ is homeomorphic to a $k$-dimensional $(k-1)$-annulus and we will refer to them as $(k-1)$-annuli as well. These annuli will be the building blocks of our construction for the lower bound.}

{The rest of the paper is devoted to proving the lower and upper bounds. Most of the statements come with an explanation or an illustration. In addition, formal proofs for these statements are also provided in the appendix.}

\section{Lower Bound}
\label{lowerbound}

In this section, our aim is to construct a family of neural networks $F \colon \R^d \to \R$ of depth $L+2$ for $L\in \N$ such that $\beta_k(F)$ grows exponentially in the size of the neural network for all $k \in [d-1]_0$. 

We propose a construction that is restricted to architectures where the widths $n_1,\ldots, n_{L+1}$ of all hidden layers but the last one are divisible by $2d$. This construction, however, is generalized for any architecture where the dimension of all hidden layers is at least $2d$ by inserting at most $2d$ auxiliary neurons at each layer at which a zero map is computed. Correspondingly, one obtains a lower bound by rounding down the width $n_\ell$ at each layer to the largest possible multiple of $2d$.   
    In particular, a reduction to the case in Theorem~\ref{theorem:main} in the appendix does not have an effect on the asymptotic size of the NN.

The key idea is to construct $F=f \circ h$ as a consecutive execution of two neural networks $f$ and $h$, where the map $h \colon \R^d \to \R^d$ is an ReLU NN with $L$ hidden layers that identifies exponentially many regions with each other. More precisely, $h$ cuts the unit cube of $\R^d$ into exponentially many small cubes $W_{i_1,\ldots,i_d} \in [0,1]^d$ and maps each of these cubes to the whole unit cube by scaling and mirroring.   
The one hidden layer ReLU NN $f$ then cuts the unit cube into pieces so that $f$ on the pieces alternatingly takes exclusively positive respectively negative values (cf.~Figures~\ref{fig:fig3} and~\ref{fig:fig4}). Since $h$ maps all $W_{i_1,\ldots,i_d}$ to $[0,1]^d$ by scaling and mirroring, every $W_{i_1,\ldots,i_d}$ is cut into positive-valued and negative-valued regions by the composition $f \circ h$ in the same way as $[0,1]^d$ is mapped by $f$, up to mirroring. The cutting of the unit cube and the mirroring of the small cubes in the map to $[0,1]^d$ are chosen in such a way that the subspaces on which $F$ takes negative values assemble at the corners (each corner belongs to $2^d$ small cubes by the cutting by $h$) of the cubes so that they form $k$-annuli for every $k\in [d-1]$. Since $h$ cuts the unit cube into exponentially many small cubes, we obtain exponentially many $k$-annuli for every $k\in [d-1]$ in $F^{-1}((-\infty,0))$~(cf.~Figures\ref{fig:fig5} and~\ref{fig:fig6}). Some technical adjustments will then yield the result for 
$F^{-1}((-\infty,0])$.

The idea of constructing a ReLU neural network that folds the input space goes back to \citet{LinearRegions_Montufar}, where the construction was used to show that a deep neural network with ReLU activation function can have exponentially many linear regions. For our purposes, we explicitly state the continuous piecewise linear map that arises from the construction instead of proving only the existence of such a neural network. 
Using their techniques, we first build a 1-hidden layer NN $h^{(1,m,d)} \colon \R^d \to \R^d$ for $m \in \mathbb{N}$ even that folds the input space, mapping $m^d$ many small cubes $W^{(1,m,d)}_{i_1,\ldots,i_d} \subset [0,1]^d$ by scaling and mirroring to $[0,1]^d$. More precisely, the NN $h^{(1,m,d)}$ has $m \cdot d$ many neurons in the single hidden layer which are partitioned into $m$ groups.
The weights are chosen such that the output of the neurons in one group  depends only on one input variable and divides the interval $[0,1]$ into $m$ subintervals of equal length, each of which is then mapped to the unit interval $[0,1]$ by the output neuron. Figure \ref{fig:construction_folding} illustrates this construction. In Appendix~\ref{foldingnetwork} or in 
\citet{LinearRegions_Montufar}, the reader can find an explicit construction of $h^{(1,m,d)}$.
\begin{figure}
        \begin{minipage}[valign=b]{.45\textwidth}
   	         \begin{center}
		\begin{tikzpicture}[scale=1.8]
				\filldraw [black] (0,1) circle (1pt)node[anchor=east]{$1$};
					\filldraw [black] (1,0) circle (1pt)node[anchor=north]{$1$};
					\filldraw [black] (0.5,0) circle (1pt)node[anchor=north]{$0.5$};
			\draw[connection] (-1,0) -- (2,0);
			\draw[connection] (0,-0.2) -- (0,1.5);
			\draw[line width=2pt] (0,0) -- (0.5,1);
   \draw[line width=2pt] (0.5,1) -- (1,0);

		\end{tikzpicture}\vspace{0.5em}
	\end{center}
 \caption{The graph of the function \mbox{$\pi_j \circ h^{(1,2,d)}$} that folds the unit interval, i.e., mapping the interval $[0,0.5]$ and $[0.5,1]$ to the unit interval. This function is realised by a hidden layer with $2$ hidden neurons.}
     \label{fig:construction_folding}
	\end{minipage}%
   \hspace{.1\linewidth}
    \begin{minipage}[valign=b]{.45\textwidth}
    \centering

	     \begin{tikzpicture}[]
		\footnotesize
		\node[neuron] (x1) at (0,10ex) {$x_1$};
		\node[neuron] (x2) at (0,0) {$x_{d}$};
		\node[neuron] (n11) at (11ex,15ex) {\ReLU};
		\node[neuron] (n12) at (11ex,10ex) {\ReLU};
		\node[neuron] (n13) at (11ex,0ex) {\ReLU};
		\node[neuron] (n15) at (11ex,-5ex) {\ReLU};
		\draw[connection] (x1) -- (n11);
		\draw[connection] (x1) -- (n12);
	    \node (ef) at (11ex,6ex) {$\vdots$};
		\draw[connection] (x2) -- (n13);
		\draw[connection] (x2) -- (n15);
		\node[neuron] (n21) at (22ex,10ex){$y_1$};
		\node[neuron] (n22) at (22ex,0){$y_{d}$};
		\draw[connection] (n11) -- (n21);
		\draw[connection] (n12) -- (n21);
		\draw[connection] (n13) -- (n22);
		\draw[connection] (n15) -- (n22);
	\end{tikzpicture}
	     \caption{The architecture of the one hidden layer neural network $h^{(1,2,d)}$ that folds the $d$-dimensional unit cube by folding every component of the cube as described in Figure 1.}
	     \label{fig:1hiddenlayer_folding}
    \end{minipage}
\end{figure}

The map $h^{(1,m,d)}$ identifies only $O(m^d)$ many cubes with each other. To subdivide the input space into exponentially many cubes and map them to the unit cube, we need a deep neural network. For this purpose, we utilize a vector $\mathbf{m}$ of folding factors instead of a single number $m$. Let \mbox{$\mathbf{m}=(m_1,\ldots,m_L) \in \N^L$} with $m_\ell$ even for all $\ell\in[L]$ and define the neural network $h^{(L,\mathbf{m},d)}$ with $L$ hidden layers as $h^{(L,\mathbf{m},d)} = h^{(1,m_L,d)} \circ \cdots \circ h^{(1,m_1,d)}$. Since each of the $m_1^d$ cubes that results from the subdivision by the first layer is mapped back to $[0,1]^d$, each cube is subdivided again into $m_2^d$ cubes by the subsequent layer. Thus, after $L$ such layers, we obtain a subdivision of the input space into $\left(\prod_{\ell=1}^Lm_\ell\right)^d$ cubes.

In the following, we define fixed but arbitrary variables: $L \in \mathbb{N}$, \mbox{$\mathbf{m}=(m_1,\ldots,m_L) \in \N^L$} and $M = \left(\prod_{\ell=1}^Lm_\ell\right)$ with $m_\ell >1$ even for all $\ell \in [L]$. The following lemma recollects the existence of a map $h^{(L,\mathbf{m},d)}$ that actually enjoys the aforementioned properties.

\begin{restatable}{lemma}{babycubes} (cf. \citep{LinearRegions_Montufar})
\label{lemma:babycubes}
    Let $d \in \mathbb{N}$. Then there exists a map $h^{(L,\mathbf{m},d)}\colon \R^d\to\R^d$ such that for all $(i_1,\ldots,i_{d}) \in [M]^{d}$, the following hold: \begin{enumerate}
	\item $h^{(L,\mathbf{m},d)}(W^{(L,\mathbf{m},d)}_{(i_1,\ldots,i_{d})})= [0,1]^{d}$
    \item $\pi_j \circ h^{(L,\mathbf{m},d)} _{|W^{(L,\mathbf{m},d)}_{(i_1,\ldots,i_{d})}}(x_1,\ldots,x_{d})= \left\{
	\begin{array}{ll}
		M\cdot x_j -(i_j-1) & i_j \text{ odd} \\
		-M\cdot x_j + i_j  & \, i_j\text{ even} \\
	\end{array} \right.$ \end{enumerate} 
\end{restatable}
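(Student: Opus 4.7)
My plan is to prove the lemma by induction on $L$, constructing $h^{(L,\mathbf{m},d)}$ as the composition $h^{(1,m_L,d)}\circ\cdots\circ h^{(1,m_1,d)}$ of one-hidden-layer folding networks, each of which applies a coordinate-wise tent map. For the base case $L=1$, I define $h^{(1,m,d)}$ coordinate-wise by $\pi_j\circ h^{(1,m,d)}(x)=T_m(x_j)$, where $T_m\colon\R\to\R$ is the continuous piecewise linear tent with $m$ teeth, i.e., the function whose restriction to $[(k-1)/m,\,k/m]$ is $mx-(k-1)$ for $k$ odd and $-mx+k$ for $k$ even. One readily checks that $T_m(x)=m\,\sigma(x)+\sum_{k=1}^{m-1}(-1)^k\,2m\,\sigma(x-k/m)$ realises this function, using $m$ hidden neurons per coordinate and $md$ in total, and properties~(1) and~(2) for $L=1$ follow directly from this piecewise description of $T_m$.

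For the inductive step, set $h^{(L,\mathbf{m},d)}\coloneqq h^{(1,m_L,d)}\circ h^{(L-1,\mathbf{m}',d)}$ with $\mathbf{m}'\coloneqq(m_1,\ldots,m_{L-1})$ and $M'\coloneqq\prod_{\ell<L}m_\ell$. Given any fine index $(i_1',\ldots,i_d')\in[M]^d$, write $i_j\coloneqq\lceil i_j'/m_L\rceil$ and $\ell_j\coloneqq i_j'-(i_j-1)m_L\in[m_L]$, so that $W^{(L,\mathbf{m},d)}_{(i_1',\ldots,i_d')}$ sits inside the coarse cube $W^{(L-1,\mathbf{m}',d)}_{(i_1,\ldots,i_d)}$. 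By the induction hypothesis, $h^{(L-1,\mathbf{m}',d)}$ maps the coarse cube bijectively onto $[0,1]^d$ via the linear formula in~(2), and hence sends the fine subcube onto $W^{(1,m_L,d)}_{(k_1,\ldots,k_d)}$, where $k_j=\ell_j$ if $i_j$ is odd and $k_j=m_L-\ell_j+1$ if $i_j$ is even. Applying $h^{(1,m_L,d)}$ then maps this onto $[0,1]^d$, which proves~(1); property~(2) is verified by composing the two affine formulas in each of the four parity cases for $(i_j,\ell_j)$.

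I expect the main obstacle to be the parity bookkeeping in this last step: one has to verify that the composed linear piece carries exactly the sign and constant term dictated by the parity of the global index $i_j'$. The key observation that makes this go through is the assumption that every $m_\ell$ is even, which forces $i_j'=(i_j-1)m_L+\ell_j\equiv\ell_j\pmod 2$ regardless of the parity of $i_j$, so that the ``increasing'' branch of $h^{(1,m_L,d)}$ is selected precisely when $i_j'$ is odd. A short case calculation then produces the correct expression; for instance, when $i_j$ and $\ell_j$ are both odd one obtains $m_L(M'x_j-(i_j-1))-(\ell_j-1)=Mx_j-((i_j-1)m_L+\ell_j-1)=Mx_j-(i_j'-1)$, which matches~(2) for $i_j'$ odd, and the three remaining parity combinations unwind analogously. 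Finally, the composition of $L$ one-hidden-layer ReLU networks is itself a ReLU network with $L$ hidden layers after merging consecutive affine maps, so the resulting $h^{(L,\mathbf{m},d)}$ has the claimed architecture.
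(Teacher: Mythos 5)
Your proof is correct and follows essentially the same route as the paper: an explicit one-hidden-layer tent map $h^{(1,m,d)}$ realised by $m$ ReLU neurons per coordinate as the base case, followed by induction on $L$ with a parity case analysis whose key ingredient is that every $m_\ell$ is even (so $i_j'\equiv \ell_j \pmod 2$). The only cosmetic difference is that you peel off the outermost fold, writing $h^{(L,\mathbf{m},d)}=h^{(1,m_L,d)}\circ h^{(L-1,\mathbf{m}',d)}$ and tracking the intermediate cube index $k_j$, whereas the paper's appendix peels off the inner map and tracks the indices $i_j^{(1)}$ and $i_j^{(L-1)}$ via floor/mod arithmetic; your bookkeeping (including the explicitly verified parity case, with the remaining three unwinding analogously) is sound.
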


We now define cutting points as the points that are mapped to the point $(1,1,1,\ldots.,1,0)$ by the map $h^{(L,\mathbf{m},d)}$. They will play a central role in counting the annuli in the sublevel set of $F$. 
\begin{def1}
  We call a point $x\in [0,1]^{d}$ a \emph{cutting point} if it has coordinates of the form $x_i=\frac{x'_i}{M}$ for all $i\in \{1,\ldots, d\}$, where the $x'_i$ are odd integers for $1\leq i \leq d-1$ and $x'_d$ is an even integer.
\end{def1} 

\par Next, for $w \geq 2$, we build a 1-hidden layer neural network $\hat{g}^{(w,d)} \colon \R^d \to \R$ that cuts the d-dimensional unit cube into $w$ pieces such that $\hat{g}^{(w,d)}$ maps the pieces alternatingly to $\R_{\geq0}$ and $\R_{\leq0}$, respectively. We omit the indices $w$ and $d$ whenever they are clear from the context. 

In order to build the neural network, we fix $w$ and $d$ and define the maps $\hat{g}_q \colon \R^{d} \to \R$, \mbox{$\ q =0,\ldots,w+1$} by \[\hat{g}_q(x) =   \left\{
\begin{array}{ll}
	\max\{0,\1x \} & q=0 \\
	\max\{0,\1x-\frac{1}{4}\} & q=w+1 \\
	\max\{0,2(\1x-(2q-1)/8w)\} & \, \textrm{else} \\
\end{array} \right.\]

\begin{figure}

\begin{minipage}[t]{.4\linewidth}
\centering
\begin{tikzpicture}[scale=0.4] 
\draw[step = 2.0,gray,very thin]
(0,0) grid (8,8);
\fill[darkgray] (0,0) -- (0,2) -- (2,0) -- cycle;
\fill[lightgray] (0,2) -- (0,4) -- (4,0) -- (2,0) -- cycle;
\fill[darkgray] (0,4) -- (0,6) -- (6,0) -- (4,0) -- cycle;
\fill[lightgray] (0,6) -- (0,8) -- (8,0) -- (6,0) -- cycle;

\draw[name path=y-axis,->] (0,0) -- (0,9) node[above]{$x_2$};
\draw[->, name path=x-axis] (0,0) -- (9,0) node[right]{$x_1$};

\draw (8,0) -- (8,8);
\draw (8,8) -- (0,8);

\draw[black] (0,2) -- (2,0);
\draw[black] (0,4) -- (4,0);
\draw[black] (0,6) -- (6,0);
\draw[black] (0,8) -- (8,0);
\draw [dashed](2,-2) -- (-2,2) node[left] {$g_0$};
\draw[dashed] (3,-2) -- (-2,3)  node[left] {$g_1$};
\draw[dashed] (5,-2) -- (-2,5) node[left] {$g_2$};
\draw[dashed] (7,-2) -- (-2,7)  node[left] {$g_3$};
\draw[dashed] (9,-2) -- (-2,9) node[left] {$g_4$};
\draw[dashed] (10,-2) -- (-2,10) node[left] {$g_5$};
\filldraw [black] (0,8) circle (2pt)node[anchor=east]{$\frac{1}{4}$};
\filldraw [black] (8,0) circle (2pt)node[anchor=north]{$\frac{1}{4}$};
\end{tikzpicture}
\caption{Illustration of the preimage of the map $\hat{g}^{(4,2)}$ in $[0,\frac{1}{4}]^2$, where the dark gray regions correspond to $\hat{g}^{-1}((0,\infty))$ and the light gray regions to $\hat{g}^{-1}((-\infty,0))$.}
\label{fig:fig3}
\end{minipage}
   \hspace{.1\linewidth}%
\begin{minipage}[t]{.4\linewidth}
\centering
\begin{tikzpicture}[scale=0.4] 
\draw[step = 2.0,gray,very thin]
(0,0) grid (8,8);
\fill[darkgray] (8,0) -- (8,0.5) -- (7.5,0) -- cycle;
\fill[lightgray] (8,0.5) -- (8,2) -- (7,0) -- (7.5,0) -- cycle;
\fill[darkgray] (8,1) -- (8,1.5) -- (6.5,0) -- (7,0) -- cycle;
\fill[lightgray] (8,1.5) -- (8,2) -- (6,0) -- (6.5,0) -- cycle;

\draw[name path=y-axis,->] (0,0) -- (0,9) node[above]{$x_2$};
\draw[->, name path=x-axis] (0,0) -- (9,0) node[right]{$x_1$};

\draw (8,0) -- (8,8);
\draw (8,8) -- (0,8);

\draw[black] (8,0.5) -- (7.5,0);
\draw[black] (8,1) -- (7,0);
\draw[black] (8,1.5) -- (6.5,0);
\draw[black] (8,2) -- (6,0);

\filldraw [black] (0,8) circle (2pt)node[anchor=east]{$1$};
\filldraw [black] (8,0) circle (2pt)node[anchor=north]{$1$};
\end{tikzpicture}
\caption{Illustration of the preimage of the map $g^{(4,2)}$ in $[0,1]^2$.}
\label{fig:fig4}
\end{minipage}
\end{figure}

Later in this section, we will iteratively construct $k$-annuli in the sublevel set of $F$ for all \mbox{$k \in [d-1]$}. In order to ensure that these annuli are disjoint, it is convenient to place them around the cutting points. To achieve this, we mirror the map $\hat{g}$ before precomposing it with $h$. The mirroring transformation that maps the origin to the point $(1,\ldots, 1,0)$ is an affine map $t:[0,1]^d\rightarrow[0,1]^d$ defined by $t(x_1,x_2,\ldots,x_d)=(1-x_1,1-x_2,\ldots,1-x_{d-1},x_d)$. We define the neural network $g = \hat{g} \circ t$ as the consecutive execution of $\hat{g}$ and $t$.
\begin{restatable}{lemma}{mirroredcuts}
\label{lemma:mirrored_cuts}
    Let $d,w \in \N$ with $w$ odd and  \[R_q =\{x \in [0,1]^{d}: \frac{q}{4w} < \|(1,1,\ldots,1,0)-x\|_1 < \frac{q+1}{4w}\}.\]
     Then there exists a 1-hidden layer neural network $g^{(w,d)} \colon \R^d \to \R$ of width $w+2$ such that $g^{(w,d)}(R_q) \subseteq (-\infty,0)$ for all odd $  \in [w-1]_0, g^{(w,d)}(R_q) \subseteq  (0,\infty)$ for all even $q \in [w-1]_0$ and $g^{(w,d)}(x)=0$ for all $x \in [0,1]^{d}$ with $\|(1,1,\ldots,1,0)-x\|_1\geq\frac{1}{4}.$ 
\end{restatable}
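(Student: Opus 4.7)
The plan is to define $\hat g$ explicitly as an alternating combination of the $\hat g_q$'s and then verify by direct computation that $g=\hat g\circ t$ has the required sign pattern on the annuli $R_q$. Since each $\hat g_q$ depends on $x$ only through $\mathbf{1}^T x$, the function $\hat g$ factors as $\hat g(x)=\tilde g(\mathbf{1}^T x)$ for a univariate continuous piecewise linear $\tilde g$, which reduces the entire analysis to tracking one variable.

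Concretely, I would set
\[
\hat g \;=\; \hat g_0 \;+\; \sum_{q=1}^{w}(-1)^{q}\,\hat g_q \;+\; \hat g_{w+1}.
\]
This is a single hidden layer of $w+2$ ReLU units, and composing with the affine map $t$ keeps the width $w+2$ (the mirror is absorbed into the input weights), so the architecture claim is immediate. For the sign pattern I would compute the slope of $\tilde g$ on each piece defined by the breakpoints $0,\tfrac{1}{8w},\tfrac{3}{8w},\dots,\tfrac{2w-1}{8w},\tfrac14$. By induction on the interval index one checks that the slope on $\bigl(\tfrac{2j-1}{8w},\tfrac{2j+1}{8w}\bigr)$ equals $(-1)^{j}$ (with the analogous endpoint statement for $j=0$), so $\tilde g$ is a triangular zig-zag: $\tilde g\bigl(\tfrac{j}{4w}\bigr)=0$ for $j=0,1,\dots,w$ and $\tilde g\bigl(\tfrac{2j+1}{8w}\bigr)=(-1)^{j}\tfrac{1}{8w}$. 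Hence on the open interval $\bigl(\tfrac{q}{4w},\tfrac{q+1}{4w}\bigr)$ the function $\tilde g$ has constant sign $(-1)^{q}$, negative for odd $q$ and positive for even $q$.

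To translate this back to $g$, I would use that for $x\in[0,1]^d$,
\[
\mathbf{1}^T t(x) \;=\; (1-x_1)+\cdots+(1-x_{d-1})+x_d \;=\; \|(1,1,\ldots,1,0)-x\|_1,
\]
so $g(x)=\tilde g\bigl(\|(1,\ldots,1,0)-x\|_1\bigr)$ and the annulus $R_q$ maps bijectively onto the interval $\bigl(\tfrac{q}{4w},\tfrac{q+1}{4w}\bigr)$ in the $s$-variable. The sign statements for $R_q$ then follow directly from the sign computation for $\tilde g$. Finally, for $s\ge \tfrac14$ all $w+2$ ReLUs are active with total slope $1+2\sum_{q=1}^{w}(-1)^{q}+1$; using that $w$ is odd this sum equals $0$, and since $\tilde g(\tfrac14)=0$ by the previous computation, $\tilde g$ vanishes identically on $[\tfrac14,\infty)$, yielding the last conclusion $g(x)=0$ for $\|(1,\ldots,1,0)-x\|_1\ge\tfrac14$.

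There is no deep difficulty here; the only delicate point is the slope/value bookkeeping and, in particular, the use of $w$ being odd precisely to ensure that the accumulated slope just before $\tfrac14$ is cancelled by $\hat g_{w+1}$ and that $\tilde g(\tfrac14)=0$. If this identity failed, $g$ would either be nonzero on the far side of $\tfrac14$ or have the wrong sign on $R_{w-1}$, so the odd-parity hypothesis is essential and should be highlighted at that step.
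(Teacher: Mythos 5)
Your construction is exactly the paper's: the same alternating ReLU combination $\hat g=\sum_q(-1)^q\hat g_q$ (your fixed $+$ sign on $\hat g_{w+1}$ agrees with $(-1)^{w+1}$ since $w$ is odd) composed with the same mirroring map $t$, using that everything factors through $\mathbf{1}^Tx$. The only difference is presentational -- you verify the sign pattern by tracking slopes of the univariate zig-zag $\tilde g$, whereas the paper's Lemma~\ref{lemma:finalLayer} does the equivalent bookkeeping by explicit case-by-case summation -- so the proposal is correct and essentially the same proof.
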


Lemma~\ref{lemma:onionrings} in the appendix characterizes the regions around cutting points that admit positive respectively negative values under the map $g^{(w,d)} \circ h^{(L,\mathbf{m},d)}$. We focus on the regions that admit negative values, i.e., the space $Y_{d,w}\coloneqq (g^{(w,d)} \circ h^{(L,\mathbf{m},d)})^{-1}((-\infty,0))$ and observe that we obtain $d$-dimensional \mbox{$(d-1)$-annuli} around each cutting point.
\begin{figure}
   \begin{minipage}[t]{.4\linewidth}
   \centering
   \begin{tikzpicture}[scale = 0.55]
    \draw[gray,very thin] (0,0) grid (8,8);
    \draw[name path=y-axis,->] (0,0) -- (0,9) node[above]{$x_2$};
    \draw[->, name path=x-axis] (0,0) -- (9,0) node[right]{$x_1$};
    \draw (8,0) -- (8,8);
    \draw (8,8) -- (0,8);
    \foreach \i in {1,3,5,7}{
        \foreach \j in{2,4,6}{
            \fill[lightgray] (\i-4/8,\j) -- (\i,\j-4/8) -- (\i+4/8,\j) -- (\i,\j+4/8) -- cycle;
            \fill[darkgray] (\i-3/8,\j) -- (\i,\j-3/8) -- (\i+3/8,\j) -- (\i,\j+3/8) -- cycle;
            \fill[lightgray] (\i-2/8,\j) -- (\i,\j-2/8) -- (\i+2/8,\j) -- (\i,\j+2/8) -- cycle;
            \fill[darkgray] (\i-1/8,\j) -- (\i,\j-1/8) -- (\i+1/8,\j) -- (\i,\j+1/8) -- cycle;   
        }
    }
    \foreach \i in {1,3,5,7}{
            \fill[lightgray] (\i-4/8,8) -- (\i,8-4/8) -- (\i+4/8,8) -- cycle;
            \fill[darkgray] (\i-3/8,8) -- (\i,8-3/8) -- (\i+3/8,8) --  cycle;
            \fill[lightgray] (\i-2/8,8) -- (\i,8-2/8) -- (\i+2/8,8) -- cycle;
           \fill[darkgray] (\i-1/8,8) -- (\i,8-1/8) -- (\i+1/8,8) -- cycle;  
    }
    \foreach \i in {1,3,5,7}{
            \fill[lightgray] (\i-4/8,0) -- (\i,4/8) -- (\i+4/8,0) -- cycle;
            \fill[darkgray] (\i-3/8,0) -- (\i,3/8) -- (\i+3/8,0) --  cycle;
            \fill[lightgray] (\i-2/8,0) -- (\i,2/8) -- (\i+2/8,0) -- cycle;
            \fill[darkgray] (\i-1/8,0) -- (\i,1/8) -- (\i+1/8,0) -- cycle;   
    }
    \filldraw [black] (0,8) circle (2pt)node[anchor=east]{$1$};
\filldraw [black] (8,0) circle (2pt)node[anchor=north]{$1$};
   \end{tikzpicture}    
    \caption{Illustration of the preimage of the composition \mbox{$g^{(4,2)} \circ h^{(3,2,2)}$}.}
    \label{fig:fig5}
   \end{minipage}
   \hspace{.1\linewidth}%
    \begin{minipage}[t]{.45\linewidth}
    \centering
    \begin{tikzpicture}[scale = 0.55]
    \fill[lightgray] (8-2,8,8) -- (8,8-2,8) -- (8,8,8-2) -- cycle;
    \draw (8-2,8,8) -- (8,8-2,8) -- (8,8,8-2) -- cycle;
    \fill[darkgray] (8-1.5,8,8) -- (8,8-1.5,8) -- (8,8,8-1.5) -- cycle;
    \draw (8-1.5,8,8) -- (8,8-1.5,8) -- (8,8,8-1.5) -- cycle;
    \fill[lightgray] (8-1,8,8) -- (8,8-1,8) -- (8,8,8-1) -- cycle;
    \draw (8-1,8,8) -- (8,8-1,8) -- (8,8,8-1) -- cycle;
    \fill[darkgray] (8-0.5,8,8) -- (8,8-0.5,8) -- (8,8,8-0.5) -- cycle;
    \draw (8-0.5,8,8) -- (8,8-0.5,8) -- (8,8,8-0.5) -- cycle;

    \fill[lightgray] (8,2,8) -- (8-2,0,8) -- (8,0,8) -- (8,0,0) -- (8,2,0) -- cycle;
   \draw (8,2,8) -- (8-2,0,8)-- (8,2,0) -- cycle;
    \fill[darkgray] (8,1.5,8) -- (8-1.5,0,8) -- (8,0,8) -- (8,0,0) -- (8,1.5,0) -- cycle;
    \draw (8-1.5,0,8) -- (8,1.5,8) -- (8,1.5,0);
    \fill[lightgray] (8,1,8) -- (8-1,0,8) -- (8,0,8) -- (8,0,0) -- (8,1,0) -- cycle;
     \draw (8-1,0,8) -- (8,1,8) -- (8,1,0);
    \fill[darkgray] (8,0.5,8) -- (8-0.5,0,8) -- (8,0,8) -- (8,0,0) -- (8,0.5,0) -- cycle;
    \draw (8-0.5,0,8) -- (8,0.5,8) -- (8,0.5,0);;
      
    \foreach \x in{2,4,6}
{   \draw[gray,very thin] (0,\x ,8) -- (8,\x ,8);
    \draw[gray,very thin] (\x ,0,8) -- (\x ,8,8);
    \draw[gray,very thin] (8,\x ,8) -- (8,\x ,0);
    \draw[gray,very thin] (\x ,8,8) -- (\x ,8,0);
    \draw[gray,very thin] (8,0,\x ) -- (8,8,\x );
    \draw[gray,very thin] (0,8,\x ) -- (8,8,\x );
}
    \foreach \x in{0,8}
{   \draw (0,\x ,8) -- (8,\x ,8);
    \draw (\x ,0,8) -- (\x ,8,8);
    \draw (8,\x ,8) -- (8,\x ,0);
    \draw (\x ,8,8) -- (\x ,8,0);
    \draw (8,0,\x ) -- (8,8,\x );
    \draw (0,8,\x ) -- (8,8,\x );
}

\draw[name path=y-axis,->] (0,0,8) -- (0,9,8) node[above]{$x_2$};
\draw[->, name path=x-axis] (0,0,8) -- (9,0,8) node[right]{$x_1$};
\draw[->, name path=z-axis] (8,0,8) -- (8,0,-2) node[right]{$x_3$};

\draw[dashed] (0,0,8) -- (0,0,0);
\draw[dashed] (0,0,0) -- (0,8,0);\
\draw[dashed] (0,0,0) -- (8,0,0);
\filldraw [black] (0,8,8) circle (2pt)node[anchor=east]{$1$};
\filldraw [black] (8,0,8) circle (2pt)node[anchor=north]{$1$};
\filldraw [black] (0,0,0) circle (2pt)node[anchor=north]{$1$};
\end{tikzpicture}
    \caption{Illustration of the preimage of $f^{(4,4)} = g^{(4,3)} + g^{(4,2)} 
    \circ p_2$.}
    \label{fig:fig6}
   \end{minipage}
\end{figure}

In order to count the annuli we count the cutting points.
\begin{obs}\label{obs:evenpts} 
  Cutting points lie on a grid in the unit cube, with $\frac{M}{2}$ many cutting points into dimensions $1, \ldots, d-1$ and $\frac{M}{2}+1$ many in dimension $d$. Thus, there are $\frac{M^{(d-1)}}{2^{d-1}}\cdot\left(\frac{M}{2}+1\right)$ cutting points. Note that since $M$ is an even number, these points cannot lie on the boundary unless the last coordinate is $0$ or $M$. This means, $2\cdot \frac{M^{(d-1)}}{2^{d-1}}=\frac{M^{(d-1)}}{2^{d-2}}$ of the cutting points are located on the boundary of the unit cube and the remaining  $\frac{M^{(d-1)}}{2^{d-1}}\cdot\left(\frac{M}{2}-1\right)$ are in the interior.
\end{obs}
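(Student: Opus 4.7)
The proof is essentially a counting argument that follows directly from the definition of a cutting point. A point $x\in [0,1]^d$ is a cutting point iff we can write $x=(x'_1/M,\ldots,x'_d/M)$ with integers $0\leq x'_i\leq M$ such that $x'_1,\ldots,x'_{d-1}$ are odd and $x'_d$ is even. Thus the cutting points form the Cartesian product, over the coordinates, of the admissible values of $x'_i/M$.

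The plan is to count dimension by dimension. Since $M$ is even by assumption (it is a product of even numbers $m_\ell$), the odd integers in $\{0,1,\ldots,M\}$ are exactly $\{1,3,\ldots,M-1\}$, giving $M/2$ admissible values in each of the first $d-1$ coordinates. The even integers in $\{0,1,\ldots,M\}$ are $\{0,2,\ldots,M\}$, giving $M/2+1$ admissible values in the last coordinate. Multiplying yields the total count
\[
\left(\tfrac{M}{2}\right)^{d-1}\cdot\left(\tfrac{M}{2}+1\right)=\frac{M^{d-1}}{2^{d-1}}\cdot\left(\tfrac{M}{2}+1\right).
\]

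For the boundary count, observe that a cutting point lies on $\partial [0,1]^d$ iff some $x'_i\in\{0,M\}$. In the first $d-1$ coordinates this is impossible: $x'_i$ must be odd, while $0$ is even and $M$ is even by hypothesis. Hence a cutting point is on the boundary iff $x'_d\in\{0,M\}$ (both of which are even, hence admissible). Fixing $x'_d\in\{0,M\}$ and letting the remaining $d-1$ coordinates range freely over their $M/2$ admissible values gives $2\cdot (M/2)^{d-1}=M^{d-1}/2^{d-2}$ boundary cutting points. Subtracting the boundary count from the total yields
\[
\frac{M^{d-1}}{2^{d-1}}\cdot\left(\tfrac{M}{2}+1\right)-\frac{M^{d-1}}{2^{d-2}}=\frac{M^{d-1}}{2^{d-1}}\cdot\left(\tfrac{M}{2}-1\right)
\]
cutting points in the interior, as claimed.

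There is no real obstacle here; the only subtlety is keeping track of the parity constraint at the endpoints $0$ and $M$, which relies crucially on $M$ being even. Since the proof is a direct enumeration, no additional machinery is needed.
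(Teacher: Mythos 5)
Your proposal is correct and matches the paper's reasoning: the observation is justified in the paper by exactly this coordinate-wise enumeration ($M/2$ admissible odd values in each of the first $d-1$ coordinates, $M/2+1$ even values in the last, with the parity of $M$ ruling out boundary contributions from the first $d-1$ coordinates). Your write-up simply spells out the same direct counting argument in more detail, including the correct subtraction for the interior count.
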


Combining Lemma \ref{lemma:onionrings} and  Observation~\ref{obs:evenpts}, we can finally describe $Y_{d,w}$ as a topological space.

\begin{restatable}{prop}{homeoone}  
\label{prop:homeo1}   The space $Y_{d,w}$ is homeomorphic to the disjoint union of \mbox{$p_d=\frac{M^{(d-1)}}{2^{d-1}}\cdot\left(\frac{M}{2}-1\right) \cdot\left\lceil\frac{w}{2}\right\rceil$} many $(d-1)$-annuli and $p'_d = \frac{M^{(d-1)}}{2^{d-2}}\cdot\left\lceil \frac{w}{2}\right\rceil$ many disks, that is, \[Y_{d,w}\cong\bigsqcup\limits_{k=1}^{p_d}(S^{d-1}\times [0,1])\sqcup \bigsqcup\limits_{k=1}^{p'_d}D^d.\]
\end{restatable}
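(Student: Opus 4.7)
The plan is to localize the problem around each cutting point and glue together the contributions from adjacent small cubes. By Lemma~\ref{lemma:mirrored_cuts}, the map $g^{(w,d)}$ vanishes on $[0,1]^d$ outside the set $\{x:\|(1,\ldots,1,0)-x\|_1<\tfrac14\}$, so $Y_{d,w}$ is contained in the $h^{(L,\mathbf{m},d)}$-preimage of this small $1$-norm ball around $(1,\ldots,1,0)$. Lemma~\ref{lemma:babycubes} tells us that on each small cube $W_{(i_1,\ldots,i_d)}$ the map $h^{(L,\mathbf{m},d)}$ is affine, scales the $1$-norm by the factor $M$, and sends exactly one corner of the cube to $(1,\ldots,1,0)$; inspection of the explicit formula for $\pi_j\circ h^{(L,\mathbf{m},d)}$ shows that this corner is precisely the cutting point associated to $W_{(i_1,\ldots,i_d)}$. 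Hence, on each small cube, $Y_{d,w}$ is a ``wedge'' of $1$-norm radius $\tfrac{1}{4M}$ around the associated cutting point.

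Next, I would show that wedges around distinct cutting points are disjoint. By Observation~\ref{obs:evenpts}, cutting points lie on a grid whose minimum $1$-norm spacing is $\tfrac{2}{M}$, and each wedge has $1$-norm radius only $\tfrac{1}{4M}$. Since $\tfrac{1}{4M}+\tfrac{1}{4M}<\tfrac{2}{M}$, the wedges around different cutting points have disjoint closures, and $Y_{d,w}$ decomposes as the disjoint union of its local contributions at each cutting point.

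For an interior cutting point $c$, exactly $2^d$ small cubes are incident to $c$, one for each choice of sign of the local coordinates. On each incident cube, $h^{(L,\mathbf{m},d)}$ is a scaling by $M$ composed with a coordinatewise reflection whose signs are determined by the parities of $i_1,\ldots,i_d$. Together with the mirroring $t$ built into $g=\hat g\circ t$, these reflections are compatible across the shared facets of adjacent incident cubes. Consequently, after rescaling by $M$, the union of the $2^d$ wedges at $c$ is homeomorphic to $\{x\in [0,1]^d:\|(1,\ldots,1,0)-x\|_1<\tfrac14,\ g^{(w,d)}(x)<0\}$, which by Lemma~\ref{lemma:onionrings} consists of $\lceil w/2\rceil$ concentric $d$-dimensional $(d-1)$-annuli. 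For a boundary cutting point (where $x_d\in\{0,1\}$), only $2^{d-1}$ small cubes are incident, and each would-be annulus is sliced in half by the hyperplane $x_d=\mathrm{const}$, producing a star-shaped contractible region, hence one homeomorphic to the open disk $D^d$.

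Finally, counting via Observation~\ref{obs:evenpts} yields $\tfrac{M^{d-1}}{2^{d-1}}\cdot(\tfrac{M}{2}-1)\cdot\lceil w/2\rceil=p_d$ annulus components and $\tfrac{M^{d-1}}{2^{d-2}}\cdot\lceil w/2\rceil=p'_d$ disk components, matching the claimed homeomorphism type. The main obstacle is the gluing step: one must verify that the reflections induced by $h^{(L,\mathbf{m},d)}$ on adjacent small cubes are exactly the ones needed to glue the negative region across a shared facet into a single connected annulus rather than splitting it. This compatibility is precisely the reason for pre-composing $\hat g$ with the reflection $t$ in the definition of $g$, and checking it amounts to a direct computation with the parity cases of Lemma~\ref{lemma:babycubes}.
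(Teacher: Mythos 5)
Your proposal is correct and follows essentially the same route as the paper: decompose $Y_{d,w}$ into the pieces around cutting points (disjoint since the grid spacing $\tfrac{2}{M}$ exceeds twice the radius $\tfrac{1}{4M}$), identify each interior piece as $\lceil w/2\rceil$ concentric $(d-1)$-annuli and each boundary piece as $\lceil w/2\rceil$ disks, and count via Observation~\ref{obs:evenpts}. The only remark is that the gluing step you single out as the main obstacle is already settled by Lemma~\ref{lemma:onionrings}, which describes $(g^{(w,d)}\circ h^{(L,\mathbf{m},d)})^{-1}((-\infty,0))$ intrinsically as the $1$-norm shells $R_{q,c}$ around cutting points (via the computation $\mathbf{1}^T h^{(L,\mathbf{m},d)}(x)=M\|x-c\|_1$), so invoking that lemma directly—as the paper does—removes any need for the cube-by-cube reflection-compatibility check (what you attribute to Lemma~\ref{lemma:onionrings} for $g$ alone on the unit cube is really Lemma~\ref{lemma:mirrored_cuts}).
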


In order to obtain exponentially many $k$-annuli for all $k \in [d-1]$, we follow a recursive approach: At each step, we start with a $k$-dimensional space that has exponentially many $j$-annuli for all $j \in[k-1]$. We then cross this space with the interval $[0,1]$, transforming the $k$-dimensional $j$-annuli into $(k+1)$-dimensional $j$-annuli. Finally, we ``carve'' $(k+1)$-dimensional $k$-annuli in this newly formed product space. To allow us flexibility with respect to the numbers of annuli carved in different dimensions, we fix an arbitrary  vector
$\textbf{w}=(w_1,\ldots,w_{d-1}) \in \N^{d-1}$ such that \mbox{$\sum_{i=1}^{d-1}(w_i+2) = n_{L+1}$}. We iteratively define the 1-hidden layer neural network \[f^{(w_1,...,w_{k-1})} \colon \R^k \to \R\]
of width $n_{L+1}$ by $f^{(w_1)} = g^{(w_1,2)}$ and 
\[f^{(w_1,\ldots,w_{k-1})} = f^{(w_1,\ldots,w_{k-2})} \circ p_{k-1} + g^{(w_{k-1},k)}\] 
for $k \leq d$. Roughly speaking, the following lemma states that the carving map does not interfere with the other maps, i.e., there is enough space in the unit cubes to place the $k$-annuli after having placed all $k'$-annuli ($k'< k$) in the same, inductive manner.

\begin{restatable}{lemma}{enoughspace}
\label{lemma:enoughspace}
For $k \leq d$ and $\mathbf{w} = (w_1,\ldots,w_{d-1})\in \N^{d-1}$ it holds that \begin{enumerate}
    \item  $f^{(w_1,...,w_{k-2})} \circ p_{k-1}(x) \neq 0  \implies g^{(w_{k-1},k)}(x) = 0$ and 
    \item  $g^{(w_{k-1},k)}(x) \neq 0 \implies f^{(w_1,...,w_{k-2})} \circ p_{k-1}(x) = 0 $
\end{enumerate}
for all $x \in [0,1]^k$.
\end{restatable}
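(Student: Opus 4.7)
The first observation is that statements (1) and (2) are contrapositives of each other (the pattern ``$A\neq 0\Rightarrow B=0$'' versus ``$B\neq 0\Rightarrow A=0$''), so it suffices to prove just one of them, say~(2). The key external input will be Lemma~\ref{lemma:mirrored_cuts}, which pinpoints the support of $g^{(w,k)}$ on $[0,1]^k$ as the $\ell_1$-neighborhood of radius $1/4$ around the corner $(1,1,\ldots,1,0)\in\R^k$; in particular $g^{(w,k)}(x)=0$ whenever $\|(1,\ldots,1,0)-x\|_1\geq 1/4$.

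The plan is then to start from the hypothesis of~(2): $g^{(w_{k-1},k)}(x)\neq 0$ for some $x\in[0,1]^k$. By Lemma~\ref{lemma:mirrored_cuts} this forces $\|(1,\ldots,1,0)-x\|_1<1/4$, and extracting single coordinates yields the tight bounds $x_i>3/4$ for every $i\in\{1,\ldots,k-1\}$ and $x_k<1/4$. In particular, every coordinate $x_i$ with $i\leq k-1$ is far from $0$.

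Next, I would unroll the recursive definition of $f$ into the explicit telescoping sum
\[
f^{(w_1,\ldots,w_{k-2})}(y)=\sum_{j=1}^{k-2}g^{(w_j,j+1)}(p_{j+1}(y)),
\]
and plug in $y=p_{k-1}(x)=(x_1,\ldots,x_{k-1})$. The last coordinate of the argument $p_{j+1}(y)\in\R^{j+1}$ is then $x_{j+1}$, and since $j+1\leq k-1$ the earlier bound gives $x_{j+1}>3/4$. That single coordinate alone contributes more than $3/4$ to the $\ell_1$-distance between $p_{j+1}(y)$ and the corner $(1,\ldots,1,0)\in\R^{j+1}$, so a second application of Lemma~\ref{lemma:mirrored_cuts} forces $g^{(w_j,j+1)}(p_{j+1}(y))=0$ for every $j\in\{1,\ldots,k-2\}$. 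Summing, $f^{(w_1,\ldots,w_{k-2})}\circ p_{k-1}(x)=0$, which establishes~(2).

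The only point requiring care is bookkeeping the ambient dimension of the corner point $(1,\ldots,1,0)$, since it lives in a different $\R^{j+1}$ at each level of the unrolled recursion while Lemma~\ref{lemma:mirrored_cuts} is phrased per dimension. Once this is tracked, the argument reduces to a clean coordinate-wise computation and I do not anticipate further obstacles; no induction on $k$ beyond the unrolling is actually needed, since the constraint ``$x_{j+1}$ close to $1$'' coming from the hypothesis on $g^{(w_{k-1},k)}(x)$ is directly incompatible with the constraint ``$x_{j+1}$ close to $0$'' that any individual summand of $f$ would require to be nonzero.
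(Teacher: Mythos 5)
Your proposal is correct and follows essentially the same route as the paper's proof: the paper also reduces everything to Lemma~\ref{lemma:mirrored_cuts} and the observation that a nonzero value of $g^{(w_{k-1},k)}$ forces the coordinates $x_1,\ldots,x_{k-1}$ to be within $\frac14$ of $1$, which is incompatible with the requirement that the last input coordinate of any summand $g^{(w_j,j+1)}\circ p_{j+1}$ be within $\frac14$ of $0$. The only difference is presentational: you unroll the recursion for $f^{(w_1,\ldots,w_{k-2})}$ into a telescoping sum and kill each summand directly, whereas the paper packages the same coordinate argument as an auxiliary implication and an induction over $k$.
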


Using Lemma \ref{lemma:enoughspace} and the fact that the folding maps $h^{(L,\mathbf{m},k)}$ are compatible with projections (cf.~Lemma~\ref{lemma:commute} in Appendix), we can make sure that we can construct the cuts iteratively so that we obtain $k$-annuli for every $k \in [d-1]$, which is stated in the following lemma.

\begin{restatable}{lemma}{homeo}
\label{lemma:homeo}
For $2 \leq k \leq d$, the space $X_k\coloneqq (f^{(w_1,...,w_{k-1})}\circ h^{(L,\mathbf{m},k)})^{-1}((-\infty,0))$ satisfies
    \[X_k=\left(X_{k-1}\times [0,1]\right)\sqcup Y_{k,w_{k-1}}\] with $X_1 \coloneqq \emptyset$.
\end{restatable}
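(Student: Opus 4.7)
The plan is to exploit the additive structure of $f^{(w_1,\ldots,w_{k-1})}$ together with the disjointness of supports guaranteed by Lemma~\ref{lemma:enoughspace} in order to split the preimage of $(-\infty,0)$ into two disjoint pieces, and then to identify each piece topologically using the commutation between projections and folding maps given by Lemma~\ref{lemma:commute}.

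I would first unfold the recursive definition
\[f^{(w_1,\ldots,w_{k-1})} = f^{(w_1,\ldots,w_{k-2})} \circ p_{k-1} + g^{(w_{k-1},k)}\]
and apply Lemma~\ref{lemma:enoughspace}: on $[0,1]^k$ the two summands have disjoint nonzero loci, so the sum is strictly negative precisely when exactly one summand is strictly negative and the other vanishes. This gives a \emph{disjoint} decomposition
\[\{y \in [0,1]^k : f^{(w_1,\ldots,w_{k-1})}(y) < 0\} = A \sqcup B,\]
where $A$ collects the points at which $f^{(w_1,\ldots,w_{k-2})}\circ p_{k-1}$ is negative and $B$ the points at which $g^{(w_{k-1},k)}$ is negative.

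Second, I would pull $A$ and $B$ back through $h^{(L,\mathbf{m},k)}$. The pullback of $B$ is, by definition, the space $Y_{k,w_{k-1}}$. For $A$, note that $A$ has product form $X'_{k-1}\times[0,1]$ with $X'_{k-1} := (f^{(w_1,\ldots,w_{k-2})})^{-1}((-\infty,0))\cap [0,1]^{k-1}$. Invoking Lemma~\ref{lemma:commute} in the form $p_{k-1}\circ h^{(L,\mathbf{m},k)} = h^{(L,\mathbf{m},k-1)}\circ p_{k-1}$, the condition $h^{(L,\mathbf{m},k)}(x)\in A$ rewrites as $h^{(L,\mathbf{m},k-1)}(p_{k-1}(x))\in X'_{k-1}$, i.e., $p_{k-1}(x)\in X_{k-1}$. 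Restricted to $[0,1]^k$ this yields $X_{k-1}\times[0,1]$. Since taking preimages preserves disjoint unions, combining with the previous paragraph completes the argument. The base case $X_1 = \emptyset$ is handled directly, since for $k=2$ the first summand $f^{(\,)}\circ p_1$ is absent (or identically zero), so $X_2$ reduces to $Y_{2,w_1}$ by construction.

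The main subtlety, and the step I would treat most carefully, is the implicit restriction to the unit cube: Lemma~\ref{lemma:enoughspace} is stated on $[0,1]^k$ and Lemma~\ref{lemma:babycubes} characterizes $h^{(L,\mathbf{m},k)}$ only on the $W$-cubes tiling $[0,1]^k$. To make the splitting rigorous one has to first check that $X_k \subseteq [0,1]^k$; this follows from the explicit construction of the folding network, whose image on $\mathbb{R}^k$ can be arranged to lie in $[0,1]^k$, together with the fact that outside $[0,1]^k$ the composition $f^{(w_1,\ldots,w_{k-1})}\circ h^{(L,\mathbf{m},k)}$ is nonnegative. Once this reduction is in place, the whole statement is a bookkeeping exercise combining Lemmas~\ref{lemma:enoughspace} and~\ref{lemma:commute}.
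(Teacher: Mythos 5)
Your argument is essentially the paper's proof: unfold the recursion $f^{(w_1,\ldots,w_{k-1})} = f^{(w_1,\ldots,w_{k-2})}\circ p_{k-1} + g^{(w_{k-1},k)}$, use Lemma~\ref{lemma:enoughspace} to split the negative set of the sum into a disjoint union, identify one piece as $Y_{k,w_{k-1}}$ by definition, and use Lemma~\ref{lemma:commute} to turn the other into $X_{k-1}\times[0,1]$; whether you perform the split upstairs in $[0,1]^k$ and pull back, or after composing with $h^{(L,\mathbf{m},k)}$ as the paper does, is only a cosmetic difference. The unit-cube subtlety you flag is real but is handled in the paper by the same implicit convention (Lemmas~\ref{lemma:enoughspace} and~\ref{lemma:commute} are stated on $[0,1]^k$ and the whole construction is read on the cube, cf.\ Proposition~\ref{prop:robust_to_pertubation}), so your write-up is, if anything, slightly more careful on that point.
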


Lemma~\ref{lemma:homeo}, Proposition~\ref{prop:homeo1} and the disjoint union axiom of singular homology (Proposition~\ref{topoez} in Appendix~\ref{top_background}) allow us to compute the Betti numbers of the decision region of $F \coloneqq f^{(w_1,\ldots,w_{d-1})}\circ h^{(L,\mathbf{m},d)}$ as stated in Theorem~\ref{theorem:main} in the appendix. One can easily generalize this statement by rounding down the widths $n_1,\ldots, n_L$ to the nearest even multiple of $d$:
\begin{restatable}{theorem}{main}
\label{theorem:main}
      Given an architecture $A=(d,n_1,\ldots,n_L,1)$ with $n_\ell \geq 2d$ for all $\ell\in [L]$ and numbers $w_1,\ldots, w_{d-1} \in \N$ such that $\sum_{k=1}^{d-1}(w_k +2) = n_L$, there is a neural network $F \in \mathcal{F}_A$ with weights bounded from above by $\max_{\ell = 1, \ldots L}2\frac{n_\ell}{d}$ such that     
          \begin{enumerate}[label=(\roman*)]
  \item $\beta_0(F^{-1}((-\infty,0))) = \sum_{k=2}^d\frac{M^{(k-1)}}{2^{k-1}}\cdot\left(\frac{M}{2}+1\right)\cdot\left\lceil \frac{w_k}{2}\right\rceil$
     \item $\beta_k(F^{-1}((-\infty,0)))=\frac{M^{(k-1)}}{2^{k-1}}\cdot\left(\frac{M}{2}-1\right) \cdot\left\lceil\frac{w_{k-1}}{2}\right\rceil$ \  for $0<k < d$,
    \end{enumerate}
    where $M = {\prod_{\ell=1}^{L-1} 2 \cdot \lfloor \frac{n_\ell}{2d}\rfloor}$.  
\end{restatable}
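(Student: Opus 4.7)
The plan is to construct $F \coloneqq f^{(w_1,\ldots,w_{d-1})} \circ h^{(L-1,\mathbf{m},d)}$ with $\mathbf{m}=(m_1,\ldots,m_{L-1})$ and $m_\ell = 2\lfloor n_\ell/(2d)\rfloor$. I would first check that $F\in \mathcal{F}_A$: the folding sub-network requires hidden widths $d\cdot m_\ell \leq n_\ell$, so any remaining neurons (at most $2d$ per layer) are padded with zero maps as sketched at the start of Section~\ref{lowerbound}; the last hidden layer, implementing $f^{(w_1,\ldots,w_{d-1})}$, has width $\sum_{k=1}^{d-1}(w_k+2)=n_L$ by hypothesis. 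The weight bound would follow from the explicit construction of $h^{(1,m,d)}$ in Appendix~\ref{foldingnetwork}, whose slopes are at most linear in the folding factor $m_\ell \leq n_\ell/d$, whereas the cutting maps $g^{(w,k)}$ only contribute $O(1)$ weights.

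Next I would iterate Lemma~\ref{lemma:homeo} from the base case $X_1=\emptyset$ so that the sublevel set telescopes into
\[X_d \;\cong\; \bigsqcup_{k=2}^{d}\bigl(Y_{k,w_{k-1}} \times [0,1]^{d-k}\bigr),\]
using the compatibility of the folding with the projections $p_{k-1}$ (Lemma~\ref{lemma:commute}) so that the same parameter vector $\mathbf{m}$ governs the folding at every intermediate dimension $k$.

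Then Proposition~\ref{prop:homeo1} rewrites each summand as a disjoint union of $p_k$ copies of $S^{k-1}\times[0,1]^{d-k+1}$ and $p'_k$ copies of $D^k\times[0,1]^{d-k}$, where the coefficients $p_k,p'_k$ arise from the tally of interior versus boundary cutting points in Observation~\ref{obs:evenpts} together with the $\lceil w_{k-1}/2\rceil$ concentric layers carved by $g^{(w_{k-1},k)}$. Each factor $[0,1]^{d-k}$ is contractible and hence preserves Betti numbers, so by the disjoint union axiom (Proposition~\ref{topoez}) the contribution to $\beta_0$ from the $k$-th summand is $p_k+p'_k=\frac{M^{k-1}}{2^{k-1}}\lceil\frac{w_{k-1}}{2}\rceil(\frac{M}{2}+1)$; summing over $k=2,\ldots,d$ gives (i). For $0<j<d$, only the summand with $k=j+1$ contributes nontrivially in degree $j$, namely $p_{j+1}$, yielding (ii).

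The main obstacle I anticipate is bookkeeping: one has to match the indexing between the recursive construction and the claimed closed-form formulas, and to verify the split of cutting points into boundary and interior cases so that $p_k$ and $p'_k$ come out with the stated coefficients. Lemma~\ref{lemma:enoughspace} ensures that the carvings in distinct dimensions do not interfere during the iterative definition of $f^{(w_1,\ldots,w_{d-1})}$, but one still needs to verify that the geometric placement of the $k$-annuli lifts consistently through the recursion so that Lemma~\ref{lemma:homeo} genuinely telescopes to the disjoint union displayed above.
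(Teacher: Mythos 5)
Your proposal is correct and takes essentially the same route as the paper: the same composition $F=f^{(w_1,\ldots,w_{d-1})}\circ h^{(L-1,\mathbf{m},d)}$ with padded neurons and the same weight estimate, with Lemma~\ref{lemma:homeo} merely unrolled into the disjoint union $\bigsqcup_{k=2}^{d}\bigl(Y_{k,w_{k-1}}\times[0,1]^{d-k}\bigr)$ instead of the paper's induction on $d$, followed by Proposition~\ref{prop:homeo1}, Observation~\ref{obs:evenpts} and the disjoint-union/homotopy-invariance axioms. The closed forms you obtain (with $\lceil w_{k-1}/2\rceil$ in the $\beta_0$-sum and $p_{j+1}$ in degree $j$) coincide with what the paper's own computation yields, so the apparent mismatch with the indices printed in the theorem statement is an off-by-one typo there rather than a gap in your argument.
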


 In order to obtain lower bounds for $\beta_k(F)$, we modify the construction slightly by adding a small constant $b$ to the output layer, which yields a neural network $F'$ such that there is no full-dimensional region $R$ such that $F'(R) = \{0\}$. The construction then yields that $\overline{F'^{-1}((-\infty,0))}= F'^{-1}((-\infty,0])$. Since adding a small constant $b$ only makes the annuli in the sublevel set $F^{-1}((-\infty,0))$ thinner, the spaces
$F^{-1}((-\infty,0))$ and $F'^{-1}((-\infty,0))$ are homeomorphic. Furthermore, since annuli are homotopy equivalent to their closures, the sublevel set
  $F'^{-1}((-\infty,0))$ is homotopy equivalent to its closure
\mbox{$\overline{F'^{-1}((-\infty,0))}= F'^{-1}((-\infty,0])$}. Since Betti numbers are invariant under homotopy equivalences, it follows that $\beta_k(F') = \beta_k(F^{-1}((-\infty, 0))$ for all $k \in [d-1]_0$, resulting in the following theorem.
 
\begin{restatable}{theorem}{closure}
\label{cor:exactformulaclosure}
        Given an architecture $A=(d,n_1,\ldots,n_L,1)$ with $n_\ell \geq 2d$ for all $\ell\in [L]$ and numbers $w_1,\ldots, w_{d-1} \in \N$ such that $\sum_{k=1}^{d-1}(w_k +2) = n_L$, there is a neural network $F \in \mathcal{F}_A$ with weights bounded from above by $\max_{\ell = 1, \ldots L}2\frac{n_\ell}{d}$ such that 
         
          \begin{enumerate}[label=(\roman*)]
  \item $\beta_0(F) = \sum_{k=2}^d\frac{M^{(k-1)}}{2^{k-1}}\cdot\left(\frac{M}{2}+1\right)\cdot\left\lceil \frac{w_k}{2}\right\rceil$
     \item $\beta_k(F)=\frac{M^{(k-1)}}{2^{k-1}}\cdot\left(\frac{M}{2}-1\right) \cdot\left\lceil\frac{w_{k-1}}{2}\right\rceil$ \  for $0<k < d$,
    \end{enumerate}
where $M = {\prod_{\ell=1}^{L-1} 2 \cdot \lfloor \frac{n_\ell}{2d}\rfloor}$.  
\end{restatable}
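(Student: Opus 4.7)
The plan is to reduce directly to Theorem~\ref{theorem:main}, which already pins down the claimed values as the Betti numbers of the \emph{open} sublevel set $F^{-1}((-\infty,0))$, and then bridge from the open set to the closed sublevel set $F^{-1}((-\infty,0])$ via a small perturbation of the output bias. Concretely, I would take the network $F$ produced by Theorem~\ref{theorem:main} and define $F' \coloneqq F + b$ for a small $b>0$, chosen strictly smaller than every nonzero absolute value that $F$ attains on any linear region on which $F$ is not identically zero (there are only finitely many such regions, so such a $b$ exists). Adding a single bias to the output does not change the architecture, keeps $F' \in \mathcal{F}_A$, and does not affect the stated weight bound.

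The first key observation is that $F'$ has no full-dimensional linear region on which $F'$ vanishes identically: on a region where $F\equiv 0$ we have $F'\equiv b\neq 0$, and on a region where $F\not\equiv 0$ the choice of $b$ rules out the constant value $-b$. Hence every point $x$ with $F'(x)=0$ lies on a lower-dimensional face separating linear regions of opposite sign, so every neighborhood of $x$ meets $F'^{-1}((-\infty,0))$; this gives $\overline{F'^{-1}((-\infty,0))} = F'^{-1}((-\infty,0])$. Next, I would argue that $F^{-1}((-\infty,0))$ and $F'^{-1}((-\infty,0)) = F^{-1}((-\infty,-b))$ are homeomorphic for small enough $b$: the structural description of $F^{-1}((-\infty,0))$ obtained from Proposition~\ref{prop:homeo1} and Lemma~\ref{lemma:homeo} identifies it with a disjoint union of open annuli and open disks organized around the cutting points, and passing from threshold $0$ to $-b$ merely shrinks each component radially without merging or destroying any of them.

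Finally, since each open annulus $S^k\times\Int(D^{j-k})$ deformation retracts onto its core sphere $S^k$ and the corresponding closed annulus $S^k\times D^{j-k}$ likewise deformation retracts onto $S^k$ (with an analogous statement for open versus closed disks), the inclusion $F'^{-1}((-\infty,0)) \hookrightarrow F'^{-1}((-\infty,0])$ is a homotopy equivalence. Chaining these reductions yields
\[
\beta_k(F') \;=\; \beta_k\bigl(\overline{F'^{-1}((-\infty,0))}\bigr) \;=\; \beta_k(F'^{-1}((-\infty,0))) \;=\; \beta_k(F^{-1}((-\infty,0))),
\]
and Theorem~\ref{theorem:main} then supplies the claimed values. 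I expect the most delicate step to be writing the shrinking-annulus homeomorphism out rigorously in terms of the explicit decomposition of Section~\ref{lowerbound}: one must verify that no component degenerates or splits when the threshold moves from $0$ to $-b$, which relies on the fact that the annular regions carved by $g^{(w,d)}\circ h^{(L,\mathbf{m},d)}$ come with a natural radial structure about each cutting point so that small changes in the threshold correspond to small radial shifts.
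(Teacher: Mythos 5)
Your overall route is the same as the paper's: take the network of Theorem~\ref{theorem:main}, add a small output bias $b$, argue that the open sublevel set only shrinks (each annulus and disk radially, around its cutting point), that it is homotopy equivalent to its closure, and that this closure is exactly the closed sublevel set, so the counts of Theorem~\ref{theorem:main} carry over. The paper executes precisely this plan, with the explicit choice $b=\min_{k}\tfrac{1}{8w_k M}$ and an explicit shrunk annulus $B_{(q-2b)/(4w_k M)}^k(c)\setminus \overline{B_{(q-1+2b)/(4w_k M)}^k(c)}$ replacing each original one.

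There is, however, one genuine gap in your write-up: you deduce $\overline{F'^{-1}((-\infty,0))}=F'^{-1}((-\infty,0])$ from the mere absence of full-dimensional regions on which $F'$ vanishes, via the claim that every zero of $F'$ must lie on a face separating regions of \emph{opposite} sign. That implication is false for general CPWL functions: a function can touch the value $0$ from above, e.g.\ $x\mapsto |x_1|$ has no full-dimensional zero region, yet its closed sublevel set $\{x_1=0\}$ is not the closure of its (empty) open sublevel set. So the equality of the closure with the closed sublevel set must be extracted from the explicit structure of the construction — the zero level of $F'=F+b$ is where $F=-b$, and this lies strictly inside the negative annuli, where $F$ passes through $-b$ along the radial ($1$-norm) coordinate about the cutting point — which is exactly the radial analysis you postpone to the ``delicate step''; the paper's value computation $F(A'_{k,c,q}\times\R^{d-k})=(-\tfrac{1}{4w_k}+b,0)$ together with positivity/zero elsewhere is what closes this. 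Relatedly, your choice of $b$ (``smaller than every nonzero absolute value attained on regions where $F\not\equiv 0$'') is not well defined, since on a non-constant region the attained nonzero values can accumulate at $0$; moreover, what is actually needed is that $b$ be below the depth of every annulus so that no component degenerates or disappears, which your criterion does not guarantee but the paper's explicit $b=\min_k\tfrac{1}{8w_k M}$ does. With $b$ fixed this way and the radial argument made explicit, your proof coincides with the paper's.
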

The special case $\left\lfloor\frac{w_1}{2}\right\rfloor=\ldots=\left\lfloor\frac{w_d}{2}\right\rfloor$ then corresponds precisely to Corollary~\ref{cor:main}.

As mentioned previously, the sum of Betti numbers, the notion of topological expressivity used in \citet{Bianchini2014OnTC}, does not provide us with an understanding of holes of different dimensions. On the other hand, our bounds are an extension of this result. In addition, the dimension-wise lower bound allows further implications, one of them being a lower bound on the \emph{Euler characteristic}, which is the alternating sum $\chi(X)=\sum_{k=1}^d\beta_k(X)$ of the Betti numbers. 
\begin{restatable}{cor}{eulerchar}
\label{cor:eulerchar}
    Let $A$ be the architecture as in Theorem~\ref{cor:exactformulaclosure} , then there is a ReLU NN $F \colon \R^d \mapsto \R$ with architecture $A$ such that $\chi(F^{-1}((-\infty,0]))\in \Omega\left(M^d\cdot \sum\limits_{i=1}^{d-1}w_i\right)$, where $\chi(F^{-1}((-\infty,0]))$ denotes the Euler characteristic of the space $F^{-1}((-\infty,0])$. 
\end{restatable}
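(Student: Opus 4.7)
The plan is to apply Theorem~\ref{cor:exactformulaclosure} and to observe that the zeroth Betti number of the resulting $F$ dominates every other $\beta_k$ by a full factor of $M$. Because of this scaling gap, the alternating sum $\chi = \sum_{k=0}^{d-1}(-1)^k\beta_k$ defining the Euler characteristic retains the asymptotic strength of its leading summand $\beta_0$, and no delicate parity argument or tailored choice of the $w_i$'s is necessary.

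Concretely, I would start from the crude lower bound
\[
  \chi\bigl(F^{-1}((-\infty,0])\bigr) \;\geq\; \beta_0(F) \;-\; \sum_{k=1}^{d-1}\beta_k(F),
\]
and instantiate $F$ via Theorem~\ref{cor:exactformulaclosure} for any admissible choice of $w_1,\ldots,w_{d-1}$, e.g.\ with the $w_i$'s distributed roughly equally so that $\sum_i w_i = \Theta(n_L)$. The two sides are then estimated separately using the explicit Betti number formulas provided by that theorem.

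For the positive side, the top summand in the closed form for $\beta_0(F)$ alone already yields a term of order $M^d \cdot \max_i w_i$; since $d$ is fixed, pigeonhole gives $\max_i w_i \geq \Omega(\sum_i w_i)$, hence $\beta_0(F) \in \Omega(M^d \sum_i w_i)$, which is exactly the bound underlying Corollary~\ref{cor:main}(i). For the subtracted side, the closed form $\beta_k(F) = \frac{M^{k-1}}{2^{k-1}}\bigl(\tfrac{M}{2}-1\bigr)\lceil w_{k-1}/2\rceil$ for each $0<k<d$ gives $\beta_k(F) \in O(M^k \sum_i w_i) \subseteq O(M^{d-1}\sum_i w_i)$, and summing over the at most $d-1$ indices preserves this order. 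Combining the two estimates yields $\chi \in \Omega(M^d \sum_i w_i) - O(M^{d-1}\sum_i w_i) = \Omega(M^d \sum_i w_i)$, where the last step is valid once $M$ is large enough to overpower the fixed $d$-dependent $O$-constant. The only subtle point—and hence the place to focus in the detailed proof—is this scaling gap: absent a full factor of $M$ between $\beta_0$ and the remaining $\beta_k$, the alternating signs could in principle cancel the leading contribution; present, as guaranteed by Theorem~\ref{cor:exactformulaclosure}, the Euler characteristic inherits the asymptotic order of $\beta_0$ and the claim follows.
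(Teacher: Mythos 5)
Your argument hinges on the claim that $\beta_0(F)$ dominates every other Betti number by a full factor of $M$, and that gap does not exist: it is an artifact of the off-by-one indices in the displayed formulas of Theorem~\ref{cor:exactformulaclosure} (items (i) and (ii) there are not mutually consistent, nor consistent with the proof of Theorem~\ref{theorem:main}). From the construction itself (Proposition~\ref{prop:homeo1}, Lemma~\ref{lemma:homeo}, proof of Theorem~\ref{theorem:main}) one gets $\beta_j(F)=\frac{M^{j}}{2^{j}}\bigl(\tfrac M2-1\bigr)\bigl\lceil\tfrac{w_j}{2}\bigr\rceil$ for $0<j<d$ and $\beta_0(F)=\sum_{j=1}^{d-1}\frac{M^{j}}{2^{j}}\bigl(\tfrac M2+1\bigr)\bigl\lceil\tfrac{w_j}{2}\bigr\rceil$: every $(d-1)$-annulus contributes one unit to $\beta_0$ \emph{and} one unit to $\beta_{d-1}$, so $\beta_{d-1}$ has the same order $M^{d}w_{d-1}$ as the leading summand of $\beta_0$, their ratio tending to $1$. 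Consequently your crude estimate collapses:
\[
\beta_0(F)-\sum_{k=1}^{d-1}\beta_k(F)=\sum_{j=1}^{d-1}\frac{M^{j}}{2^{j-1}}\left\lceil\frac{w_j}{2}\right\rceil\in\Theta\Bigl(\sum_{j=1}^{d-1}M^{j}w_j\Bigr),
\]
which is a full factor of $M$ short of the target $\Omega\bigl(M^{d}\sum_i w_i\bigr)$, however the $w_i$ are balanced. (A smaller slip: even for $\beta_0$ alone, only $w_{d-1}$ carries the $M^{d}$ coefficient, not $\max_i w_i$, so balancedness is needed already there.)

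The signs, and hence the parity discussion you declared unnecessary, are exactly where the content lies. With the correct formulas,
\[
\chi\bigl(F^{-1}((-\infty,0])\bigr)=\sum_{j=1}^{d-1}\frac{M^{j}}{2^{j}}\left\lceil\frac{w_j}{2}\right\rceil\Bigl[\bigl(\tfrac M2+1\bigr)+(-1)^{j}\bigl(\tfrac M2-1\bigr)\Bigr],
\]
and the bracket equals $M$ for even $j$ but only $2$ for odd $j$: the term $(-1)^{d-1}\beta_{d-1}$ reinforces $\beta_0$ when $d$ is odd, but nearly cancels its leading part when $d$ is even (for $d=2$ one gets $\chi=M\lceil w_1/2\rceil$, not $\Theta(M^2w_1)$). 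So any valid argument must keep the alternating sum and exploit that the even-$j$ terms retain the $M^{j+1}$ coefficient — this is what the paper's one-line proof (``dominated by the zeroth Betti number'', via Theorem~\ref{theorem:main}) implicitly leans on, and even that is delicate for even $d$. Your route, which replaces the alternating sum by $\beta_0-\sum_{k\ge1}\beta_k$, provably loses the decisive factor of $M$ and cannot be repaired to yield the stated bound.
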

\begin{proof}
    The Euler characteristic of a finite CW complex $X$ is given by the alternating sum of its Betti numbers, i.e., by the sum $\sum_{k\in \N}(-1)^{k}\beta_k(X)$. By Theorem~\ref{theorem:main}, this term is dominated by the zeroth Betti number, from which the claim follows.
\end{proof}
The Euler characteristic is an invariant used widely in differential geometry in addition to algebraic topology. For instance, it can also be defined by means of the index of a vector field on a compact smooth manifold.

\section{Topologically Stable ReLU Neural Networks}

In the following, we establish a sufficient criterion for the parameters of a neural network such that the topological expressivity of the corresponding neural networks is constant in an open neighbourhood of this parameter.  
The neural network constructed explicitly in Section~\ref{lowerbound} to obtain the lower bound fulfills this criterion, so that the lower bound is attained in an open subset of the parameter space.

We denote by $\Phi \colon \R^D \to C(\R^d)$ the map that assigns a vector of weights in the parameter space \mbox{$\R^D \cong \bigoplus^{L+1}_{\ell=1} \R^{(n_{\ell-1} +1) \times n_{\ell}}$} to the function computed by the ReLU neural network with this weights, i.e.,
\[\Phi(p) \coloneqq T_{L+1}(p) \circ \sigma_{n_L} \circ T_{L}(p) \circ \cdots \circ \sigma_{n_1} \circ T_1(p)\](c.f. Definition~\ref{def:realization map} in the appendix) and by  \[\Phi^{(i,\ell)}(p) \coloneqq \pi_i \circ T_{\ell}(p) \circ \cdots \circ \sigma_{n_1} \circ T_0(p),\] the map computed at the $i$-th neuron in layer $\ell$. Any neuron $(i,\ell)$ defines a hyperplane \[H_{i,\ell}(p) = \{x \in \R^{n_{\ell-1}} \mid \pi_i \circ \left(T_\ell(p)\right)(x) = 0\}\] in the output space of the previous layer. 
One can iteratively define a sequence of polyhedral complexes $\mathcal{P}^{(i,\ell)}(p)$ such that $\Phi^{(k,j)}(p)$ is affine linear on the polyhedra of $\mathcal{P}^{(i-1,\ell)}(p)$ for all $(k,j)$ lexigrophically smaller than $(i,\ell)$ by intersecting all the polyhedra in $\mathcal{P}^{(i,\ell)}(p)$ with the pullback of the hyperplane $H_{i,\ell}(p)$ to the input space i.e., $\{x \in \R^d \mid \left(\Phi^{(i,\ell)}(p)\right)(x)=0\}$ and the pullbacks of the corresponding half-spaces to the input space i.e.,  $\{x \in \R^d \mid \left(\Phi^{(i,\ell)}(p)\right)(x)\leq0\}$ and $\{x \in \R^d \mid \left(\Phi^{(i,\ell)}(p)\right)(x)\geq 0\}$ (c.f. Definition~\ref{def:canonicalPC} in the appendix). For a polytope $K$, let $\hat{K} \coloneqq \{F \mid F$ is a face of $K\}$ be the polyhedral complex consisting of the faces of $K$. 
The canonical polyhedral complex (with respect to $K$) is then defined as $\mathcal{P}(p,K) \coloneqq \mathcal{P}^{(n_L,L)}(p) \cap \hat{K}$ (c.f. \cite{benthyperplane}). Furthermore, $\Phi(p)$ is affine linear and non-positive respectively non-negative on all polyhedra of $\mathcal{P}^{(n_{L+1},1)}(p) \cap \hat{K}$, which is a refinement of the canonical polyhedral complex.

We call a neural network $\Phi(p)$ \emph{topologically stable with respect to $K$} if the pullback of the hyperplane $H_{i,\ell}(p)$ does not intersect any vertices (i.e., faces of dimension $0$) of $\mathcal{P}^{(i,\ell)}(p) \cap \hat{K}$ for all neurons $(i,\ell)$ (c.f. Definition~\ref{def:topstable} in the appendix). One can perturb the weights (and hence the hyperplanes as well as their pullbacks) of a topologically stable neural network within a small enough magnitude such that the combinatorial structure of the refinement $\mathcal{P}^{(n_{L+1},L+1)}(p) \cap \hat{K}$ is preserved.
 \begin{prop}(c.f. \Cref{prop:combstable} in the appendix).
    Let $K$ be a polytope and $\Phi(p) \colon K \to \R$ be a topologically stable (w.r.t to $K$) ReLU neural network of architecture $(n_0, \ldots n_{L+1})$. Then for every $\delta > 0$, there is an open set $U \subseteq \R^D$ such that for every $u \in U$ there is an isomorphism $\varphi_u \colon \mathcal{P}(p,K) \to \mathcal{P}(u,K)$ with $\|v - \varphi_u(v)\|_2 < \delta$ for every vertex $v \in \mathcal{P}(p,K)$.
    \end{prop}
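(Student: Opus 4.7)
The plan is to prove the proposition by induction over the neurons $(i,\ell)$ in lexicographic order, showing at each step that the combinatorial structure of $\mathcal{P}^{(i,\ell)}(p) \cap \hat{K}$ is preserved (together with a bound on vertex displacement) whenever the weights are perturbed slightly. The base case is trivial: $\hat{K}$ does not depend on the weights, so we may start with the identity on $\hat{K}$. The outer bound $\delta$ will be distributed across the finitely many inductive steps, and the final open set $U$ will be obtained as the intersection of the finitely many open sets produced along the way.

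For the inductive step, fix a neuron $(i,\ell)$ and assume by induction that for every $\delta' > 0$ we have an open neighborhood $U_{i-1,\ell}$ of $p$ and, for each $u \in U_{i-1,\ell}$, an isomorphism $\varphi^{(i-1,\ell)}_u$ of polyhedral complexes with vertex displacement at most $\delta'$. The preactivation $\Phi^{(i,\ell)}(u)$ depends continuously (in fact polynomially) on $u$ and on any input $x$; moreover, on every polyhedron $P$ of $\mathcal{P}^{(i-1,\ell)}(u) \cap \hat{K}$ it is an affine function whose coefficients are continuous in $u$. Topological stability says the hyperplane $\{\Phi^{(i,\ell)}(p) = 0\}$ avoids every vertex of $\mathcal{P}^{(i-1,\ell)}(p) \cap \hat{K}$, so each vertex $v$ has a strictly nonzero sign $\sgn\bigl(\Phi^{(i,\ell)}(p)(v)\bigr) \in \{-1,+1\}$. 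By joint continuity and the finiteness of the vertex set, there is an open neighborhood $U_{i,\ell} \subseteq U_{i-1,\ell}$ of $p$ on which the sign pattern at all perturbed vertices $\varphi^{(i-1,\ell)}_u(v)$ remains the same.

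Once the sign pattern is frozen, the combinatorial intersection of the new hyperplane with every polyhedron of $\mathcal{P}^{(i-1,\ell)}(u) \cap \hat{K}$ is determined by this pattern: on each edge, the hyperplane either cuts through the interior (same sign on each endpoint $\Rightarrow$ no cut; opposite sign $\Rightarrow$ a single transverse cut), and higher-dimensional faces inherit their subdivision from their edges. This yields a canonical bijection between the new polyhedra on the $p$- and $u$-sides, extending $\varphi^{(i-1,\ell)}_u$ to an isomorphism $\varphi^{(i,\ell)}_u$. For the displacement bound on newly created vertices, each new vertex is the unique solution of a nonsingular affine system whose coefficients depend continuously on $u$ (the affine form of $\Phi^{(i,\ell)}(u)$ on the containing polyhedron together with the supporting hyperplanes of the edge it cuts); continuity of this solution in $u$ together with a standard $\varepsilon$-$\delta$ argument lets us shrink $U_{i,\ell}$ further so that every new vertex also moves by less than $\delta'$.

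The main obstacle, and the only place where care is needed, is propagating the vertex-displacement bound through the induction in a uniform way: each step both moves the old vertices (via $\varphi^{(i-1,\ell)}_u$) and creates new ones as intersections, and the continuity moduli depend on the geometry of the current complex. I would handle this by choosing the target precision $\delta' = \delta/N$ for $N$ the total number of neurons up to $(n_{L+1}, L+1)$, then at each step shrinking $U_{i,\ell}$ to guarantee displacement at most $\delta/N$ at that step, and finally composing all these perturbations; the triangle inequality then yields the desired global bound $\|v - \varphi_u(v)\|_2 < \delta$ for every vertex of $\mathcal{P}(p,K) = \mathcal{P}^{(n_L,L)}(p) \cap \hat{K}$. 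Setting $U \coloneqq U_{n_{L+1},L+1}$ and $\varphi_u \coloneqq \varphi^{(n_L,L)}_u$ then completes the proof.
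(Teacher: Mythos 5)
Your proposal is correct in substance and follows the same skeleton as the paper's proof -- a lexicographic induction over neurons, with the base case $\hat{K}$ and with topological/combinatorial stability guaranteeing that the new points of non-linearity miss the vertices of the current complex -- but the justification of the inductive step is genuinely different. The paper factors the passage from $F \cap H^s_{R}(p)$ to $\varphi_u(F)\cap H^s_{\varphi_u(R)}(u)$ into three $\tfrac{\varepsilon}{3}$-isomorphisms (perturb the polytope, perturb the hyperplane through the earlier-layer weights, perturb it through the current neuron's weights), each supplied by the two technical perturbation lemmas (Lemmas~\ref{lemma:PolytopeHyperplane} and~\ref{lemma:PolytopePolytope}) together with explicit coefficient estimates of the form $\|(a^TM,a^Tc+b)-(a^TN,a^Td+b)\|_\infty<\delta$. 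You instead freeze the sign vector of the preactivation $\Phi^{(i,\ell)}$ at the (perturbed) vertices, using joint continuity of $(u,x)\mapsto \Phi^{(i,\ell)}(u)(x)$ and finiteness of the vertex set, and invoke the fact that for a \emph{bounded} polyhedron whose vertices avoid the cutting hyperplane, the combinatorics of the subdivision into $P\cap H^{-1},\,P\cap H^{0},\,P\cap H^{1}$ is determined by the face lattice together with the vertex sign vector. This is the one assertion in your argument that plays the role of the paper's two lemmas and would need its own short proof (by induction on the dimension of faces, exactly in the spirit of the paper's lemmas: new vertices sit on edges with endpoints of opposite sign, and a face $F$ with vertices on both sides contributes $F\cap H^0$ of dimension $\dim F-1$); it is true, and it is also where the boundedness of $K$ enters, since the extremes of an affine functional over a polytope are attained at vertices. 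Your route buys a cleaner, more conceptual step that avoids the explicit $\delta_3/C$, $\delta_5/E$ estimates, at the cost of having to argue the displacement bound for newly created vertices separately (which you sketch correctly via continuity of the edge--hyperplane intersection). One small remark: the $\delta/N$ triangle-inequality bookkeeping is harmless but not really needed, because under the extension $\varphi^{(i,\ell)}_u$ the image of a vertex is fixed at the step where that vertex is created and is never moved again; what the induction genuinely requires is only the ``for every $\delta'$ there is a neighborhood'' form of the hypothesis, which you do state.
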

    Since the isomorphisms extend to the refinements $\mathcal{P}^{(n_{L+1},1)}(p) \cap \hat{K}$ and $\mathcal{P}^{(n_{L+1},1)}(u) \cap \hat{K}$ and an isomorphism of polyhedral complexes $\varphi \colon \mathcal{P} \to \mathcal{Q}$ yield a PL-homeomorphism between the respective supports $|\varphi| \colon |\mathcal{P}| \to  |\mathcal{Q}|$, we obtain the following result. 
\begin{restatable}{prop}{topstable}
    \label{prop:topstable}
        Let $K$ be a polytope and $\Phi(p)$ a topologically stable ReLU neural network with respect to $K$, then there is a \mbox{$\delta >0$} such that for all $u \in B_\delta(p)$ it holds that $K \cap \Phi(p)^{-1}((-\infty,0])$ is homeomorphic to $K \cap \Phi(u)^{-1}((-\infty,0])$.
    \end{restatable}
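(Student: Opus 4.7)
The plan is to upgrade the combinatorial-stability proposition cited immediately before the statement from the canonical complex $\mathcal{P}(p,K)$ to the finer complex $\mathcal{P}^{(n_{L+1},1)}(p) \cap \hat{K}$ that additionally records the cut coming from the output hyperplane, and then to translate the resulting combinatorial isomorphism into a homeomorphism of sublevel sets. First, I would observe that topological stability, as defined in the appendix, asks the pullback of $H_{i,\ell}(p)$ to avoid the vertices of the polyhedral complex built so far for \emph{every} neuron $(i,\ell)$, including the single neuron of layer $L+1$. Applying the combinatorial-stability result to this enlarged collection of neurons yields, for a sufficiently small $\delta_0 > 0$, a combinatorial isomorphism
\[
\varphi_u \colon \mathcal{P}^{(n_{L+1},1)}(p) \cap \hat{K} \;\longrightarrow\; \mathcal{P}^{(n_{L+1},1)}(u) \cap \hat{K}
\]
for each $u \in B_{\delta_0}(p)$, moving every vertex by less than any prescribed amount, and hence a PL-homeomorphism $|\varphi_u| \colon K \to K$ via the remark that an isomorphism of polyhedral complexes extends linearly to its support.

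Second, I would exploit the fact that this refinement includes the pullback of the output hyperplane as a subcomplex, so on each polyhedron $C$ of $\mathcal{P}^{(n_{L+1},1)}(p) \cap \hat{K}$ the function $\Phi(p)$ is affine and has constant sign. Consequently, $K \cap \Phi(p)^{-1}((-\infty,0])$ is exactly the union of those cells $C$ on which $\Phi(p)$ is nonpositive, and analogously for $u$. It then suffices to ensure that, for $u$ close enough to $p$, $\Phi(p)$ and $\Phi(u)$ have matching signs on corresponding cells $C$ and $\varphi_u(C)$. For this I would pick, for each top-dimensional cell $C$, an interior point $x_C$ with $\Phi(p)(x_C) \neq 0$, and observe that $|\varphi_u|(x_C) \in \mathrm{int}(\varphi_u(C))$ depends continuously on $u$ (since the vertex positions do). By joint continuity of the realization map in the weights and the input, $\Phi(u)(|\varphi_u|(x_C))$ tends to $\Phi(p)(x_C)$ as $u \to p$, so a neighborhood of $p$ suffices for the signs to agree on $C$; as the complex is finite, a common $\delta \in (0,\delta_0]$ works simultaneously for all top-dimensional cells.

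The conclusion is that for every $u \in B_\delta(p)$, $|\varphi_u|$ restricts to a homeomorphism between $K \cap \Phi(p)^{-1}((-\infty,0])$ and $K \cap \Phi(u)^{-1}((-\infty,0])$. The main technical obstacle is the sign-tracking step, which is what forces one to work with the refinement through the output layer rather than merely the canonical complex $\mathcal{P}(p,K)$: without including the output cut as part of the complex, a single cell could straddle the zero set of $\Phi(p)$ and its image under $\varphi_u$ could straddle the zero set of $\Phi(u)$ in an incompatible way. Once the refined complex is in place, the remainder reduces cleanly to the combinatorial-stability proposition together with the standard PL-extension of complex isomorphisms to their supports.
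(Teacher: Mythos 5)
Your proposal is correct and takes essentially the same route as the paper: there, too, the combinatorial-stability argument of Proposition~\ref{prop:combstable} is extended through the output-layer hyperplane ("analogously"), and the resulting isomorphism of polyhedral complexes is converted into a homeomorphism of supports, except that the paper works directly with the nonpositive subcomplex $\Po^-(p,K)=\{P\cap \Phi(p)^{-1}((-\infty,0])\mid P\in\Po(p,K)\}$, whose support is exactly $K\cap\Phi(p)^{-1}((-\infty,0])$. Your separate sign-tracking step via interior evaluation points is valid but not needed in the paper's version, since the inductive isomorphism maps $F\cap H^{s}_{1,L+1,R}(p)$ to $\varphi_u(F)\cap H^{s}_{1,L+1,\varphi_u(R)}(u)$ with the same label $s$, so nonpositive cells correspond to nonpositive cells by construction.
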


Lastly, since our construction for the lower bound yield topologically stable ReLU NN with respect to the unit cube, Proposition~\ref{prop:topstable} implies that our results are stable with respect to small perturbations.

\begin{restatable}{prop}{robust}
\label{prop:robust_to_pertubation}
    There is an open set $U \subseteq \R^D$ in the parameter space of the architecture {$A=(d,n_1,\ldots,n_L,1)$} such that $\Phi(u)$ restricted to the unit cube has at least the same topological expressivity as $F$ in Theorem~\ref{cor:exactformulaclosure} for all $u \in U.$ 
\end{restatable}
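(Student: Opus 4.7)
The plan is to verify that the network $F$ constructed in Theorem~\ref{cor:exactformulaclosure} is topologically stable with respect to the unit cube $K=[0,1]^d$, after at most a small generic adjustment of its weights, and then to invoke Proposition~\ref{prop:topstable}. Write $p\in\R^D$ for the weight vector realizing $F$. Topological stability demands that, for every neuron $(i,\ell)$, the pullback of the hyperplane $H_{i,\ell}(p)$ avoids every vertex of the intermediate polyhedral complex $\mathcal{P}^{(i,\ell)}(p)\cap\hat{K}$.

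To check this, I would proceed layer-by-layer along the explicit decomposition $F=f^{(w_1,\ldots,w_{d-1})}\circ h^{(L,\mathbf{m},d)}$. Every neuron of the folding subnetwork $h^{(L,\mathbf{m},d)}$ contributes an axis-aligned hyperplane of the form $\{x_j=k/M'\}$ with $M'\mid M$, so all vertices of the intermediate complexes lie on a rational lattice inside $[0,1]^d$. The cutting subnetwork $g^{(w_{k-1},k)}$, by contrast, produces hyperplanes of the form $\mathbf{1}^T t(p_k(x))=(2q-1)/(8w_{k-1})$, whose normals are not axis-aligned. By perturbing the diagonal offsets $(2q-1)/(8w_{k-1})$ together with the output shift $b$ introduced at the end of the proof of Theorem~\ref{cor:exactformulaclosure} by sufficiently small, algebraically generic amounts, one obtains an admissible $\tilde{F}\in\mathcal{F}_A$ whose cutting hyperplanes and their pullbacks under the folding map all miss the entire lattice of folding vertices. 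Since the annulus structure of the sublevel set is robust against such perturbations, $\tilde{F}$ still realizes the Betti numbers of Theorem~\ref{cor:exactformulaclosure}, and it is now topologically stable.

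With stability established, Proposition~\ref{prop:topstable} yields a $\delta>0$ such that for every $u$ in the open ball $U\subseteq\R^D$ of radius $\delta$ around the weight vector of $\tilde{F}$, the set $[0,1]^d\cap\Phi(u)^{-1}((-\infty,0])$ is homeomorphic to $[0,1]^d\cap\tilde{F}^{-1}((-\infty,0])$. By construction the entire negative region of $\tilde{F}$ lies inside the unit cube: the folding map takes values in $[0,1]^d$, the cutting function $g^{(w,d)}$ vanishes outside a small neighbourhood of $(1,\ldots,1,0)$, and the output shift pushes any remaining boundary ambiguities to positive values. Hence the intersection with $[0,1]^d$ already carries the Betti numbers computed in Theorem~\ref{cor:exactformulaclosure}. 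Because homeomorphism preserves Betti numbers, every $u\in U$ gives rise to a restriction $\Phi(u)|_{[0,1]^d}$ whose decision region has at least the same topological expressivity as $F$.

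The main obstacle is the combinatorial verification of topological stability: one has to keep track of every hyperplane produced at every neuron of every layer and ensure that no vertex of the polyhedral complex built by the earlier neurons ever lies on a later hyperplane. Conceptually this is a genericity argument, and the requisite perturbations of the offsets exist because the annulus structure underlying Theorem~\ref{cor:exactformulaclosure} is stable under sufficiently small changes; the technical labor is in carrying out this bookkeeping across the $L$ folding layers and the single cutting layer for arbitrary $d$ and $\mathbf{m}$, and in certifying that the perturbations do not affect the homeomorphism type of the sublevel set.
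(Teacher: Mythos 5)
Your overall route is the same as the paper's: take the weight vector $p$ realizing the network of Theorem~\ref{cor:exactformulaclosure}, establish topological stability with respect to the unit cube, and conclude with Proposition~\ref{prop:topstable}; the paper's proof is exactly this two-step argument (it asserts stability of the construction and applies Proposition~\ref{prop:topstable}). The genuine gap is in the stability step, which you both defer to "technical labor" and, more importantly, mis-locate. Under Definitions~\ref{def:combstable} and~\ref{def:topstable}, the vertex coincidences of the constructed network are not caused by the diagonal cutting hyperplanes meeting a folding lattice; they arise inside the folding subnetwork itself and at the $q=0$ cutting neuron, none of which your perturbation touches. Concretely: the first-layer neurons $v_{0,j}(x)=\max\{0,m_1x_j\}$ have breakpoint hyperplane $\{x_j=0\}$, which contains $2^{d-1}$ vertices of $\hat{K}$; at every deeper folding layer, the breakpoint $\{y_j=0\}$ of the corresponding neuron pulls back, over each full-dimensional cell $W_{(i_1,\ldots,i_d)}$ of the grid built by the earlier folding layers, to a facet hyperplane of that very cell, i.e.\ onto a hyperplane already carrying grid vertices of $\mathcal{P}^{(i,\ell)}(p)\cap\hat{K}$; and the breakpoint of $\hat{g}_0$ (offset $0$) pulls back to hyperplanes passing through the cutting points, which are themselves vertices of the folding complex. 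Hence the network $\tilde{F}$ you obtain by perturbing only the cutting offsets and the output shift $b$ is still not topologically stable, and Proposition~\ref{prop:topstable} cannot be invoked for it; any repair must also move (at least the biases of) these folding neurons and the $q=0$ cut, or argue directly that these degenerate coincidences are harmless.

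A second, smaller issue: your claim that the perturbed $\tilde{F}$ "still realizes the Betti numbers of Theorem~\ref{cor:exactformulaclosure}" is justified only by appealing to robustness of the annulus structure, which is essentially the statement being proven, so as written this step is circular. What is needed (and available) is a direct re-run of the counting: Lemmas~\ref{lemma:mirrored_cuts}, \ref{lemma:onionrings} and \ref{lemma:homeo} and the closure argument in Theorem~\ref{cor:exactformulaclosure} only use that the cut radii are pairwise distinct and small relative to $\frac{1}{4M}$, so slightly perturbed offsets and biases produce the same numbers of annuli and disks --- but this has to be checked rather than inferred from robustness. With these two repairs (perturbing the right neurons and re-verifying the Betti numbers of the perturbed network) your plan reduces to Proposition~\ref{prop:topstable} exactly as the paper's proof does; as it stands, however, the stability verification --- which is the entire content of the proposition beyond Proposition~\ref{prop:topstable} --- is missing, and the sketch of it that you do give would not go through.
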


\section{Upper Bound}
\label{Upperbound}
In this section we derive an upper bound for $\beta_k(F)$ for a ReLU neural network $F\colon \R^d \to \R$ for all $k \in [d-1]$, showing that they are polynomially bounded in the width using an upper bound on the linear regions of $F$. A linear region $R$ of $F$ contains at most one maximal convex polyhedral subspace where $F$ takes on exclusively non-negative function values. Intuitively, every such polyhedral subspace can be in the interior of at most one $d$-dimensional hole of the sublevel set $F^{-1}((- \infty,0])$ and thus the number of linear regions is an upper bound for $\beta_{d-1}(F)$. In the following proposition we will formalize this intuition and generalize it to $\beta_{k}(F)$ for all $k \in [d-1]_0$.

\begin{restatable}{prop}{upperbound}
\label{prop:upperbound}
     Let $F \colon \R^d \to \R$ be a neural network of architecture $(d,n_1,\ldots,n_L,1)$. Then it holds that $\beta_0(F) \leq \sum_{(j_1,\ldots,j_{L})\in J} \quad\prod_{\ell=1}^{L} \binom{n_\ell}{j_\ell}$ and for all $k \in [d-1]$ that
     \[\beta_k(F) \leq \binom{\sum_{(j_1,\ldots,j_{L})\in J} \quad\prod_{\ell=1}^{L} \binom{n_\ell}{j_\ell}}{d-k-s},
     \]  where
    \mbox{$J =\left \{(j_1,\ldots,j_L) \in \mathbb{Z}^L \colon 0 \leq j_\ell \leq \min\{d,n_1-j_1,\ldots,n_{\ell-1}-j_{\ell-1}\} \text{ for all } \ell =1,\ldots,L\right\}$} and $s \in [d]$ is the dimension of the lineality space of a refinement of the canonical polyhedral complex of $F$.
\end{restatable}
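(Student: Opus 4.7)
The plan is to combine the Serra et al.\ bound on the number of linear regions with an Alexander-duality argument applied to the complement of the sublevel set. Since $F$ is continuous and piecewise affine, the sublevel set $S\coloneqq F^{-1}((-\infty,0])$ decomposes as $S=\bigcup_R (R\cap S)$, where $R$ ranges over the linear regions of $F$. Each piece $R\cap S=R\cap\{F\leq 0\}$ is the intersection of a convex polyhedron with an affine half-space, hence again a convex polyhedron. Thus $S$ is a union of at most $N\coloneqq\sum_{(j_1,\ldots,j_L)\in J}\prod_{\ell=1}^L\binom{n_\ell}{j_\ell}$ closed convex polyhedra, using the linear-region bound cited in the paper. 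Since each piece is convex and therefore connected, this directly yields the bound $\beta_0(F)\leq N$.

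For $k\in[d-1]$ I would invoke Alexander duality in the one-point compactification $S^d=\R^d\cup\{\infty\}$: for the closure $\hat S$ of $S$ in $S^d$ one has $\tilde H_k(\hat S)\cong \tilde H^{d-k-1}(S^d\setminus\hat S)$. The complement $S^d\setminus\hat S$ is (up to the point at infinity) the superlevel set $U=F^{-1}((0,\infty))$, which itself is a union of at most $N$ convex open pieces, one per linear region whose interior meets $\{F>0\}$. After slightly thickening these pieces into an open cover whose finite intersections are convex (and therefore contractible or empty), the Nerve theorem provides a homotopy equivalence between the complement and a simplicial complex $\mathcal N$ on at most $N$ vertices. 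Since any simplicial complex on $N$ vertices has at most $\binom{N}{d-k}$ faces of dimension $d-k-1$, one concludes
\[
\beta^{d-k-1}(\mathcal N)\leq f_{d-k-1}(\mathcal N)\leq \binom{N}{d-k},
\]
yielding $\beta_k(F)\leq\binom{N}{d-k}$ in the case of trivial lineality.

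To incorporate the lineality parameter $s$, I would use that an $s$-dimensional lineality space $L$ of a refinement of the canonical polyhedral complex forces $F$ to be translation-invariant along $L$: on each top-dimensional cell $F$ is affine and constant along $L$, and continuity extends this globally. Hence $F$ factors through $\R^d/L\cong\R^{d-s}$, and $S=L\times S'$ for some sublevel set $S'\subset\R^{d-s}$ of the factored network. Since $L$ is contractible, $\beta_k(S)=\beta_k(S')$, and applying the Alexander-duality argument inside $\R^{d-s}$ upgrades the bound to $\beta_k(F)\leq\binom{N}{d-s-k}$, which is the claim.

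The main obstacle is the nerve step: the naive open cover of $U$ by the interiors of the linear regions is pairwise disjoint and hence only detects connected components. I would remedy this by replacing each convex piece by a small $\varepsilon$-neighborhood of its closure in $U$, ensuring adjacent pieces overlap while keeping every finite intersection convex, or equivalently by invoking a polyhedral (closed-cover) version of the nerve theorem applied directly to $\{R\cap\hat S\}_R$. A secondary concern is treating the behaviour at infinity in $S^d$ carefully, so that no Betti number is lost when passing from $S$ to $\hat S$ and Alexander duality can be applied cleanly.
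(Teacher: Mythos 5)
Your bound on $\beta_0$ is correct and coincides with the paper's argument, and your route for $k\in[d-1]$ (Alexander duality in $S^d$ plus a nerve-theorem computation for the superlevel set) is genuinely different from the paper's, which never invokes duality: the paper retracts the polyhedral complexes $\Po^-$ and $\Po^+$ onto a bounded complex (quotienting the lineality space and deleting unbounded maximal faces), endows it with a CW structure, and proves a cell-deletion lemma stating that for a subcomplex $X$ of a contractible bounded complex $C$ one has $\beta_k(X)\leq \#\{(k+1)\text{-cells of }C\setminus X\}$. However, as written your argument has two gaps that are not merely technical. First, the nerve step does not go through with any of the covers you propose: the pieces $R\cap U$ coming from distinct linear regions have disjoint interiors, so the naive nerve only sees $\beta_0$; thickening each piece \emph{inside} $U$ means intersecting an $\varepsilon$-neighbourhood with the non-convex set $U$, so finite intersections are no longer convex and their contractibility is exactly what would have to be proved; thickening in $\R^d$ instead produces a cover of a strictly larger set than $U$; and the closed pieces $R\cap F^{-1}([0,\infty))$ are not contained in $S^d\setminus\hat S$ (they meet the zero set, which lies in $\hat S$), so a closed-cover nerve theorem does not apply to them either.

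Second, and more seriously, Alexander duality bounds $\tilde H_k(\hat S)$ for the compactified set $\hat S$, and this does \emph{not} bound $\beta_k(S)$: one-point compactification can strictly destroy homology. For instance, take $F(x)=1-\max_i|x_i|$ (a CPWL function, hence realizable by a ReLU network); then $S=F^{-1}((-\infty,0])$ is the complement of an open cube, with $\beta_{d-1}(S)=1$, while $\hat S\subset S^d$ is a closed ball, so $\tilde H_{d-1}(\hat S)=0$ and the dual group $\tilde H^{0}(S^d\setminus\hat S)$ of the open cube also vanishes. So the quantity your duality argument controls simply misses classes of $S$; what you call a secondary concern is in fact where the proof breaks, and repairing it essentially forces you to first replace $S$ by a homotopy-equivalent \emph{compact} polyhedral complex and control the complement after truncation, which is precisely the content of the paper's Lemmas on the lineality space and on deleting unbounded faces. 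A smaller issue: your claim that an $s$-dimensional lineality space forces $F$ to be constant along it is asserted rather than proved; it is true for the refinement that includes the zero set (on an $L$-invariant cell $F$ is affine and of constant sign, so its linear part vanishes on $L$), but it fails for the unrefined canonical complex (e.g.\ $F(x)=x_1$), so the justification is needed.
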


\renewcommand*{\proofname}{Proof Sketch}
\begin{proof}
    Theorem 1 in \citep{Bounding_Serra} states that $F$ has at most \mbox{$r \coloneqq \sum_{(j_1,\ldots,j_{L})\in J} \prod_{l=1}^{L} \binom{n_l}{j_l}$} linear regions. In Section~\ref{upperbound} we will provide a formal proof for the statement that we sketch here.
    Let $\Po$ be the canonical polyhedral complex of $F$, i.e, $F$ is affine linear on all polyhedra in $\Po$ (c.f Definition~\ref{def:canonicalPC} in the appendix) and $\Po^-$ be a subcomplex of a refinement of $\Po$ such that $F$ takes on exclusively non-positive values on all polyhedra in $\Po^-$. Therefore, the support $|\Po^-|$ of $\Po^-$ equals $F^{-1}((-\infty,0])$ and we then proceed by showing the chain of inequalities \[\beta_k(F) = \beta_k(|\Po^-|) \leq \#\Po_{k+1-\ell} \leq \binom{r}{d-k-\ell}\] using cellular homology and polyhedral geometry, where $\Po_{k+1}\subseteq \Po$ is the set of $(k+1)$-dimensional polyhedra in $\Po$. This concludes the proof, since it also holds that $\beta_0(|\Po^-|) \leq \#\Po_d = r$.

\end{proof}
\renewcommand*{\proofname}{Proof}

This implies that we obtain an upper bound that is polynomial in the width:
\corupperbound*
\section{Extensions of the Results}
So far, we have demonstrated an exponential gap for the Betti numbers of decision regions computable by deep and shallow networks, where our lower bound was derived using a special construction. However, this construction can be adapted to extend our results to a broader class of classification tasks and neural network architectures.
\paragraph*{Multi-categorical Classification}
      The 
      bounds for binary classification can be achieved for multi-categorical classification as well: Assume that we have $k$ classes that are classified by the intervals $(-\frac{p}{k},-\frac{p-1}{k}]$ for $p \in [k]$. In our constructions, this corresponds to subdividing every annulus into $2k$ annuli (two for each class) and therefore it holds that $2\beta_j(F^{-1}((-\infty,0]) = \beta_j(F^{-1}((-\frac{p}{k},-\frac{p-1}{k}])$ for all $p \in [k]$. To have decision intervals of the form $(0,1],(1,2],\ldots, (k-1,k]$, one can scale this accordingly. Hence, if we have $k$ classes, the Betti numbers of the decision regions of all these classes can be exponentially high simultaneously for deep neural networks. The upper bounds can be achieved in the same way as for the binary case.
\paragraph{Recurrent Neural Networks}
      For $\mathbf{m} = (m, \ldots, m)$, our construction 
      can be seen as a recurrent neural network (RNN) followed by a dense layer: Since we concatenate the same layer map (the folding map $h^{(1,m,d)}$) $L$ times, this can be seen as an RNN and the last layer map (the cutting map $f$) as the dense layer. 
      
      If we restrict ourselves to consider RNNs without an additional dense layer, one can define the two-hidden-layer NN $F^{(w_1,\ldots, w_{d-1},m)} = (\lambda \cdot f^{(w_1,\ldots, w_{d-1})}) \circ h^{(1,m,d)}$, where $\lambda = d \cdot \max_{x \in [0,1]^d} f(x)$. Then, for every cutting point $c$, there exist $2^d$ many cubes who share the vertex $c$ and are mapped back to the unit cube under the map $F^{(w_1,\ldots, w_{d-1},m)}$.
      Therefore, after every subsequent application of $F^{(w_1,\ldots, w_{d-1},m)}$, this behaviour is replicated inside every such cube. Furthermore, there are \mbox{$\Omega(m^d\cdot w_{k-1})$} many $k$-annuli inside every such cube, resulting in $\Omega(m^{dL}\cdot w_{k-1})$ many $k$-annuli after $L$ applications of $F^{(w_1,\ldots, w_{d-1},m)}$.
      Since the upper bounds apply to RNNs regarded as unfolded DNNs as well, we obtain the same exponential gap.

\section{Conclusion, Limitations and Outlook}
Since it is widely accepted that data sets often have nontrivial topologies, investigating a neural network's ability to capture topological properties, as characterized by all Betti numbers, is an exciting and essential question that yields insight into the nature of ReLU networks. In an attempt to shed light on this question, we have proven lower and upper bounds for the topological expressivity of ReLU neural networks with a given architecture. 
Our bounds give a rough estimate on how the architecture needs to be in order to be at least theoretically able to capture the topological complexity of the data set.

Even though our lower bounds apply under certain restrictions of neural network architecture, this does not pose a big limitation for our purposes. Since our results are of a theoretical and mostly asymptotic nature, a constant factor (in the hidden layers resp. in the last hidden layer) is negligible. Besides, since our layers merely consists of many small layers put in parallel, one could also concatenate the layers in order to achieve a smaller width maintaining all the asymptotic results.

As a byproduct of our analysis we have seen that two hidden layers are sufficient to increase the topological expressivity as much as we want at the expense of an increased width. 
Although there are finer topological invariants such as cohomology rings or homotopy groups; from a computational point of view, Betti numbers are a good trade-off between the ability to capture differences of spaces and tractability. Nevertheless, it might be interesting to employ further topological or geometrical invariants to investigate the expressivity of neural networks in the setting of classification tasks. 

Besides, there is no CPWL function known for which the non-existence of a two hidden layer ReLU neural network computing this function has been proven \citep{hertrich2021towards,haase2023lower}.
Therefore, a further research goal might also be to find topological spaces for which one can show that two hidden layers are not sufficient to have them as a decision region.

It seems straightforward that the construction in Section \ref{lowerbound} can be adapted to neural networks with sigmoidal activation functions in a ``smoothed'' way. Therefore we conjecture that the same lower bound holds for the topological expressivity of neural networks with sigmoidal activation function, which would generalise the lower bound for the zeroth Betti number given in \citet{Bianchini2014OnTC} to all Betti numbers.

Another interesting follow-up research topic would be to investigate the distribution of Betti numbers over the parameter space of the neural network. 
To that end, it is of interest to understand the realization map extensively. Some work in this direction has been done by \cite{grigsby2022functional}.

\acks{The authors would like to thank Christoph Hertrich and Martin Skutella for many valuable discussions and their careful proofreading. Moreover, we thank the various anonymous referees for their remarks. Moritz Grillo and Ekin Ergen are supported by the Deutsche Forschungsgemeinschaft (DFG, German Research Foundation) under Germany's Excellence Strategy --- The Berlin Mathematics Research Center MATH+ (EXC-2046/1, project ID: 390685689).}
\bibliography{bibliography.bib}
\newpage
\appendix\section{Mathematical Background}
\subsection{Polyhedral Geometry}
In this section we recall the definition of a polyhedral complex and introduce some notation related to them. For an introduction to polyhedra, we refer to \citet{Schrijver1986TheoryOL}. Furthermore, we prove two lemmata that we apply in Section~\ref{stabilitysection}.
\label{subsubsection:polyhedra}

    \begin{def1}[Polyhedral complex]
    \label{def:polyhedralComplex}
    A collection of polyhedra $\Po$ is called a \emph{polyhedral complex} if \begin{enumerate}
        \item Every face $F$ of any polyhedra $P \in  \Po$ is also in $\Po$ and
        \item it holds that $P \cap Q \in \Po$ for all $P,Q \in \Po$.
    \end{enumerate}
    There is a poset structure given on $\Po$ by $Q \preceq P :\iff Q$ is a face of $P$ and we call $(\Po, \preceq)$ the \emph{face poset} of the polyhedral complex. Furthermore we denote 
        $\Po_k$ the set of $k$-dimensional polyhedra in $\Po$ and by $\sk_k(\Po)$ the $k$-skeleton of $\Po$. Note that for any polyhedron, the set of all its faces forms a polyhedral complex.
    \end{def1}
     \begin{def1}[Isomorphisms of polyhedral complexes and polytopes]
     Let $\mathcal{P}$ and $\mathcal{Q}$ be polyhedral complexes. A map $\varphi \colon \mathcal{P} \to \mathcal{Q}$ is called an isomorphism if it is an isomorphism of the face posets of $\mathcal{P}$ and $\mathcal{Q}$ and it holds that $\dim(\varphi(P)) = \dim(P)$ for all $P \in \mathcal{P}.$
     
     If $P$ and $Q$ are polytopes we call a map $\varphi \colon P \to Q$ an isomorphism if it is an isomorphism when considering $P$ and $Q$ as polyhedral complexes. 
     \end{def1}
    \begin{def1}
    We call $\varphi \colon \mathcal{P} \to \mathcal{Q}$ an \emph{$\varepsilon$-isomorphism} if 
     it is an isomorphism (of polyhedral complexes) and it holds that $\|\varphi(v)- v\|_2 < \varepsilon$ for all $v \in \ \mathcal{P}_0 $.  
     \end{def1}
    \begin{def1}
    Let $x \mapsto a^Tx +b$ be an affine linear map and $H(a,b) \coloneqq \{x \in \R^d \mid a^Tx + b = 0\}$ the hyperplane given by the kernel. Then we denote the corresponding half-spaces by \[H^1(a,b) \coloneqq \{x \in \R^d \mid a^Tx \geq b\},\] \[H^{-1}(a,b) \coloneqq \{x \in \R^d \mid a^Tx \leq b\}.\] We will also use the notation $H^0(a,b) \coloneqq H(a,b)$. We will simply write $H^s$ for $H^s(a,b)$ whenever $a$ and $b$ are clear from the context.
\end{def1}
    \begin{lemma}
    \label{lemma:PolytopeHyperplane}
        Let $P \subseteq \R^d$ be a polytope, $a\in \R^d$ and  $b \in \R$ such that $P_0 \cap  H(a,b) = \emptyset$. Then for all $\varepsilon > 0$, there is a $\delta > 0$ such that for all $(a',b') \in B^{d+1}_\delta((a,b))$ there are $\varepsilon$-isomorphisms \[\psi^s \colon P \cap H^s(a,b) \to P \cap H^s(a',b')\] for $s \in \{-1,0,1\}$, and it holds that $P_0 \cap  H(a',b') = \emptyset$.
    \end{lemma}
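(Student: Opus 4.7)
The key observation is that the combinatorial type of a polytope cut by a hyperplane depends only on the sign pattern of its vertices with respect to that hyperplane, and this sign pattern is preserved under sufficiently small perturbations of the hyperplane because $H(a,b)$ stays uniformly away from $P_0$. The plan is to first set up a quantitative separation, then bound the motion of the new cut-vertices, and finally read off the isomorphism from the combinatorial data.

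First I would perform a separation argument. Since $P_0$ is finite and disjoint from $H(a,b)$, I can pick $\eta>0$ with $|a^T v - b|>\eta$ for every $v\in P_0$. Continuity of $(a',b')\mapsto {a'}^T v - b'$ then yields $\delta_1>0$ such that for all $(a',b')\in B^{d+1}_{\delta_1}((a,b))$ and all $v\in P_0$, the quantities $a^T v - b$ and ${a'}^T v - b'$ agree in sign and are bounded away from $0$ by $\eta/2$. This already proves $P_0\cap H(a',b')=\emptyset$ and shows that for each $v\in P_0$ and each $s\in\{-1,1\}$, one has $v\in H^s(a,b)$ iff $v\in H^s(a',b')$.

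Next I would bound the motion of the vertices of the cut that do lie on the hyperplane. For each edge $e=[u,v]$ of $P$ whose endpoints sit on opposite sides of $H(a,b)$, the unique intersection point of $e$ with $H(a',b')$ is
\[
x_e(a',b')=u+\frac{b'-{a'}^T u}{{a'}^T(v-u)}\,(v-u),
\]
whose denominator stays bounded away from $0$ on a neighborhood of $(a,b)$ by Step~1, so $x_e$ is continuous at $(a,b)$. Since $P$ has finitely many edges, I can shrink $\delta\le\delta_1$ so that $\|x_e(a',b')-x_e(a,b)\|_2<\varepsilon$ for every such edge $e$ and every $(a',b')\in B^{d+1}_\delta((a,b))$.

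Finally I would construct the $\varepsilon$-isomorphisms $\psi^s$ for $s\in\{-1,0,1\}$ from the combinatorial data. Every face of $P\cap H^s(a,b)$ has the form $F\cap H^s(a,b)$ for some face $F$ of $P$, and its nonemptiness, dimension, and face-subface relations are combinatorially determined by the partition of the vertices of $F$ into $H^{+1}(a,b)$, $H^{-1}(a,b)$, and $H(a,b)$. Since Step~1 preserves this partition, the assignment $F\cap H^s(a,b)\mapsto F\cap H^s(a',b')$ yields a dimension-preserving isomorphism of face posets. Its effect on vertices is to fix the vertices of $P$ lying in $H^s$ (zero movement) and to send each cut-vertex $x_e(a,b)$ on the hyperplane to $x_e(a',b')$ (movement below $\varepsilon$ by Step~2), so $\psi^s$ is indeed an $\varepsilon$-isomorphism. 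The main obstacle is making this last step rigorous: one has to verify that $F\mapsto F\cap H^s(a,b)$ really induces a well-defined bijection on face lattices (injectivity, matching dimensions, inclusion relations), which is a standard but careful piece of polyhedral combinatorics resting exactly on the sign-pattern data that Step~1 preserves.
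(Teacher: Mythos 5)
Your proposal is correct and follows essentially the same route as the paper's proof: control the edge--hyperplane intersection points by continuity (your explicit formula for $x_e$ versus the paper's local continuity of $(c,d)\mapsto H(c,d)\cap \R_e$), note that vertices of $P$ keep their sides so no new edges are cut, and then transport the face poset via $F\cap H^s(a,b)\mapsto F\cap H^s(a',b')$, with fixed original vertices and cut-vertices moving less than $\varepsilon$. The combinatorial step you flag as needing care is handled in the paper by an equally brief induction on the dimension of the faces, so your level of detail matches the original argument.
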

    \begin{proof}
    Let $e \in P_1$ and $\R_e \coloneqq \aff(e)$ be the affine space spanned by $e$. First, assume that \[\R_e \cap H(a,b) \neq \emptyset.\] Since $H(a,b) \cap P_0 = \emptyset$ we know that $\R_e \cap H(a,b) = \{v_{e}^{(a,b)}\}$ with $v_{e}^{(a,b)} \in e^\circ$ or $v_{e}^{(a,b)}\in \R_e \setminus e$, where $e^\circ$ denotes the relative interior of $e$. Let  \[\varepsilon_e \coloneqq \begin{cases}
      \min \{\varepsilon, \frac{1}{2}\inf_{y \in e^\circ} \|y-v_{e}^{(a,b)}\|_\infty\}   &  v_{e}^{(a,b)} \in e^\circ\\
       \min \{\varepsilon, \frac{1}{2}\inf_{y \in \R_e \setminus e} \|y-v_{e}^{(a,b)}\|_{\infty}\}   &  v_{e}^{(a,b)} \in \R_e \setminus e 
    \end{cases}\]

    It is easily verified that the map $(c,d) \mapsto H(c,d) \cap \R_e$ is locally continuous around $(a,b)$ and hence there is a $\delta_e > 0$ such that $\|(a,b) - (a',b') \| < \delta_e$ implies that $\|v_{e}^{(a,b)} -v_{e}^{(a',b')}\|_\infty < \varepsilon_e$ for all $e \in P_1$. On the other hand, if $\R_e \cap H(a,b)  = \emptyset$, then there is a $\delta_e > 0$ such that $e^\circ \cap H(a',b') = \emptyset$. Let $\delta \coloneqq \min_{e \in P_1} \delta_e$.  Note that 
    \[(P \cap H(a',b'))_0 = \{v_{e}^{(a',b')} \mid v_{e}^{(a',b')} \in e^\circ\}\]  and hence $f(v_{e}^{(a,b)}) \coloneqq v_{e}^{(a',b')}$ defines a bijection \[f \colon (P \cap H)_0 \to (P \cap H')_0\] for $(a',b') \in B^{d+1}_\delta((a,b))$. Let $F$ be a face of $P \cap H(a,b)$, then $F = F' \cap H(a,b)$ for some face $F'$ of $P$ and furthermore $F = \conv(\{v_{e}^{(a,b)} \cap e \mid e \preceq F\}$. It now easily follows by induction on the dimension of the face $F$ that $F$ is isomorphic to $\conv(\{v_{e}^{(a',b')} \cap e \mid e \preceq F\}$ and therefore in particular that $P \cap H(a,b)$ is isomorphic to $P \cap H(a',b')$. We can extend $f$ to a bijection $f \colon (P \cap H^s(a,b))_0 \to (P \cap H^s(a',b'))_0$ by $f(v) = v$ for all $v \in P_0 \cap H^s(a,b)$ and by the same arguments we obtain that $P \cap H^s(a,b)$ is isomorphic to $P \cap H^s(a',b')$ for $s\in \{-1,1\}$.
    \end{proof}

    \begin{lemma}
    \label{lemma:PolytopePolytope}
        Let $P\subseteq \R^d$ be a polytope and let $H = \{x \in \R^d \mid a^Tx = b\}$ be a hyperplane. If $P_0 \cap  H = \emptyset$ then for all $\varepsilon > 0$ there is a $\delta > 0$ such that for all polytopes $Q \subseteq \R^d$ with  $\delta$-isomorphisms $\varphi \colon P \to Q$ there are $\varepsilon$-isomorphisms
            \[\gamma^s \colon P \cap H^s \to Q \cap H^s\]
           for $s \in \{-1,0,1\}$ and
        furthermore it holds that $Q_0 \cap  H = \emptyset$.
    \end{lemma}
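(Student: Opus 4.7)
The strategy is to mirror the proof of Lemma~\ref{lemma:PolytopeHyperplane}, now with the hyperplane $H$ fixed and the polytope perturbed through the $\delta$-isomorphism $\varphi\colon P\to Q$. First I would establish an edge-by-edge bound. For each edge $e\in P_1$ with endpoints $v,w$, the assumption $P_0\cap H=\emptyset$ forces one of three mutually exclusive cases: $v_e:=\aff(e)\cap H$ is either a single point in $e^{\circ}$ (endpoints on opposite sides of $H$), a single point in $\aff(e)\setminus e$ (endpoints on the same side, edge non-parallel), or empty (edge parallel to $H$ and off of $H$). In the first two cases set
\[
\varepsilon_e=\min\!\Bigl\{\varepsilon,\tfrac{1}{2}\inf_{y\in \partial e}\|y-v_e\|_\infty\Bigr\}\quad\text{resp.}\quad\varepsilon_e=\min\!\Bigl\{\varepsilon,\tfrac{1}{2}\inf_{y\in e}\|y-v_e\|_\infty\Bigr\},
\]
as in the proof of the companion lemma; in the third case any positive value suffices. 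Additionally, shrink each $\varepsilon_e$ below $\tfrac{1}{2}\min(\dist(v,H),\dist(w,H))$, which is positive by hypothesis.

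Next I would transport these local bounds via $\varphi$. For each edge $e$ of $P$, the image $e'=\varphi(e)$ is a $1$-face of $Q$ with endpoints $\varphi(v),\varphi(w)$ within $\delta$ of $v,w$ in $\|\cdot\|_\infty$. Since the intersection of the affine line through two points in general position with the fixed hyperplane $H$ depends continuously on these points, there is a $\delta_e>0$ such that whenever $\varphi$ is a $\delta_e$-isomorphism: the endpoints of $e'$ stay on the same sides of $H$ as those of $e$; if $v_e\in e^{\circ}$ then $\aff(e')\cap H=\{v_{e'}\}$ with $v_{e'}\in (e')^{\circ}$ and $\|v_{e'}-v_e\|_\infty<\varepsilon_e$; otherwise $e'\cap H=\emptyset$. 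Setting $\delta:=\min_{e\in P_1}\delta_e$ yields $Q_0\cap H=\emptyset$ (vertices of $Q$ are within $\delta$ of vertices of $P$, which are uniformly bounded away from $H$) and makes the combinatorial crossing pattern of edges with $H$ invariant under $\varphi$.

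With these ingredients the $\varepsilon$-isomorphisms $\gamma^s$ for $s\in\{-1,0,1\}$ are defined by induction on face dimension, as in the proof of Lemma~\ref{lemma:PolytopeHyperplane}. On vertices, send each $v\in P_0\cap H^s$ to $\varphi(v)\in Q_0\cap H^s$, and send each crossing point $v_e$ to $v_{e'}$; each vertex is moved by at most $\varepsilon$ by our choice of $\varepsilon_e$. Each $k$-face of $P\cap H^s$ is uniquely the trace on $H^s$ of a face $F\in\Po$ of $P$, and equals the convex hull of the appropriate mix of vertices of $F$ in $H^s$ with the crossing points on the edges of $F$; because $\varphi(F)$ has identical crossing combinatorics by the previous paragraph, $\varphi(F)\cap H^s$ is the convex hull of the images of those vertices. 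This gives dimension-preserving face-poset bijections, i.e., the required $\varepsilon$-isomorphisms.

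The main obstacle I expect is the inductive bookkeeping confirming that the face poset of $P\cap H^s$ is carried onto that of $Q\cap H^s$ in a dimension-preserving way. This reduces to the fact that the combinatorial type of the hyperplane section of a face $F$ of a polytope is determined purely by the sign pattern of the vertices of $F$ with respect to $H$, which is precisely the data preserved by $\varphi$ once $\delta$ is chosen as above. Granted this, the maps $\gamma^s$ are unambiguous and the $\varepsilon$-bound follows from $\varepsilon_e\le\varepsilon$.
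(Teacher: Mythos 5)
Your proposal is correct and takes essentially the same route as the paper's proof: a per-edge analysis that fixes the vertex sign pattern with respect to $H$ (forcing $Q_0\cap H=\emptyset$), the bijection $v_e \mapsto v_{\varphi(e)}$ between crossing points, and then the same inductive convex-hull/face-poset argument imported from Lemma~\ref{lemma:PolytopeHyperplane}, extended to $H^{\pm 1}$ by sending each vertex $v\in P_0\cap H^s$ to $\varphi(v)$. If anything, your continuity-based choice of $\delta_e$, which explicitly controls the displacement of the crossing points by $\varepsilon_e$, is slightly more careful than the paper's explicit formula for $\delta_e$, which leaves that displacement bound to the ``analogous'' step.
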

    \begin{proof}
    Let $e \in P_1$ and let $\partial e = \{u,v\}$. We adopt the notation $e=uv$ 
    and define \[\delta_e \coloneqq 
      \min \{\varepsilon, \frac{1}{2}\inf_{y \in H} \|y-v\|_\infty, \frac{1}{2}\inf_{y \in H} \|y-u\|_\infty\}\] and $\delta \coloneqq \min_{e \in P_1} \delta_e$. Since $P_0 \cap  H = \emptyset$ it holds that $\delta > 0$. Let $\varphi \colon P \to Q$ be a $\delta$-isomorphism. Then it holds $H \cap uv = \{v_e\} \neq \emptyset$ if and only if $H \cap \varphi(u)\varphi(v) = \{v_{\varphi(e)}\} \neq \emptyset$. Note that $(P \cap H)_0 = \{v_e \mid H \cap e \neq \emptyset\}$ and $(Q \cap H)_0 = \{v_{\varphi(e)} \mid H \cap \varphi(e) \neq \emptyset\}$ and hence $f(v_e) \coloneqq v_{\varphi(e)}$ defines a bijection $f \colon (P \cap H)_0 \to (Q \cap H)_0$. The remainder of the proof follows analogously to the proof of Lemma~\ref{lemma:PolytopeHyperplane}.
      
    \end{proof}

\subsection{Topology}
\label{top_background}

In the following, we summarize background knowledge necessary for our purposes that the reader may not have been acquainted with. The content of this subsection can also be found in many algebraic topology textbooks such as~\citet[Chapter~2]{hatcher}.

First, we recall two well-known constructions in topology that yield well-behaved, yet more complex topological spaces.

\begin{def1}
For two topological spaces $X$ and $Y$, the space $X\sqcup Y$ denotes the \emph{disjoint union} of $X$ and $Y$ endowed with the disjoint union topology. Similarly for an arbitrary index set $I$, the set $\bigsqcup_{i\in I}X_i$ denotes the disjoint union of the topological spaces $X_i$ for $i\in I$. If $I$ is a finite set, i.e., $I=\{1,\ldots, q\}$ for a suitable $q\in N$, we also denote this space by $\bigsqcup_{i=1}^qX_i$.
\end{def1}

We also create \emph{product spaces}: For two topological spaces $X$ and $Y$, the product space is the Cartesian product $X\times Y$ endowed with the product topology. Even though it is possible to extend this definition to infinite families of topological spaces as well, this will not be needed for our purposes.

To relate topological spaces $X,Y$ with each other, one often defines \emph{maps}, i.e., continuous functions $f\colon X\to Y$, and investigates properties of such maps. In our work, we use the following special cases of maps:

\begin{def1}
Let $X$, $Y$ be topological spaces.
\begin{enumerate}[label=(\roman*)]
    \item A map $f\colon X\to Y$ is called a \emph{homeomorphism} if it is open and bijective. In this case, we call $X$ and $Y$ \emph{homeomorphic}.
    \item A map $F\colon X\times [0,1]\to Y$ is a \emph{homotopy between functions $f\colon X\to Y$ and $g\colon X\to Y$} if $F(x,0)=f(x)$ and $F(x,1)=g(x)$ for all $x\in X$. In this case, the maps $f$ and $g$ are called \emph{homotopic}, denoted by $f\sim g$.
    \item A map $f\colon X\to Y$ is a \emph{homotopy equivalence} if there exists a map $g\colon Y\to X$ such that $g\circ f$ resp. $f\circ g$ are homotopic to $\mathrm{id}_X$ resp. $\mathrm{id}_Y$. The spaces $X$ and $Y$ are called \emph{homotopy equivalent} in this case.
    \item Let $A\subseteq X$ be a subspace of $X$. A map $r\colon X\to A$ is called a \emph{retraction} if $r(a)=a$ for all $a\in A$. 
    \item A homotopy between a retraction $r\colon X\to A$ (or more precisely, a map $r\colon X \to X$ with $r(X)\subseteq A$ and $r|_A=\mathrm{id}_A$) and the identity map $\mathrm{id}_X$ is called a \emph{deformation retraction}.
\end{enumerate}
\end{def1}

Next, we introduce the notion of homology by giving a sketch of the construction of homology groups.

Let $X$ be a topological space and 
\[\Delta_n=\left\{\sum_{i=0}^n\theta_ix_i\colon x\in \R^{n},\sum_{i=0}^n\theta_i=1, \theta_i\geq 0 \text{ for all $i=0,\ldots, n$}\right\}\]
denote the standard $n$-simplex. Note that the standard $n$-simplex is the convex combination of $n+1$ points $\{p_0, \ldots, p_n\}$. Taking the convex combination of an $n$-subset $\{p_0, \ldots, p_n\}\setminus \{p_i\}$ of these points, one obtains a subspace homeomorphic to the standard $n-1$-simplex, which we call an \emph{$i$-th $n$-face} of the simplex.

The $\Z$-module $C_n$, the group of \emph{$n$-chains}, is defined as the free abelian group generated by continuous maps $\sigma\colon \Delta_n\to X$, called \emph{simplices}. The inclusion $\iota_i\colon \Delta_{n-1}\hookrightarrow\Delta_n$ induces $n-1$-simplices $\sigma_i\coloneqq \sigma\circ\iota_i\colon \Delta_{n-1}\to X$ by the inclusion of the $i$-th $n$-face into the standard simplex. 

The map $\partial_n\colon C_n\to C_{n-1}$, which we call the \emph{boundary map}, is constructed as $\sigma\mapsto \sum_{i=1}^n(-1)^i\sigma_i$ on the generators and by linear extension elsewhere. This yields a \emph{chain complex}

\[\ldots\to C_{n+1}\xrightarrow{\partial_{n+1}} C_n\xrightarrow{\partial_n} C_{n-1}\to\ldots,\]

where we have $\partial_{n-1}\circ\partial_n=0$ for all $n$. Therefore, we can define the \emph{$n$-th singular homology group} as 

\[H_n(X)\coloneqq \altfrac{\ker(\partial_{n})}{\im(\partial_{n+1})}.\]

We list some well-known properties of (singular) homology groups that will be used in our constructions.

\begin{prop}\label{topoez}
Let $n\in \N$.
\begin{enumerate}
    \item (Disjoint union axiom, implication) For any index set $I$ and topological spaces $X_i$ for $i\in I$, it holds that $H_n\left(\bigsqcup_{i\in I}X_i\right)\cong\bigoplus_{i\in I}H_n(X_i)$. 
    \item (Homotopy invariance axiom) Let $X,Y$ be topological spaces and $r \colon X\to Y$ a homotopy equivalence. Then the map $H_n(r)$ is an isomorphism for all $n\in \mathbb{N}$. In particular, it holds that $H_n(Y)\cong H_n(X)$.
    \item (Dimension axiom) $H_n(D^d)=\begin{cases}
    \Z, &n=0\\
    0 &\text{else}
    \end{cases}$
    \item $H_n(S^d)=\begin{cases}
    \Z\oplus \Z &n=d=0\\
    \Z, &n=d\neq 0\text{ or } d\neq n=0\\
    0 &\text{else}
    \end{cases}$

\end{enumerate}
\end{prop}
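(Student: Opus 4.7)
The plan is to verify each of the four statements in turn, using standard techniques from singular homology. None of them is novel --- the proposition is quoted as a reference for later use in the main constructions --- so the strategy is essentially to recall the textbook proofs and indicate which tools they rely on.

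For (1), the key observation is that a singular $n$-simplex $\sigma \colon \Delta_n \to X$ has path-connected image, since $\Delta_n$ itself is path-connected. Therefore, when $X = \bigsqcup_{i\in I}X_i$, the image of $\sigma$ lies entirely within a single $X_i$. Consequently the singular chain complex decomposes as $C_n(X) \cong \bigoplus_{i\in I} C_n(X_i)$ as abelian groups, and the boundary map respects this decomposition. Since taking kernels and images commutes with direct sums, so does taking homology, which yields the stated isomorphism. For (2), the central technical tool is the prism operator $P \colon C_n(X) \to C_{n+1}(Y)$ associated to a homotopy $F \colon X \times [0,1] \to Y$ between maps $f,g \colon X \to Y$. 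Triangulating $\Delta_n \times [0,1]$ into $(n+1)$-simplices with appropriate alternating signs produces a formula for $P$ satisfying the chain homotopy identity $\partial \circ P + P \circ \partial = g_\# - f_\#$. This implies $f_* = g_*$ on homology. A homotopy equivalence $r$ with homotopy inverse $s$ then satisfies $r_* \circ s_* = \mathrm{id}$ and $s_* \circ r_* = \mathrm{id}$, so $r_*$ is an isomorphism.

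For (3), I would leverage (2) together with the fact that $D^d$ is convex and hence contractible via the straight-line homotopy to any interior point. By (2), $H_n(D^d) \cong H_n(\mathrm{pt})$. A direct computation on the chain complex of a point --- where $C_n(\mathrm{pt}) = \Z$ for every $n \geq 0$ and the boundary maps alternate between zero and the identity --- yields $H_0(\mathrm{pt}) = \Z$ and $H_n(\mathrm{pt}) = 0$ for $n > 0$. For (4), the case $d = 0$ is immediate from (1) and (3) since $S^0$ consists of two disjoint points. For $d \geq 1$, I would apply the reduced Mayer--Vietoris sequence to the cover $U = S^d \setminus \{N\}$, $V = S^d \setminus \{S\}$, where $N$ and $S$ are antipodal points: both $U$ and $V$ are homeomorphic to $\R^d$ via stereographic projection and hence contractible by (3), while $U \cap V$ deformation retracts onto the equatorial $S^{d-1}$. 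The resulting isomorphism $\tilde{H}_n(S^d) \cong \tilde{H}_{n-1}(S^{d-1})$ reduces the problem to the base case by induction on $d$. The only technically delicate step in the whole argument is the prism operator construction in (2); once homotopy invariance is available, the remaining three statements follow by routine combinations with convexity and Mayer--Vietoris.
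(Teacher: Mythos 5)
Your sketch is correct: the paper itself gives no proof of this proposition, quoting it as standard background material (citing Hatcher, Chapter 2), and your outline --- chain-level decomposition for disjoint unions, the prism operator for homotopy invariance, contractibility of the convex disk, and Mayer--Vietoris with the two punctured hemispheres plus induction for $S^d$ --- is precisely the standard textbook argument being referenced. Nothing further is needed.
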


\begin{obs}\label{annhomo}
Using Proposition~\ref{topoez} and given definitions, one can immediately calculate the homology groups of a $d$-dimensional $k$-annuli:

\[H_n(S^k\times D^{d-k})=H_n(S^k)=\begin{cases}
        \Z\oplus \Z &n=k=0\\
    \Z, &n=k\neq 0\text{ or } k\neq n=0\\
    0 &\text{else}
\end{cases}\]
\end{obs}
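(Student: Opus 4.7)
The observation asserts that the homology of a $d$-dimensional $k$-annulus agrees with that of its core sphere $S^k$. The plan is to reduce the claim to the preceding Proposition~\ref{topoez} by exhibiting a homotopy equivalence between $S^k\times D^{d-k}$ and $S^k$, at which point the case distinction falls out of item~4.

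First, I would show that the first-factor projection $\mathrm{pr}\colon S^k\times D^{d-k}\to S^k$, $(x,y)\mapsto x$, is a homotopy equivalence, with homotopy inverse the inclusion $\iota\colon S^k\hookrightarrow S^k\times D^{d-k}$, $x\mapsto (x,0)$. The equality $\mathrm{pr}\circ\iota=\mathrm{id}_{S^k}$ is immediate. For the other composition, the straight-line map
\[
H\colon (S^k\times D^{d-k})\times[0,1]\to S^k\times D^{d-k},\qquad ((x,y),t)\mapsto (x,(1-t)y),
\]
is well defined, since convexity of the open unit ball $D^{d-k}$ around the origin guarantees that $(1-t)y\in D^{d-k}$ for every $t\in[0,1]$. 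It is continuous, and it satisfies $H(\cdot,0)=\mathrm{id}$ and $H(\cdot,1)=\iota\circ\mathrm{pr}$, so $\iota\circ\mathrm{pr}\sim\mathrm{id}_{S^k\times D^{d-k}}$ and hence $\mathrm{pr}$ is a homotopy equivalence.

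Next, the homotopy invariance axiom (item~2 of Proposition~\ref{topoez}) yields $H_n(S^k\times D^{d-k})\cong H_n(S^k)$ for every $n\in\N$, which is the first equality in the observation. Plugging in the explicit values from item~4 of the same proposition then produces the case distinction stated in the claim.

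There is no real obstacle here; the observation is essentially a bookkeeping consequence of Proposition~\ref{topoez}. The only points that need a brief check are that the straight-line homotopy remains in $D^{d-k}$ (guaranteed by convexity of the open ball around $0$) and that the edge cases $k=0$ and $k=d$ behave correctly, namely that $D^0$ is a singleton so that $\mathrm{pr}$ becomes the identity in the latter case, and that $S^0\times D^d$ is a disjoint union of two contractible pieces in the former, which is consistent with $H_0=\Z\oplus\Z$.
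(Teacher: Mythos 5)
Your proof is correct and follows exactly the route the paper intends: the annulus deformation retracts onto its core sphere (via the straight-line homotopy contracting the $D^{d-k}$ factor), and then homotopy invariance together with item~4 of Proposition~\ref{topoez} gives the stated case distinction. The paper treats this as immediate for precisely this reason, so there is nothing further to add.
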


To ease our computations for upper bounds, we deviate to another homology theory called \emph{cellular homology} which is defined on a special class of topological spaces called \emph{CW-complexes}. 

\begin{def1}
A Hausdorff space $X$ with a filtration $\emptyset=X_{-1}\subseteq X_0\subseteq \ldots\subseteq \bigcup_{i=1}^dX_d=X$ is a \emph{$d$-dimensional finite CW complex} if the following axioms hold:

\begin{enumerate}[label=(\roman*)]
    \item A subset $A\subseteq X$ is closed in $X$ if and only if $A\cap X_i$ is closed in $X_i$ for all $i\in[d]_0$.
    \item The spaces $X_i$ in the filtration are each called $i$-skeleton. The $i$-skeleton is recursively obtained from $X_{i-1}$ by attaching cells, i.e. we have pushout maps of the form
    \begin{center}
            \begin{tikzcd}
        \bigsqcup_{j\in I_i}S^{i-1} \arrow{r}{\bigsqcup_{j\in I_i}q^j_i} \arrow[hookrightarrow]{d}{} & X_{i-1} \arrow[hookrightarrow]{d}{} \\
 \bigsqcup_{j\in I_i}D^{i}\arrow{r}{\bigsqcup_{j\in I_i}Q^j_i} & X_i
    \end{tikzcd}
    \end{center}
    for finite index sets $I_i$ for $i\in [d]_0$. The maps $q^j_i$ are called \emph{attaching maps} and the maps $Q^j_i$ are called \emph{characteristic maps}. 
\end{enumerate}
For $i\in [d]_0$, the set of path components of $X_i\setminus X_{i-1}$ is called the set of \emph{open $i$-cells}. The set of closures of open $i$-cells are called \emph{closed $i$-cells}. We almost always make use of closed $i$-cells and therefore refer to them simply as \emph{$i$-cells}.
\end{def1}

The non-expert can understand a pushout map as one that simply glues the boundary of an $i$-dimensional cell (that is, a topological disk/polyhedron of dimension $i$) onto the $(i-1)$-skeleton $X_i$. Here, the choice of the attaching maps define the commutative pushout diagram above, while the characteristic maps are those that are ``uniquely'' defined by the attaching maps ``ìn a natural way''. Figure~\ref{fig:CW} illustrates the above definition.

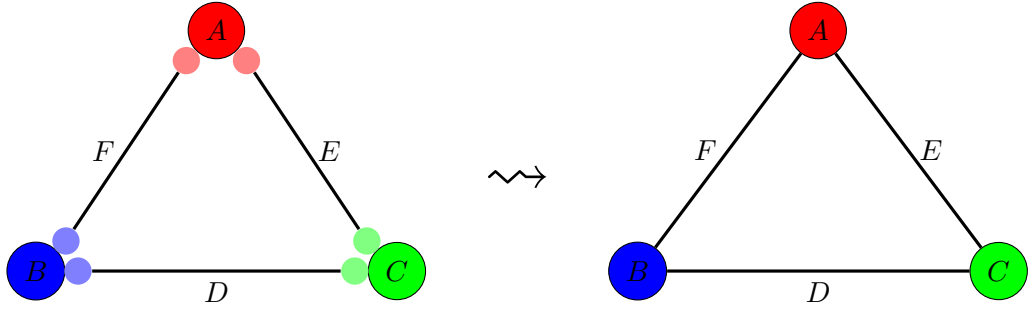
\begin{figure}
    \centering
  \begin{tikzpicture}[scale=0.8]
         \node[shape=circle,draw=black, fill=red] (A) at (0,5) {$A$};
         \node[shape=circle, fill=red!50] (A') at (-0.5,4.5) {};
         \node[shape=circle, fill=red!50] (A'') at (0.5,4.5) {};
         \node[shape=circle,draw=black, fill=blue] (B) at (-3,1) {$B$};
         \node[shape=circle, fill=blue!50] (B') at (-2.5,1.5) {};
         \node[shape=circle, fill=blue!50] (B'') at (-2.3,1) {};
         \node[shape=circle,draw=black, fill=green] (C) at (3,1) {$C$};
         \node[shape=circle, fill=green!50] (C') at (2.5,1.5) {};
         \node[shape=circle, fill=green!50] (C'') at (2.3,1) {};
         \path [very thick] (A') edge node[left] {$F$} (B');
         \path [very thick] (B'') edge node[below] {$D$} (C'');
         \path [very thick] (A'') edge node[right] {$E$} (C');

         \node[] (X) at (5,2.5) {\Huge$\leadsto$};

         \node[shape=circle,draw=black, fill=red] (A) at (10,5) {$A$};
         \node[shape=circle,draw=black, fill=blue] (B) at (7,1) {$B$};
         \node[shape=circle,draw=black, fill=green] (C) at (13,1) {$C$};
         \path [very thick] (A) edge node[left] {$F$} (B);
         \path [very thick] (B) edge node[below] {$D$} (C);
         \path [very thick] (A) edge node[right] {$E$} (C);
    \end{tikzpicture}
    \caption{A CW-complex $X$ of dimension $1$ that is homeomorphic to $S^1$. The darker shaded points constitute the $0$-cells, i.e., we have $X_0=\{A,B,C\}$. The line segments on the left are the $1$-cells. The triangle on the right illustrates $X=X_1$. The lighter shades of colors indicate the attaching maps $q^j_1\colon S^0\to \{A,B,C\}$ for $j\in \{D,E,F\}$.}
    \label{fig:CW}
\end{figure}

A more general definition of CW complexes allow an infinte dimension of cells as well as an (arbitrarily indexed) infinite number of cells in every dimension. For our results, it is sufficient to restrict to our definition, as we will restrict to a CW complex that is a compact topological space without loss of generality:

\begin{lemma}
A CW-complex is finite if and only if it is compact.
\end{lemma}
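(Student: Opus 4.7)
The plan is to prove the two directions of this equivalence separately, treating the forward direction as a routine compactness argument and the backward direction via a contrapositive construction of a ``discrete witness'' set.

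For the forward direction ($\Rightarrow$), I would observe that each closed $i$-cell is, by definition, the image of the compact disk $D^i$ under the continuous characteristic map $Q^j_i$, hence compact. A finite CW-complex is the union of finitely many such closed cells taken over all the skeleta in its filtration. Since a finite union of compact subspaces of a Hausdorff space is compact, the CW-complex itself is compact. This direction requires essentially no new ideas beyond the definition.

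For the backward direction ($\Leftarrow$), I would argue by contrapositive: assuming the CW-complex $X$ has infinitely many cells, I construct an infinite closed discrete subset of $X$, which cannot exist in a compact Hausdorff space (since it would itself be a closed, hence compact, discrete space, forcing finiteness). The construction is as follows: for each open cell of $X$, pick one interior point, and let $A \subseteq X$ be the resulting set. The key claim is that $A$ is both closed and discrete in $X$.

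The key input for this claim is the closure-finiteness property of CW-complexes, namely that every closed cell intersects only finitely many open cells, combined with the weak topology property, namely that a subset $B \subseteq X$ is closed in $X$ if and only if $B \cap \overline{e}$ is closed in $\overline{e}$ for every closed cell $\overline{e}$. Closure-finiteness implies that $A \cap \overline{e}$ is finite, hence closed, for every closed cell $\overline{e}$; by the weak-topology criterion, $A$ itself is closed in $X$. Applying the same argument to $A \setminus \{x\}$ for each $x \in A$ shows that every singleton of $A$ is open in the subspace topology, so $A$ is discrete. An infinite closed discrete subset contradicts compactness, completing the proof.

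The main obstacle is that closure-finiteness and the weak-topology property are not stated explicitly in the paper's definition, which is phrased only for the finite-dimensional finitely-generated case. To make the lemma nontrivial I would first extend the definition to allow arbitrary (possibly infinite) index sets and arbitrary dimension, and then verify closure-finiteness by induction on the dimension of cells using compactness of the source disks $D^i$ of the attaching maps; the weak topology property is built into the filtration closure condition (i) in the definition. Once these two properties are in place, the discrete-witness argument goes through cleanly.
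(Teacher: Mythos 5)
Your proposal is correct; note, however, that the paper itself offers no proof of this lemma at all -- it is stated as textbook background (the surrounding text defers to standard references such as Hatcher), so there is nothing internal to compare against. Your argument is the standard one: the forward direction is immediate since a finite complex is a finite union of closed cells, each compact as the continuous image of a disk $D^i$ under a characteristic map; the converse uses the discrete-witness set $A$ with one point in each open cell, which is closed and discrete by closure-finiteness plus the weak topology, and an infinite closed discrete subset is incompatible with compactness. You are also right to flag the only delicate point: under the paper's definition (finite index sets $I_i$, finite dimension $d$) the statement is vacuous in the backward direction, so the lemma only has content for the general definition alluded to just before it, and in that setting closure-finiteness and the cell-wise weak-topology criterion must be derived from the skeletal pushout description (condition (i) gives the weak topology with respect to skeleta; descending to closed cells uses condition (ii) and induction). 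With those standard verifications, which you sketch adequately, the argument is complete; one minor remark is that the Hausdorff hypothesis is not needed for the finite-union-of-compacts step, only for finiteness of $A\cap\overline{e}$ implying closedness and for the discreteness conclusion.
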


One can naturally endow finite polyhedral complexes with CW-structures by defining the $i$-cells as the $i$-facets (therefore uniquely defining the filtration), and the attaching maps as those that include each face into the $i$-skeleton of the CW complex for each $i$. This way, the poset structure is compatible with the characteristic maps as well (because faces of polyhedra lie on their topological boundary and the attaching maps $q^j_i$ are injective in this case). One can also observe that any polyhedral complex is homotopy equivalent to a finite CW-complex, see~\ref{lemma:polyhedraltoCW}.

Our motivation to endow polyhedral complexes with CW-structures is to use \emph{cellular homology}, which, given the natural CW-structure of a polyhedral complex, allows to compute homology groups conveniently. Among other advantages, we will rely on cellular homology for an induction on the number of polyhedra. 

\begin{def1}
Let $X$ be a finite CW-complex. The \emph{cellular chain complex} $(C_{i})_{i\in \mathbb{N}}$ of $X$ is given by free abelian groups $C_i\cong \mathbb{Z}^{|I_i|}$ that are generated by the $i$-cells of $X$. The boundary maps, which are given by a construction using attachment maps in the general case, can be greatly simplified for our purposes: The boundary map $\partial_i\colon C_i\to C_{i-1}$ is defined by the incidence matrix $\Delta\in\Z^{|I_{i-1}|\times|I_i|}$ between $i$ and $(i-1)$-cells, that is, it is given by entries \[\delta_{jk}=\begin{cases}
    1 & \text{the $(i-1)$-cell $j$ lies in the boundary of the $i$-cell $k$}\\
    0 & \text{else}.
\end{cases}\]
The \emph{$i$-th cellular homology} $H^{cell}_i(X)$ of $X$ is defined by the homology of the cellular chain complex $(C_{i})_{i\in \mathbb{N}}$, that is, we have
\[H^{cell}_i(X)=\altfrac{\ker(\partial_i)}{\im(\partial_{i-1})}.\]
\end{def1}

It is well-known that on CW-complexes, cellular homology groups coincide with singular homology groups. Therefore, we may make use of cellular homology groups in order to compute Betti numbers.

\section{Lower Bound}
{In this section we provide formal proofs for the statements made in sections \ref{lowerbound} and additional lemmata that are used in these proofs. For the sake of completeness, we also recall the statements we prove.}

\subsection{Proof of Lemma \ref{lemma:babycubes}}
\label{foldingnetwork}
\begin{def1}
We define the one hidden layer ReLU neural network $h^{(1,m,d)}$ in the following way:
The neurons $\{v_{i,j}\}_{i=0,\ldots,m-1,j=1,\ldots,d}$ in the hidden layer are given by: 	\begin{itemize}
		\item $v_{0,j}(x) = \max \{0,m x_j\}, j=1,\ldots,d$
		\item $v_{i,j}(x) = \max \{0,2m (x_j-i/m)\}, j=1,\ldots,d \ i=1,\ldots,m-1$
	\end{itemize}
	and the output neurons by: 
	 $h^{(1,m,d)}_{j}(x) = \sum_{i=0}^{m-1}(-1)^i \cdot v_{i,j}(x)$. 
\end{def1}

\begin{lemma}
\label{lemma:1hiddenlayer_folding}
Let $d,m \in \N$ with $m>1$. Then  \begin{enumerate}
	\item $h^{(1,m,d)}(W^{(1,m,d)}_{(i_1,\ldots,i_{d})})= [0,1]^{d}$
    \item $\pi_j \circ h^{(1,m,d)} _{|W^{(1,m,d)}_{(i_1,\ldots,i_{d})}}(x_1,\ldots,x_{d})= \left\{
	\begin{array}{ll}
		m\cdot x_j -(i_j-1) & i_j \text{ odd} \\
		-m\cdot x_j + i_j  & \, i_j \text{ even} \\
	\end{array} \right.$

\end{enumerate}
for all $(i_1,\ldots,i_{d}) \in [m]^{d}$.
\end{lemma}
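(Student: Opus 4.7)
The plan is to exploit the product structure of the network $h^{(1,m,d)}$. Observe from the definition that each output neuron $h^{(1,m,d)}_j$ depends only on the input coordinate $x_j$: the hidden neurons $v_{i,j}$ involve only $x_j$, and only the $v_{i,j}$ with fixed second index $j$ feed into the $j$-th output. Consequently, the claim factorizes coordinatewise, and it suffices to prove the one-dimensional statement: for the univariate function
\[
g(y) \;=\; \sum_{i=0}^{m-1}(-1)^i u_i(y), \qquad u_0(y)=\max\{0,my\},\; u_i(y)=\max\{0,2m(y-i/m)\},
\]
and for each $k\in[m]$, the restriction $g|_{[(k-1)/m,\,k/m]}$ equals $my-(k-1)$ if $k$ is odd and $-my+k$ if $k$ is even, and maps $[(k-1)/m,k/m]$ onto $[0,1]$.

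Next, I would fix $k\in[m]$ and $y\in [(k-1)/m,\,k/m]$ and determine which ReLU neurons are active there. Since $y\ge(k-1)/m \ge i/m$ for $i\le k-1$ and $y\le k/m\le i/m$ for $i\ge k$, exactly the neurons $u_0,u_1,\ldots,u_{k-1}$ contribute, with explicit values $u_0(y)=my$ and $u_i(y)=2my-2i$ for $1\le i\le k-1$. Substituting into $g$ gives
\[
g(y) \;=\; my\!\left(1+2\sum_{i=1}^{k-1}(-1)^i\right) \;-\; 2\sum_{i=1}^{k-1}(-1)^i\, i.
\]
The key step is then a case distinction on the parity of $k$: standard evaluation of the alternating sums yields $\sum_{i=1}^{k-1}(-1)^i=0$ and $\sum_{i=1}^{k-1}(-1)^i i=(k-1)/2$ when $k$ is odd, and $\sum_{i=1}^{k-1}(-1)^i=-1$ and $\sum_{i=1}^{k-1}(-1)^i i=-k/2$ when $k$ is even. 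Plugging these into the previous display immediately gives the two formulas claimed in part (ii).

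Part (i) then follows by evaluating these affine formulas at the endpoints: in both parity cases, $\{g((k-1)/m),\,g(k/m)\}=\{0,1\}$, so the affine restriction surjects the interval $[(k-1)/m,\,k/m]$ onto $[0,1]$. Taking the Cartesian product over coordinates $j=1,\ldots,d$ shows that $h^{(1,m,d)}$ maps $W^{(1,m,d)}_{(i_1,\ldots,i_d)}=\prod_{j=1}^d[(i_j-1)/m,\,i_j/m]$ onto $[0,1]^d$. No real obstacle arises here; the only thing to be careful about is keeping track of signs in the two parity cases and verifying that neurons with index $i\ge k$ genuinely vanish at the right endpoint $y=k/m$ (which holds because $u_i(k/m)=\max\{0,2(k-i)\}=0$ for $i\ge k$).
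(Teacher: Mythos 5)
Your proposal is correct and follows essentially the same route as the paper's proof: fix a coordinate, determine that exactly the hidden neurons with index $i<i_j$ are active on $[(i_j-1)/m,\,i_j/m]$, evaluate the alternating sum by a parity case distinction to get the affine formula, and then deduce the image claim by evaluating at the endpoints. The only cosmetic difference is that you compute the alternating sums $\sum(-1)^i$ and $\sum(-1)^i i$ in closed form, while the paper pairs consecutive terms; the content is identical.
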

\begin{proof}Throughout this proof we denote $W^{(1,m,d)}_{(i_1,\ldots,i_{d})}$ by $W_{(i_1,\ldots,i_{d})}$ and $h^{(1,m,d)}$ by $h$.
	 We prove that $h$ satisfies the second property. The first property then follows immediately from the second since \[\pi_j \circ h_{|W_{(i_1,\ldots,i_{d})}}(W_{(i_1,\ldots,i_{d})})= \left[m\cdot \frac{(i_j-1)}{m}-(i_j-1), 	m\cdot \frac{i_j}{m}-(i_j-1)\right] = [0,1]\] if $i_j$ is odd and \[\pi_j \circ h_{|W_{(i_1,\ldots,i_{d})}}(W_{(i_1,\ldots,i_{d})})= \left[m \cdot \frac{(i_j-1)}{m}+ i_j, m \cdot \frac{i_j}{m}+ i_j\right] = [0,1]\] if $i_j$ is even. \newline
	 Let $j \in \{1,\ldots,d\}$ and $x \in W_{(i_1,\ldots,i_{d})}$, so in particular $x_j \in \left[\frac{(i_j-1)}{m}, \frac{i_j}{m}\right]$. Since $i \geq i_j $ implies \mbox{$2m (x_j-i/m) \leq 0$}, it follows that $v_{i,j}(x) = 0$ for all $i \geq i_j$. Similarly, $i < i_j$ implies \mbox{$2m (x_j-i/m) \geq 0$}, and therefore it follows that  $v_{i,j}(x) = 2m (x_j-i/m)$ for all $i <i_j$. Hence
	 \[h_{j}(x) = \sum_{i=0}^{i_j-1}(-1)^i \cdot v_{i,j}(x) = mx_j + \sum_{i=1}^{i_j-1}(-1)^i \cdot 2m (x_j-i/m).\]If $i_j$ is even, then
	 	 \begin{align*}h_j(x)&= mx_j + \sum_{i=1}^{i_j/2 -1} 2m (x_j-2i/m) -\sum_{i=1}^{i_j/2}  2m (x_j-(2i-1)/m)\\&= mx_j - 2(i_j/2 -1)- 2m (x_j-(2i_j/2-1)/m) \\&= mx_j - i_j+2 - 2mx_j + 2i_j -2 \\&= -mx_j + i_j 
	\end{align*}
If $i_j$ is odd, then
\begin{align*} h_{j}(x) &= mx_j + \sum_{i=1}^{(i_j-1)/2} 2m (x_j-2i/m) -\sum_{i=1}^{(i_j-1)/2}  2m (x_j-(2i-1)/m)\\&= mx_j - 2(i_j-1/2) \\&= mx_j -(i_j-1).
\end{align*}
\end{proof}
\babycubes*
\begin{proof}
     
We apply induction over $L$. The base case has already been covered by Lemma~\ref{lemma:1hiddenlayer_folding}. Assume that there exists a NN $h^{(L-1,(m_1,\ldots,m_{L-1}),d)}$ that satisfies the desired properties and define \mbox{$h^{(L,\mathbf{m},d)} = h^{(1,m_L,d)} \circ h^{(L-1,(m_1,\ldots,m_{L-1}),d)}$}. Let $j \in \{1,\ldots,d\}$ and $x \in W^{(L,\mathbf{m},d)}_{(i_1,\ldots,i_{d})}$. Define $i_j^{(1)}\coloneqq\left\lfloor \frac{m_L\cdot(i_{j}-1)}{M} \right\rfloor +1$. It holds that $\left[\frac{(i_j-1)}{M}, \frac{i_j}{M}\right] \subset \left[\frac{(i_j^{(1)}-1)}{m_L}, \frac{i^{(1)}_j}{m_L}\right] $. 
	\begin{casesp} 
	\item  $i_j^{(1)}$ odd. Then by Lemma~\ref{lemma:1hiddenlayer_folding}:
\begin{align*}
	h^{(1,m_L,d)}_j\left(\tfrac{(i_j-1)}{M}\right)&= m_L \cdot \frac{(i_j-1)}{M} - (i_j^{(1)} -1)  = \frac{m_L \cdot (i_j-1)}{M} - \left\lfloor \frac{m_L \cdot(i_{j}-1)}{M} \right\rfloor \\&= \frac{(i_j-1)\bmod (M/m_L)}{(M/m_L)}    
\end{align*}
 and 
\begin{align*}
	h^{(1,m_L,d)}_j\left(\tfrac{i_j}{M}\right) &= m_L \cdot \frac{i_j}{M} - (i_j^{(1)} -1)  = \frac{m_L \cdot (i_j)}{M} - \left\lfloor \frac{m_L \cdot (i_{j}-1)}{M} \right\rfloor \\&= \frac{(i_j-1) \bmod (M/m_L)+1}{(M/m_L)}.    
\end{align*}

 Define $i_j^{(L-1)}\coloneqq ((i_j-1) \bmod (M/m_L)) +1$. Then it holds that \[h^{(1,m_L,d)}_j(x) \in \left[\frac{m_L\cdot(i_j^{(L-1)}-1)}{M},\frac{m_L\cdot i_j^{(L-1)}}{M}\right].\]
 Moreover, $i_j^{(L-1)}= ((i_j-1) \bmod (M/m_L)) +1$ is odd if and only if $i_j$ is odd because $\frac{M}{m_L}$ is an even number.
\begin{casesp}
\item $i_j^{(L-1)}$ (and therefore $i_j$) is odd. Then follows with the induction hypothesis:
	\begin{align*}
	    h_j^{(L,\mathbf{m},d)}(x) &=  h_j^{(L-1,(m_1,\ldots,m_{L-1}),d)}(h_j^{(1,m_L,d)}(x)) \\&=\frac{M}{m_L}\cdot ( h_j^{(1,m_L,d)}(x)) - (i_j^{(L-1)} -1) \\&=  \frac{M}{m_L} \cdot ( m_Lx - (i_j^{(1)} -1)) - (i_j^{(L-1)}-1)   \\&=   Mx - \frac{M}{m_L}\cdot\left\lfloor \frac{m_L \cdot (i_{j}-1)}{M} \right\rfloor -  ((i_j-1) \bmod (M/m_L))  \\&= Mx - (i_j-1).
	\end{align*}
	
	\item $i_j^{(L-1)}$ (and therefore $i_j$) is even. Then follows with the induction hypothesis:
	\begin{align*}
	    h_j^{(L,\mathbf{m},d)}(x) &=  h_j^{(L-1,(m_1,\ldots,m_{L-1},d)}(h_j^{(1,m_L,d)}(x)) \\&=-\frac{M}{m_L}\cdot ( h_j^{(1,m_L,d)}(x)) + i_j^{(L-1)} \\&=  -\frac{M}{m_L} \cdot ( m_Lx - (i_j^{(1)} -1)) + i_j^{(L-1)}   \\&=   -Mx + \frac{M}{m_L}\cdot\left\lfloor \frac{m_L \cdot (i_{j}-1)}{M} \right\rfloor +  ((i_j-1) \bmod (M/m_L) +1)  
	    \\&=   -Mx + i_j-1 +1  \\&= -Mx + i_j.
	\end{align*}

\end{casesp}
	\item $i_j^{(1)}$ even. Then by Lemma \ref{lemma:1hiddenlayer_folding}:
\begin{align*}
	h^{(1,m_L,d)}_j(\tfrac{(i_j-1)}{M}) &= -m_L \cdot \frac{(i_j-1)}{M} + i_j^{(1)}  = {-}\frac{m_L\cdot(i_j-1)}{M} + \left\lfloor \frac{m_L\cdot(i_{j}-1)}{M} \right\rfloor +1 \\&= 1{-}\frac{(i_j-1) \bmod (M/m_L)}{M/m_L}     
\end{align*}
 and 
\begin{align*}
	h^{(1,m_L,d)}_j(\tfrac{i_j}{M}) &= -m_L \cdot \frac{i_j}{M} + i_j^{(1)}  = -m_L\cdot\frac{i_j}{M} + \left\lfloor m_L\cdot\frac{(i_{j}-1)}{M} \right\rfloor +1 \\&= 1-\frac{(i_j-1) \bmod (M/m_L))-1}{M/m_L}    
\end{align*}
 
 Define $i_j^{(L-1)}\coloneqq \frac{M}{m_L}-((i_j-1) \bmod (M/m_L))$. Then it holds that \[h^{(1,m_L,d)}_j(x) \in \left[\frac{m_L\cdot(i_j^{(L-1)}-1)}{M},\frac{m_L\cdot(i_j^{(L-1)})}{M}\right].\]
 Moreover, $i_j^{(L-1)}$ is even if and only if $i_j$ is odd, once more because $\frac{M}{m_L}$ is an even number.
\begin{casesp}
\item $i_j^{(L-1)}$ is odd (i.e., $i_j$ even). Then follows with the induction hypothesis:

	\begin{align*}
    h_j^{(L,\mathbf{m},d)}(x) 
    &=  h_j^{(L-1,(m_1,\ldots, m_{L-1}),d)}(h_j^{(1,m_L,d)}(x)) \\
    &=\frac{M}{m_L}\cdot ( h_j^{(1,m_L,d)}(x)) - (i_j^{(L-1)}-1)  \\
    &= \frac{M}{m_L} \cdot ( -m_Lx + i_j^{(1)}) - (i_j^{(L-1)}-1)   \\
    &=  -Mx + \frac{M}{m_L}\cdot\left\lfloor \frac{m_L\cdot(i_{j}-1)}{M} \right\rfloor +\frac{M}{m_L} -  \left( \frac{M}{m_L}-((i_j-1) \bmod M/m_L)-1\right)\\
    &= -Mx + i_j
	\end{align*}

	\item $i_j^{(L-1)}$ is even (i.e., $i_j$ odd). Then follows with the induction hypothesis:
	\begin{align*}
	    h_j^{(L,\mathbf{m},d)}(x) 
        &=  h_j^{(L-1,(m_1,\ldots, m_{L-1}),d)}(h_j^{(1,m_L,d)}(x)) \\
        &=-\frac{M}{m_L}\cdot ( h_j^{(1,m_L,d)}(x)) + i_j^{(L-1)} \\
        &=  -\frac{M}{m_L} \cdot ( -m_Lx + i_j^{(1)}) + i_j^{(L-1)}   \\&=   Mx - \frac{M}{m_L}\cdot\left(\left\lfloor \frac{m^L(i_{j}-1)}{M} \right\rfloor +1\right) + \frac{M}{m_L}-((i_j-1) \bmod (M/m_L)+1) +1\\  
        &=Mx - (i_j-1).
	\end{align*}
\end{casesp}
\end{casesp}
This concludes the proof for all cases.

\end{proof}
\subsection{Proof of Lemma \ref{lemma:mirrored_cuts}}
\begin{lemma}
	\label{lemma:finalLayer}
Let $d,w \in \N$ and  \[R_q =\{x \in \R^{d}:x_1,\ldots,x_{d} >0, \frac{q}{4w}<\|x\|_1<\frac{q+1}{4w}\}.\] Then $\sgn(\hat{g}(R_q)) = (-1)^q$ for all $ q=0,\ldots,w-1$ and $\hat{g}(x)=0$ for all $x \in [0,1]^{d}$ with $\|x\|_1\geq\frac{1}{2}.$ 
\end{lemma}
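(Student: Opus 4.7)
The plan is to reduce the claim to a one-dimensional computation. On the nonnegative orthant, which contains both the $R_q$'s and the set $\{x \in [0,1]^d : \|x\|_1 \geq 1/2\}$, we have $\|x\|_1 = \mathbf{1}^T x$. Since each neuron $\hat{g}_q(x)$ depends on $x$ only through $\mathbf{1}^T x$, so does the output $\hat{g}$: I can write $\hat{g}(x) = G(\mathbf{1}^T x)$ for a piecewise linear function $G \colon \R_{\geq 0} \to \R$. The entire lemma then becomes a statement about this one-variable $G$.

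Next, I would identify the breakpoints and slopes of $G$. Its slope-change points are at $t = 0,\ 1/(8w),\ 3/(8w),\ \dots,\ (2w-1)/(8w),\ 1/4$. Under the natural choice of output weights (namely $+1$ on $\hat{g}_0$, alternating signs $(-1)^q$ on $\hat{g}_q$ for $q = 1,\dots,w$, and a corrective weight on $\hat{g}_{w+1}$ that cancels the accumulated slope at $t=1/4$), a direct computation by telescoping the slope contributions shows that $G$ has slope exactly $(-1)^q$ on the interval $((2q-1)/(8w),\ (2q+1)/(8w))$ for $q = 0,1,\dots,w$ (with the convention that the $q=0$ interval is $(0, 1/(8w))$), and slope $0$ on $(1/4, \infty)$.

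The key observation is that the thresholds $q/(4w)$ bounding the regions $R_q$ are exactly the midpoints between consecutive slope-change points of $G$. I would then prove by induction on $q$ that $G(q/(4w)) = 0$ for all $q = 0, 1, \dots, w$: the base case $G(0) = 0$ is immediate, and in the inductive step the integral of $G'$ over $(q/(4w),\ (q+1)/(4w))$ splits into two half-intervals of equal length $1/(8w)$ on which $G'$ equals $(-1)^q$ and $(-1)^{q+1}$ respectively, so it cancels. Consequently, for $q \in \{0,\dots,w-1\}$, the function $G$ vanishes at both endpoints of $R_q$ and is affine with slope $(-1)^q$ on the first half-interval, reaching the extremum $(-1)^q/(8w)$ at the midpoint; affineness on each half then forces $\sgn(G(t)) = (-1)^q$ throughout $R_q$, yielding the first claim.

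For the vanishing statement, applying the induction at $q = w$ gives $G(1/4) = 0$, and since $G$ has slope zero on $(1/4, \infty)$, we conclude $G \equiv 0$ on $[1/4, \infty)$. In particular $G(t) = 0$ for all $t \geq 1/2$, which translates back to $\hat{g}(x) = 0$ for every $x \in [0,1]^d$ with $\|x\|_1 \geq 1/2$. The main technical obstacle is the careful alternating-sign bookkeeping of which neurons are active on each sub-interval; once that is set up, everything reduces to the elementary midpoint cancellation argument above.
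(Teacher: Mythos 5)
Your proof is correct and takes essentially the same route as the paper: both reduce $\hat{g}$ to a univariate piecewise linear function of $\1x$ and exploit the telescoping of the alternating ReLU sum, whose kinks at odd multiples of $\frac{1}{8w}$ interleave the thresholds $\frac{q}{4w}$. The only difference is bookkeeping: the paper evaluates the partial alternating sums in closed form on each piece (with parity cases for $q$ and, for the vanishing claim, for $w$), while your slope computation plus the induction that the function vanishes at every region boundary packages the same cancellations more compactly and gives the vanishing on $[\frac{1}{4},\infty)$ (hence for $\|x\|_1\geq\frac{1}{2}$) without a separate case analysis.
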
 
\begin{proof}
	Let $q \in \{0,\ldots,w-1\}$ and $x \in R_q$. Note that $\hat{g}_0(x)=\1x$ for all $q \in \{0,\ldots,w-1\}.$ 

    \begin{casesp}

    \item  
     $\1x < (2q+1)/4w.$ This implies $\hat{g}_i(x) = 0 \ \forall q >i$ and \mbox{$g_i(x) = 2(\1x-((2i-1)/4w))$} for all $ 1 < i \leq q$ and therefore \[\hat{g}(x) = \sum_{i=0}^{q} (-1)^{i}\hat{g}_i(x) = x_1 + \sum_{i=1}^{q} (-1)^{i}2(\1x-((2i-1)/4w)).\] 
      \begin{casesp}
      \item 
   If $q$ is even, then it holds: \begin{align*}
		\hat{g}(x) &=  \1x + \sum_{i=1}^{q/2} 2(\1x-((2(2i)-1)/4w))  - \sum_{i=1}^{q/2}  2(\1x-((2(2i-1)-1)/4w))
		\\&=  \1x + \sum_{i=1}^{q/2} 2(\1x-((4i-1)/4w))  - \sum_{i=1}^{q/2}  2(\1x-((4i-3)/4w)) \\&= \1x -q/2w > 0
	\end{align*}

\item
If $q$ is odd, then it holds: \begin{align*}
	\hat{g}(x) &=  \1x + \sum_{i=1}^{(q-1)/2} 2(\1x-((4i-1)/4w))  - \sum_{i=1}^{(q+1)/2}  2(\1x-((4i-3)/4w)) \\&= \1x -2(q-1)/4w - 2(\1x-((4(q+1)/2)-3)/4w)) \\&=-(\1x) -2(q-1)/4w + (4(q+1) -6)/4w\\&=-(\1x) +q/2w < 0
\end{align*}

	 \end{casesp}
    
    \item 
	 $\1x \geq (2q+1)/4w$. This implies $\hat{g}_i(x) = 0 \ \forall i >q+1$ and \[g_i(x) = 2(\1x-((2i-1)/4w)) \]  for all $1 < i \leq q+1$ and therefore \[\hat{g}(x) = \sum_{i=0}^{q+1} (-1)^{q}\hat{g}_i(x) = x_1 + \sum_{i=1}^{q+1} (-1)^{i}2(\1x-((2i-1)/4w)).\] 
	\begin{casesp}
	
	\item 
	 
If $q$ is even, then it holds: \begin{align*}
	\hat{g}(x) &=  \1x + \sum_{i=1}^{q/2} 2(\1x-((4i-1)/4w))  - \sum_{i=1}^{q/2+1}  2(\1x-((4i-3)/4w)) \\&= \1x -2q/4w - 2(\1x-((4(q/2+1)-3)/4w)) \\&=
	-(\1x) -q/w +2(2q+1)/4w \\&= -(\1x) +(q+1)/2w >0
\end{align*}

\item
If $q$ is odd, then it holds: \begin{align*}
	\hat{g}(x) &=  \1x + \sum_{i=1}^{(q+1)/2} 2(\1x-((4i-1)/4w))  - \sum_{i=1}^{(q+1)/2}  2(\1x-((4i-3)/4w)) \\&= \1x -(q+1)/2w  < 0
\end{align*}

and hence $\sgn(\hat{g}(x)) = (-1)^q \  \forall x \in R_q \ \forall
q=1,\ldots,w-1$.
\end{casesp}
\end{casesp}

Let $x \in [0,1]^{d}$ with $\1x\geq\frac{1}{2}$. 	
\begin{casesp}
\item 
 $w$ even. Then \begin{align*}
    \hat{g}(x) &= \sum_{q=0}^{w+1} (-1)^q \cdot \hat{g}_q(x) \\&=  \1x -(\1x -\frac{1}{2}) + \sum_{q=1}^{w/2}  \hat{g}_{2_q}(x)-\hat{g}_{2_{q-1}}(x) \\&=\frac{1}{2} + \sum_{q=1}^{w/2}  2(\1x-(2\cdot 2q-1)/4w) - 2(\1x-(2\cdot (2q-1))-1)/4w)) \\&= \frac{1}{2} + \sum_{q=1}^{w/2} 2(-(4q-1)/4w + (4q-3)/4w)  \\&= \frac{1}{2} + \sum_{q=1}^{w/1} - 1/2w \\&= 0
\end{align*}

\item  $w$ odd. Then
\begin{align*}
    \hat{g}(x) &= \sum_{q=0}^{w+1} (-1)^q \cdot \hat{g}_q(x) \\&= \hat{g}_0(x)  -\hat{g}_w(x) +  \hat{g}_{w+1}(x) + \sum_{q=1}^{w-1/2}  \hat{g}_{2_q}(x)-\hat{g}_{2_{q-1}}(x) \\&= \1x - 2(\1x-(2w-1)/4w)+ \left(\1x-\frac{1}{2}\right) + \sum_{q=1}^{w-1/2} - 1/w \\&= (2w-1)/2w -\frac{1}{2}  + \sum_{q=1}^{w-1} - 1/2w \\&= 1 - 1/2w - \frac{1}{2}- \left(\frac{1}{2} -1/2w\right) \\&=0
\end{align*}
\end{casesp}
\end{proof}
\mirroredcuts*
\begin{proof}
    Let the affine map $t \colon [0,1]^d \to [0,1]^d$ be given by $x \mapsto (1-x_1,\ldots,1-x_{d-1},x_d)$ and let $\hat{g}$ be the 1-hidden layer neural network from Lemma~\ref{lemma:finalLayer}. We prove that the neural network $g \coloneqq \hat{g}\circ t$ satisfies the assumptions. Let $q \in \{0,\ldots,w-1\}$ and $x \in R_q$. Then \[\|(1,1,\ldots,1,0)-t(x)\|_1 =\|(1,1,\ldots,1,0)-(1-x_1,\ldots,1-x_{d-1},x_d)\|_1 = \|x\|_1.\] Since $g(x) = g \circ t(x) = \hat{g}(t(x))$, Lemma \ref{lemma:finalLayer} implies that $\sgn(g(R_q)) = (-1)^q$. Analogously follows that $g(x)=0$ for all $x \in [0,1]^{d}$ with $\|(1,1,\ldots,1,0)-x\|_1\geq\frac{1}{4}.$ 
\end{proof}

\subsection{Proof of Proposition \ref{prop:homeo1}}
\begin{restatable}{lemma}{onionrings}\label{lemma:onionrings}
Let  $g^{(w,d)}$ be the NN from Lemma \ref{lemma:mirrored_cuts} and $C$ the set of cutting points. Define \[R_{q,c} \coloneqq B_{(q+1)/(4w \cdot M)}^d(c)\setminus \overline{B_{q/(4w \cdot M)}^d(c)}\] for a cutting point $c \in C$ and $q\in \{1,\ldots, w\}$. Then 
\begin{enumerate}
    \item $g^{(w,d)} \circ h^{(L,\mathbf{m},d)}(R_{q,c}) \subseteq (-\infty,0]$ for $q$ odd, 
    \item $g^{(w,d)} \circ h^{(L,\mathbf{m},d)}(R_{q,c}) \subseteq [0,\infty)$ for $q$ even
    \item $x \notin \bigcup\limits_{q \in [w],c\in C}R_{q,c}$ implies $g^{(w,d)} \circ h^{(L,\mathbf{m},d)}(x)=0$.
\end{enumerate} 

In particular, $g^{(w,d)} \circ h^{(L,\mathbf{m},d)}(x)=0$ for all $ x\in \partial R_{q,c}$.
\end{restatable}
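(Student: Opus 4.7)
My plan is to reduce the lemma to Lemma~\ref{lemma:mirrored_cuts} by establishing that the folding map $h\coloneqq h^{(L,\mathbf{m},d)}$ acts as a scaled $\ell_1$-isometry near each cutting point, with the cutting points being exactly the $h$-preimages of $(1,\ldots,1,0)$ inside $[0,1]^d$. Writing $g\coloneqq g^{(w,d)}$, this lets me transport the ring structure of Lemma~\ref{lemma:mirrored_cuts} from a neighborhood of $(1,\ldots,1,0)$ to a neighborhood of every cutting point.

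First, I would verify the local isometry via Lemma~\ref{lemma:babycubes}: on any atomic cube $W_{(i_1,\ldots,i_d)}$, the map $h$ is affine with $\pi_j\circ h(x)=\varepsilon_j M x_j+b_j$, where $\varepsilon_j\in\{-1,+1\}$ is governed by the parity of $i_j$. The linear part of $h|_W$ is therefore $M$ times a signed diagonal matrix, which is an $\ell_1$-isometry up to the dilation $M$, so $\|h(x)-h(y)\|_1=M\|x-y\|_1$ for $x,y$ in the same cube. Next, I would solve $\pi_j\circ h(x)=1$ for $j<d$ and $\pi_d\circ h(x)=0$ in the same formula to identify the unique vertex of $W_{(i_1,\ldots,i_d)}$ mapped to $(1,\ldots,1,0)$; its coordinates come out to $i_j/M$ if $i_j$ is odd and $(i_j-1)/M$ if $i_j$ is even for $j<d$, with the parity reversed for $j=d$. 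These match the definition of cutting points verbatim, and every interior cutting point $c$ is a vertex shared by $2^d$ adjacent cubes, on each of which $h(c)=(1,\ldots,1,0)$.

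With these two ingredients, for $x$ in any cube adjacent to $c$ the isometry gives
\[
\|h(x)-(1,\ldots,1,0)\|_1=\|h(x)-h(c)\|_1=M\|x-c\|_1,
\]
so $x\in R_{q,c}$ iff $h(x)$ lies in the ring $\{y\in[0,1]^d:q/(4w)<\|(1,\ldots,1,0)-y\|_1<(q+1)/(4w)\}$, which is precisely the set $R_q$ from Lemma~\ref{lemma:mirrored_cuts}. Applying that lemma yields $g(h(x))<0$ on $R_{q,c}$ for odd $q\in\{1,\ldots,w-1\}$ and $g(h(x))>0$ for even $q$ in the same range. For $q=w$, the identity forces $\|h(x)-(1,\ldots,1,0)\|_1>1/4$, so $g\circ h(R_{w,c})=\{0\}$ by Lemma~\ref{lemma:mirrored_cuts}, contained in both $(-\infty,0]$ and $[0,\infty)$; this establishes (1) and (2). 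For part (3), I would argue that a point $x\not\in\bigcup_{q,c}R_{q,c}$ lying in a cube $W$ (with corner $c$ satisfying $h(c)=(1,\ldots,1,0)$) either sits on a boundary sphere $\|x-c\|_1=q/(4wM)$, where continuity of $g$ across the sign-alternating rings forces $g(h(x))=0$, or satisfies $\|x-c\|_1>(w+1)/(4wM)>1/(4M)$, so $\|h(x)-(1,\ldots,1,0)\|_1>1/4$ and Lemma~\ref{lemma:mirrored_cuts} again gives $g(h(x))=0$. The \emph{in particular} statement on $\partial R_{q,c}$ then follows immediately as the boundary-sphere case.

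The main technical obstacle I anticipate is the $\ell_1$-isometry claim itself, which requires extracting the sign pattern of $\varepsilon_j$ from the piecewise formula of Lemma~\ref{lemma:babycubes} and verifying it by induction on the number of folding layers. The remaining work is bookkeeping: tracking the parity of $q$, handling boundary cutting points shared by fewer than $2^d$ cubes (a distinction relevant only for Proposition~\ref{prop:homeo1}, not for this lemma), and disposing of the outermost degenerate ring $R_{w,c}$ on which $g\circ h$ vanishes identically.
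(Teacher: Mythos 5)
Your proposal is correct and takes essentially the same route as the paper's own proof: the paper likewise uses the explicit formulas of Lemma~\ref{lemma:babycubes} to show that the relevant $\ell_1$-distance of $h^{(L,\mathbf{m},d)}(x)$ from $(1,\ldots,1,0)$ equals $M\|x-c\|_1$ for $x$ in a cube whose distinguished vertex is the cutting point $c$ (your scaled $\ell_1$-isometry is just a cleaner packaging of that coordinatewise computation), then feeds this into Lemma~\ref{lemma:mirrored_cuts} for the signs and the $\geq\tfrac14$ regime, with continuity handling ring boundaries. One shared caveat: your case split for part (3), exactly like the paper's, omits the punctured inner ball $0<\|x-c\|_1<\tfrac{1}{4wM}$, on which $g^{(w,d)}\circ h^{(L,\mathbf{m},d)}$ is in fact positive; this traces to the lemma indexing $q$ from $1$ rather than $0$, so your argument matches the paper's proof in both substance and this blind spot.
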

\begin{proof}
	By definition of $c$ being a cutting point, there exist odd numbers $i_1,\ldots,i_{d-1} \in [M]$ and an even number $i_d \in [M]$ such that $c =(\frac{i_1-1}{M},\ldots,\frac{i_d-1}{M})$. Let $x \in [0,1]^d$ with  $\|x-c\|_\infty \leq \frac{1}{M}$, then either $x_j \in \left[\frac{i_j-2}{M},\frac{i_j-1}{M}\right]$ or $x_j \in \left[\frac{i_j-1}{M},\frac{i_j}{M}\right]$. Let $J^+ :=\{j \in [d]:x_j-c_j \leq 0\}$ be the set of indices $j$ such that $x_j \in \left[\frac{i_j-1}{M},\frac{i_j}{M}\right]$ and $J^- := [d] \setminus J^+$. Let $y = h^{(L,\mathbf{m},d)}(x) \in [0,1]^d$. Then it follows with Lemma \ref{lemma:babycubes} that  \begin{align*}
		\1y &= \sum_{j\in J^+} M \cdot x_j -(i_j-1) + \sum_{j\in J^-} -M \cdot x_j +(i_j-1) \\&= \sum_{j\in J^+} M \cdot (x_j-c_j +c_j) -(i_j-1) + \sum_{j\in J^-} -M \cdot (x_j-c_j +c_j) +(i_j-1) \\&= \sum_{j\in J^+} M \cdot (x_j-c_j) +  \sum_{j\in J^+} M \cdot c_j  -(i_j-1) \\&+ \sum_{j\in J^-} -M \cdot (x_j-c_j ) + \sum_{j\in J^-} -M \cdot c_j +(i_j-1) \\&= \sum_{j\in J^+} M \cdot (x_j-c_j) + \sum_{j\in J^-} -M \cdot (x_j-c_j) \\&= M \cdot  \sum_{j=1}^{d}|x_j-c_j| \\&= M \cdot \|x-c\|_1
	\end{align*}

If $x \in R_{q,c}$, then in particular  $\|x-c\|_\infty \leq \frac{1}{M}$ and thus: \[\1y = M \cdot \|x-c\|_1 < M \cdot \frac{q+1}{M \cdot 4w} = \frac{q+1}{4w}\] and \[\1y = M \cdot \|x-c\|_1 > M \cdot \frac{q}{M \cdot 4w} = \frac{q}{4w}.\] If $q$ is even, with Lemma \ref{lemma:mirrored_cuts} it follows that $g(y) \in (0,\infty)$ and therefore $g \circ h^{(L,\mathbf{m},d)}(x) = g(y)$. The case where $q$ is odd follows analogously and this concludes the first two cases.\newline
If $x$ is not in any $R_{q,c}$, If x is not in any $R_{q,c}$, then either $x \in \partial R_{q,c}$ for some cutting point $c$ and some $q$ or it holds that $\|x - c\|_1 \geq
\frac{1}{2M}$ for every cutting point $c$. In the first case it follows directly from the above shown
that $g \circ  h^{(L,m,d)}
(x)) = 0$, since that $g \circ  h^{(L,m,d)}$
is continuous. In the second case there exists a cutting point $c$ such that $\|x-c\|_\infty \leq \frac{1}{M}$, since for every $x_j$ either $\lfloor M \cdot x_j \rfloor$ or $\lceil M \cdot x_j \rceil$ is even. Thus $\1 h^{(L,\mathbf{m},d)}(x) = M \cdot \|x-c\|_1 \geq M \cdot  \frac{1}{4 \cdot M} = \frac{1}{4}$ and therefore it follows with Lemma \ref{lemma:mirrored_cuts} that $g \circ h^{(L,\mathbf{m},d)}(x)=0$, which concludes the proof. 

\end{proof}

\homeoone*
\begin{proof}
    We observe that the sets $Y_{d,w}\cap \overline{B^d}_{1/{4M}}(c)$ are disjoint for cutting points $c$ because we have $||c-c'||_1\geq \frac{2}{M}$ for any two distinct  cutting points $c,c'$. 
    Therefore, the sets $Y_{d,w}\cap \overline{B}^d_{1/{4M}}(c)$ are pairwise disjoint for $c\in C$.
    Since \[\bigsqcup\limits_{c\in C}Y_{d,w}\cap \overline{B^d_{1/{4M}}(c)}=Y_{d,w},\] the number of cutting points of the interior is $\frac{M^{\cdot (d-1)}}{2^{d-1}}\cdot\left(\frac{M}{2}-1\right)$ and the number of the cutting points on the boundary is $\frac{M^{\cdot (d-1)}}{2^{d-2}}$ by Observation~\ref{obs:evenpts}, it suffices to show that \[Y_{d,w} \cap \overline{B^d_{1/{4M}}(c)}\cong\bigsqcup\limits_{i=1}^{\left\lceil\frac{nw}{2}\right\rceil}S^{d-1}\times D^1\] for every  $c\in C\cap \Int([0,1]^d)$ and $Y_{d,w} \cap \overline{B^d_{1/{4M}}}(c)\cong\bigsqcup\limits_{i=1}^{\left\lceil\frac{w}{2}\right\rceil} D^d$ for every $c\in C\cap \partial[0,1]^d$. 

    By Lemma~\ref{lemma:onionrings}, we can see that for every $c\in C\cap \Int([0,1]^d)$, we have
    \begin{align*}
Y_{d,w} \cap \overline{B^d_{1/{4M}}(c)}&=\bigsqcup\limits_{1\leq q\leq w \text{ odd} } B_{q/(w \cdot 4M)}^d(c)\setminus \overline{B_{(q-1) /(w \cdot 4M)}^d(c)}\\&\cong \bigsqcup\limits_{1\leq q\leq w \text{ odd} }S^{d-1}\times [0,1]\\&=\bigsqcup\limits_{q=1}^{\left\lceil\frac{w}{2}\right\rceil}S^{d-1}\times [0,1],        
    \end{align*}
    as well as for every $c\in C\cap \partial([0,1]^d)$, we have
    \[Y_{d,w} \cap \overline{B^d_{1/{4M}}(c)}\cong \bigsqcup\limits_{1\leq q\leq w \text{ odd} } \left(B_{q/(w \cdot 4M)}^d(c)\setminus \overline{B_{(q-1) /(w \cdot 4M)}^d(c)}\right)\cap [0,1]^d\cong \bigsqcup\limits_{q=1}^{\left\lceil\frac{w}{2}\right\rceil} D^d,\]
    proving the claim. 
\end{proof}

\subsection{Proof of Lemma \ref{lemma:enoughspace}}
\enoughspace*
\begin{proof}
We adopt the notation $c^{(k)}\coloneqq(1,1,\ldots,1,0) \in \R^k$ throughout.

We first show that for all $x \in [0,1]^k$,
\begin{equation}\label{idek}
\| x-c^{(k)}\|_1 \leq \frac{1}{4} \Rightarrow g^{(w_{k-2},k-1)} \circ p_{k-1}(x) = 0.
\end{equation}

Let $x \in [0,1]^k$ with $g^{(w_{k-2},k-1)} \circ p_{k-1}(x) \neq 0$. Lemma \ref{lemma:mirrored_cuts} implies that \mbox{$\|p_{k-1}(x) -c^{(k-1)} \|_1 < \frac{1}{4}$.} Therefore we have $\frac{1}{4} > |\pi_{k-1}\circ p_{k-1}(x)-0| = |\pi_{k-1}(x)-0|=x_{k-1}$ which also means $|x_{k-1} -1| > \frac{1}{4}$ and thus $\|x-c^{(k)}\|_1 > \frac{1}{4}$. \newline
Note that by Lemma~\ref{lemma:mirrored_cuts} it suffices to show that $f^{(w_1,...,w_{k-1})} \circ p_{k-1}(x) = 0$ for all $x$ with {$\|x-c^{(k)}\|_1 \leq \frac{1}{4}$.} We prove this by induction over $k$. The base case has already been covered since $g^{(w_1,2)} = f^{w_1}$. Furthermore \begin{align*}
    f^{(w_1,...,w_{k-2})} \circ p_{k-1} &= (f^{(w_1,...,w_{k-3})} \circ p_{k-2} + g^{(w_{k-2},k-1)}) \circ p_{k-1}
    \\& = f^{(w_1,...,w_{k-3})} \circ p_{k-2} \circ p_{k-1} + g^{(w_{k-2},k-1)} \circ p_{k-1}
    \\&= f^{(w_1,...,w_{k-3})} \circ p_{k-2}  + g^{(w_{k-2},k-1)} \circ p_{k-1}
\end{align*} 
and thus the induction hypothesis and (\ref{idek}) imply that $f^{(\mathbf{w})} \circ p_{k-1}(x) = 0$ for $x$ with \mbox{$\|x-c^{(d)}\|_1 \leq \frac{1}{4}$.}
\end{proof}

\subsection{Proof of Lemma~\ref{lemma:homeo}}
\begin{lemma}\label{lemma:commute}
    The following diagram commutes: 
    \begin{center}
    \begin{tikzcd}
    {[0,1]^k} \arrow{rr}{h^{(L,\mathbf{m},k)}} \arrow[swap]{d}{p_{k-1}} & & {[0,1]^k} \arrow{d}{p_{k-1}} \arrow{drr}{{f^{\mathbf{w}}}\circ p_{k-1}}\\
     {[0,1]^{k-1}} \arrow[swap]{rr}{h^{(L,\mathbf{m},k-1)}} & &{[0,1]^{k-1}} \arrow[swap]{rr}{f^{\mathbf{w}}} && \R
  \end{tikzcd}
  \end{center}
\end{lemma}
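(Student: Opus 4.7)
The plan is to reduce the commutativity of the diagram to the left square, that is, to the identity
\[
p_{k-1} \circ h^{(L,\mathbf{m},k)} = h^{(L,\mathbf{m},k-1)} \circ p_{k-1},
\]
since the right triangle is tautological: by definition of composition, $f^{\mathbf{w}} \circ (p_{k-1} \circ h^{(L,\mathbf{m},k)})$ is the labeled arrow, and applying $f^{\mathbf{w}}$ to the bottom path yields the same map once the left square commutes.

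To establish the left square, I would exploit the coordinate-separability built into the construction of $h^{(1,m,d)}$ in Appendix~\ref{foldingnetwork}: in the single-hidden-layer folding network, the neurons $v_{i,j}$ in the hidden layer are grouped so that only neurons with second index $j$ feed into the $j$-th output $h^{(1,m,d)}_j$, and each such neuron depends only on the input coordinate $x_j$. Consequently, for every $j\in[d]$ and every $x\in[0,1]^d$, the value $\pi_j\circ h^{(1,m,d)}(x)$ depends only on $x_j$, and the explicit formula is independent of the ambient dimension $d$. Iterating over the $L$ hidden layers, the same holds for $h^{(L,\mathbf{m},d)}$: by Lemma~\ref{lemma:babycubes}, on the cube $W^{(L,\mathbf{m},d)}_{(i_1,\ldots,i_d)}$ we have
\[
\pi_j\circ h^{(L,\mathbf{m},d)}(x) \;=\; \begin{cases} M x_j - (i_j-1), & i_j \text{ odd},\\ -M x_j + i_j, & i_j \text{ even},\end{cases}
\]
and the choice of cube index $i_j$ is determined solely by $x_j$ (up to choices on the measure-zero boundary set, which do not matter by continuity).

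Concretely, I would fix $x\in[0,1]^k$ and $j\in[k-1]$, let $i_j\in[M]$ be the index with $x_j\in\bigl[\tfrac{i_j-1}{M},\tfrac{i_j}{M}\bigr]$, and apply Lemma~\ref{lemma:babycubes} once for $d=k$ and once for $d=k-1$. In both cases $\pi_j\circ h^{(L,\mathbf{m},d)}$ evaluates to the identical piecewise affine expression in $x_j$, so
\[
\pi_j\bigl(h^{(L,\mathbf{m},k)}(x)\bigr) \;=\; \pi_j\bigl(h^{(L,\mathbf{m},k-1)}(p_{k-1}(x))\bigr).
\]
Collecting these equalities for $j=1,\dots,k-1$ yields the desired identity on the left square, and the whole diagram commutes.

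I do not anticipate a real obstacle here; the only subtlety is that the cube index $i_j$ for a boundary point $x_j\in\{i/M\}$ is not unique, but since both formulas in Lemma~\ref{lemma:babycubes} agree on the overlap (and $h^{(L,\mathbf{m},d)}$ is continuous), the argument goes through without modification. Alternatively, one can dispense with cases entirely by noting that $h^{(L,\mathbf{m},d)} = \bigl(h^{(1,m_L,d)}\circ\cdots\circ h^{(1,m_1,d)}\bigr)$ is coordinate-wise in the sense that $\pi_j\circ h^{(L,\mathbf{m},d)} = H^{(L,\mathbf{m})}_j\circ \pi_j$ for some function $H^{(L,\mathbf{m})}_j\colon[0,1]\to[0,1]$ independent of $d$, which immediately implies the commutativity on each of the first $k-1$ coordinates.
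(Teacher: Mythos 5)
Your proposal is correct and follows essentially the same route as the paper: both reduce the statement to the left square and verify it coordinate-wise by applying Lemma~\ref{lemma:babycubes} in dimensions $k$ and $k-1$, using that $p_{k-1}$ maps $W^{(L,\mathbf{m},k)}_{(i_1,\ldots,i_k)}$ into $W^{(L,\mathbf{m},k-1)}_{(i_1,\ldots,i_{k-1})}$ (equivalently, that the index $i_j$ and the resulting affine formula depend only on $x_j$). Your explicit remark about the non-uniqueness of the cube index on boundary faces is a small point the paper leaves implicit, and it is resolved exactly as you say.
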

\begin{proof}
In order to show that the left half of the diagram commutes, we prove that
\[(\pi_j\circ h^{(L,\mathbf{m},k-1)}\circ p_{k-1})(x)=(\pi_j\circ p_{k-1}\circ h^{(L,\mathbf{m},k)})(x)\]
for every $j\in \{1,\ldots, k-1\}$ and $x\in [0,1]^k$. For any $x\in [0,1]^k$, there exist indices $i_1,\ldots,i_k$ such that $x=(x_1,\ldots, x_k)\in W^{(L,\mathbf{m},k)}_{(i_1,\ldots,i_{k})}$. Moreover, if $x\in W^{(L,\mathbf{m},k)}_{(i_1,\ldots,i_{k})}$, we have $p_{k-1}(x)\in W^{(L,\mathbf{m},k-1)}_{(i_1,\ldots,i_{k-1})}$ because \[p_{k-1}\left(W^{(L,\mathbf{m},k)}_{(i_1,\ldots,i_{k})}\right)=p_{k-1}\left(\prod_{j=1}^{k} \left[\frac{(i_j-1)}{M}, \frac{i_j}{M}\right]\right)=\prod_{j=1}^{k-1} \left[\frac{(i_j-1)}{M}, \frac{i_j}{M}\right].\]

We use this observation combined with Lemma~\ref{lemma:babycubes}, assuming that $i_j$ is odd:
\begin{align*}
(\pi_j\circ h^{(L,\mathbf{m},k-1)}\circ p_{k-1})(x)&=M\cdot (p_{k-1}(x))_j-(i_j-1)\\
                                        &=M\cdot x_j-(i_j-1)\\
                                        &=(\pi_j \circ h^{(L,\mathbf{m},k)})(x)\\
                                        &=(\pi_j\circ p_{k-1}\circ h^{(L,\mathbf{m},k)})(x),
\end{align*}
as claimed. The case where $i_j$ is even follows analogously.
\end{proof}

\homeo*
\begin{proof}
 For $k=2$ it holds that $f^{w_1}=g^{(w_1,2)}$ and therefore the claim holds trivially. Now let $k \geq 3$. Since $f^{(w_1,...,w_{k-1})} = f^{(w_1,\ldots,w_{k-2})} \circ p_{k-1} + g^{(w_{k-1},k)}$ and the spaces \[(f^{(w_1,\ldots,w_{k-2})} \circ p_{k-1}\circ h^{(L,\mathbf{m},k)})^{-1}((-\infty,0])\] and  $(g^{(w_{k-1},k)}\circ h^{(L,\mathbf{m},k)})^{-1}((-\infty,0])$ are disjoint by Lemma~\ref{lemma:enoughspace}, it follows that 

    \begin{align*}
        &(f^{(w_1,...,w_{k-1})}\circ h^{(L,\mathbf{m},k)})^{-1}((-\infty,0]) \\&= (( f^{(w_1,\ldots,w_{k-2})} \circ p_{k-1} + g^{(w_{k-1},k)})\circ h^{(L,\mathbf{m},k)})^{-1}((-\infty,0])
        \\&=  (f^{(w_1,\ldots,w_{k-2})} \circ p_{k-1} \circ h^{(L,\mathbf{m},k)} + g^{(w_{k-1},k)}\circ h^{(L,\mathbf{m},k)})^{-1}((-\infty,0])
        \\&=  (f^{(w_1,\ldots,w_{k-2})} \circ p_{k-1} \circ h^{(L,\mathbf{m},k)})^{-1}((-\infty,0]) \sqcup (g^{(w_{k-1},k)}\circ h^{(L,\mathbf{m},k)})^{-1}((-\infty,0])
        \\&=  (f^{(w_1,\ldots,w_{k-2})} \circ h^{(L,\mathbf{m},k-1)} \circ p_{k-1})^{-1}((-\infty,0]) \sqcup (g^{(w_{k-1},k)}\circ  h^{(L,\mathbf{m},k)})^{-1}((-\infty,0])&
         \\&= X_{k-1} \times [0,1]  \sqcup Y_{k,w},  
    \end{align*} where the second last equality is due to Lemma \ref{lemma:commute}.
\end{proof}
\subsection{Proof of Theorem~\ref{theorem:main}}
\begin{lemma}\label{lemma:inductionbase}
    The space $Y_{d,w}\coloneqq (g^{(w,d)}\circ h^{(L,\mathbf{m},d)})^{-1}((-\infty,0])$ satisfies 
    
    \begin{enumerate}[label=(\roman*)]
        \item $H_0(Y_{d,w})\cong\Z^{p+p'}$,
        \item $H_{d-1}(Y_{d,w})\cong\Z^{p}$,
        \item $H_k(Y_{d,w})=0$ for $k\geq d$ 
    \end{enumerate} with $p=\frac{M^{(d-1)}}{2^{d-1}}\cdot\left(\frac{M}{2}-1\right) \cdot\left\lceil\frac{w}{2}\right\rceil$ and $p'=\frac{M^{(d-1)}}{2^{d-2}}\cdot\left\lceil \frac{w}{2}\right\rceil$
\end{lemma}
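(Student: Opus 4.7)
The plan is to reduce the statement to Proposition~\ref{prop:homeo1} and then compute homology of the resulting disjoint union using standard axioms.

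First, I would argue that the closed sublevel set $(g^{(w,d)}\circ h^{(L,\mathbf{m},d)})^{-1}((-\infty,0])$ is homotopy equivalent to the open sublevel set $(g^{(w,d)}\circ h^{(L,\mathbf{m},d)})^{-1}((-\infty,0))$ described in Proposition~\ref{prop:homeo1}. By Lemma~\ref{lemma:onionrings}, the zero-set of $g^{(w,d)}\circ h^{(L,\mathbf{m},d)}$ consists of the boundaries of the nested balls $\partial B^d_{q/(4wM)}(c)$ around the cutting points together with the complement $\bigl([0,1]^d\bigr)\setminus\bigcup_{q,c}R_{q,c}$. Adding these boundary pieces turns each open annulus $\{q/(4wM)<\|x-c\|_1<(q+1)/(4wM)\}$ into its closure, and each odd closed annulus can be collapsed onto the open one by a radial deformation retraction (or, alternatively, one may invoke the small-constant perturbation argument given between Theorem~\ref{theorem:main} and Theorem~\ref{cor:exactformulaclosure}, which makes the closed and open preimages coincide up to homotopy equivalence). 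Either way, I obtain
\[
Y_{d,w}\;\simeq\;\bigsqcup_{i=1}^{p}\bigl(S^{d-1}\times[0,1]\bigr)\;\sqcup\;\bigsqcup_{i=1}^{p'}D^{d}.
\]

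Next, I would apply the homotopy invariance and disjoint union axioms (Proposition~\ref{topoez}(1),(2)) to conclude
\[
H_n(Y_{d,w})\;\cong\;\bigoplus_{i=1}^{p}H_n\bigl(S^{d-1}\times[0,1]\bigr)\;\oplus\;\bigoplus_{i=1}^{p'}H_n(D^d).
\]
Since $S^{d-1}\times[0,1]$ deformation retracts onto $S^{d-1}$, Observation~\ref{annhomo} (equivalently Proposition~\ref{topoez}(4)) gives $H_n(S^{d-1}\times[0,1])\cong\Z$ for $n\in\{0,d-1\}$ and $0$ otherwise, while the dimension axiom (Proposition~\ref{topoez}(3)) gives $H_n(D^d)\cong\Z$ for $n=0$ and $0$ otherwise. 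Combining these term by term yields $H_0(Y_{d,w})\cong\Z^{p+p'}$, $H_{d-1}(Y_{d,w})\cong\Z^{p}$, and $H_k(Y_{d,w})=0$ for all $k\geq d$, which is exactly the claim.

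The main obstacle is the first step: verifying that taking the closed sublevel set does not change the homotopy type. Because the zero set of $g^{(w,d)}\circ h^{(L,\mathbf{m},d)}$ includes the entire region outside the balls $B^d_{1/(4M)}(c)$ (which is full-dimensional), a priori the closed preimage is significantly larger than the closure of the open preimage. I would address this by either (a) applying the construction after adding a small constant $b>0$ to the output layer, as done for Theorem~\ref{cor:exactformulaclosure}, so that only the true boundaries of the annuli remain in the preimage; or (b) showing directly that each connected component of the closed preimage deformation retracts onto the corresponding component from Proposition~\ref{prop:homeo1}, using the local product structure of the annuli around the cutting points. The rest of the proof is then a straightforward bookkeeping of ranks.
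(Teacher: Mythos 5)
Your homology computation is exactly the paper's proof: the paper simply cites Proposition~\ref{prop:homeo1}, the disjoint union and homotopy invariance axioms (Proposition~\ref{topoez}), and Observation~\ref{annhomo}, which is your second half verbatim, and it is correct. The complication you spend most of your effort on stems from reading the ``$(-\infty,0]$'' in the appendix statement literally. In the body of the paper $Y_{d,w}$ is \emph{defined} as the strict sublevel set $(g^{(w,d)}\circ h^{(L,\mathbf{m},d)})^{-1}((-\infty,0))$, and Proposition~\ref{prop:homeo1} is a statement about that set; the closed interval in the appendix version is a notational slip, and the closed sublevel set is only treated later, for the perturbed network of Theorem~\ref{cor:exactformulaclosure}. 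Read as intended, your first step is unnecessary and the lemma really does follow in one line.

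Read literally, however, your first step is false and neither of your proposed repairs rescues it. By Lemmas~\ref{lemma:mirrored_cuts} and~\ref{lemma:onionrings}, the zero set of $g^{(w,d)}\circ h^{(L,\mathbf{m},d)}$ is not just the spheres bounding the annuli: it contains the entire full-dimensional region of points whose $\|\cdot\|_1$-distance to every cutting point is at least $\tfrac{1}{4M}$ (and the cutting points themselves, which become isolated points of the closed preimage). Hence $(g^{(w,d)}\circ h^{(L,\mathbf{m},d)})^{-1}((-\infty,0])$ contains one large connected ``grout'' region with its own nontrivial homology (a cube with many cavities), its connected components are not in bijection with the $p+p'$ pieces of Proposition~\ref{prop:homeo1}, and its Betti numbers differ from those claimed; so option (b), a component-wise deformation retraction onto the open preimage, cannot exist. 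Option (a), adding a small constant $b>0$, does yield a network whose closed sublevel set has the stated homology, but that is a statement about a \emph{different} function --- this perturbation is precisely the device used to pass from Theorem~\ref{theorem:main} to Theorem~\ref{cor:exactformulaclosure}, not a proof of the present lemma for $g^{(w,d)}\circ h^{(L,\mathbf{m},d)}$ itself. The correct resolution is to state and prove the lemma for the open sublevel set (as the paper intends), keep your second half, and drop the first step.
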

\begin{proof}
    Follows directly from Observation~\ref{annhomo} and Proposition~\ref{prop:homeo1} and the disjoint union axiom (Proposition~\ref{topoez}).
\end{proof}

\main*
\begin{proof}
    We consider the map $F\coloneqq f^{(\mathbf{w})}\circ h^{(L,\mathbf{m},d)}$ that was previously constructed (Lemma \ref{lemma:homeo}) and let $X_d = F^{-1}((-\infty,0))$. For $d=2$, the statement is identical to Lemma~\ref{lemma:inductionbase}. Indeed, we have
    \begin{align*}
       2\cdot\frac{\left(\frac{M}{2}+1\right)^{3}-1}{M}-\frac{M}{2}-2&=\left(\frac{M}{2}+1\right)^2+\frac{M}{2}+1-\frac{M}{2}-2\\&=\frac{M}{2}\left(\frac{M}{2}+1\right).     
    \end{align*}

    Let $d\geq 3$. Using Proposition~\ref{prop:homeo1}, we see that

\begin{align}
        H_k(X_d)\cong H_k(X_{d-1} \sqcup Y_{d,w_{d-1}})\cong H_k(X_{d-1})\oplus \prod_{i=1}^{p_d} H_k(S^{d-1})\oplus \prod_{i=1}^{p'_d}H_k(D^d)
    \end{align}

    and therefore
    \begin{align}\label{eqn:homology}
        \beta_k(X_d)=\beta_k(X_{d-1})+ \sum_{i=1}^{p_d}\left(\beta_k(S^{d-1})\right)+ \sum_{i=1}^{p'_d}\beta_k(D^d)
    \end{align}
    where $p_d=\frac{M^{ d-1}}{2^{d-1}}\cdot\left(\frac{M}{2}-1\right) \cdot\left\lceil\frac{w_{d-1}}{2}\right\rceil$ and $p'_d=\frac{M^{ d-1}}{2^{d-2}}\cdot\left\lceil \frac{w_{d-1}}{2}\right\rceil$.
Fix some $k\in \mathbb{N}$. For different values of $k$, we obtain the claims:
\begin{enumerate}[label=(\roman*)]
    \item For $k=0$, equation~(\ref{eqn:homology}) implies 
    \[  \beta_0(X_d)=\beta_0(X_{d-1})+p_d+p'_d=\sum_{i=2}^d\frac{M^{(i-1)}}{2^{i-1}}\cdot\left(\frac{M}{2}+1\right)\cdot\left\lceil \frac{w_{i-1}}{2}\right\rceil   
    \]
    
    \item For $k \leq d-1$, we have $\beta_{d-1}(X_{d})=0$ and therefore \[\beta_{d-1}(X_d)=p_d=\left(\frac{M}{2}-1\right) \cdot\frac{M^{d-1}}{2^{d-1}}\cdot\left\lceil \frac{w_{d-1}}{2}\right\rceil. \]
    For $0<k<d-1$, we have $\beta_k(X_d)=\beta_k(X_{d-1})$, i.e., the claim is satisfied by induction.
    \item Finally for $k\geq d$, we observe that all summands of~(\ref{eqn:homology}) vanish.

    Lastly, the norm of the weights of the map $h^{(L,\mathbf{m},d)}$ are bounded from above by  $\max_{\ell = 1, \ldots L}2\frac{n_\ell}{d}$ and the norm of the weights in the output layer $f^{(\mathbf{w})}$ are bounded from above by $1 \leq \max_{\ell = 1, \ldots L}2\frac{n_\ell}{d}$. 

\end{enumerate}
\end{proof}
\subsection{Closure}
In this section we modify the construction slightly to obtain a lower bound for $\beta_k(F^{-1}((-\infty,0])$ additionally to the lower bound for $\beta_k(F^{-1}((-\infty,0))$
\label{closure}
\begin{figure}
   \centering
\begin{tikzpicture}[scale = 0.55]

    \fill[lightgray] (8-1.95,8,8) -- (8,8-1.95,8) -- (8,8,8-1.95) -- cycle;
    \draw[] (8-1.95,8,8) -- (8,8-1.95,8) -- (8,8,8-1.95) -- cycle;
    \fill[darkgray] (8-1.55,8,8) -- (8,8-1.55,8) -- (8,8,8-1.55) -- cycle;
    \draw[] (8-1.55,8,8) -- (8,8-1.55,8) -- (8,8,8-1.55) -- cycle;
    \fill[lightgray] (8-0.95,8,8) -- (8,8-0.95,8) -- (8,8,8-0.95) -- cycle;
    \draw[] (8-0.95,8,8) -- (8,8-0.95,8) -- (8,8,8-0.95) -- cycle;
    \fill[darkgray] (8-0.55,8,8) -- (8,8-0.55,8) -- (8,8,8-0.55) -- cycle;
    \draw[] (8-0.55,8,8) -- (8,8-0.55,8) -- (8,8,8-0.55) -- cycle;

    \fill[lightgray] (8,1.95,8) -- (8-1.95,0,8) -- (8,0,8) -- (8,0,0) -- (8,1.95,0) -- cycle;
   \draw (8,1.95,8) -- (8-1.95,0,8)-- (8,1.95,0) -- cycle;
    \fill[darkgray] (8,1.55,8) -- (8-1.55,0,8) -- (8,0,8) -- (8,0,0) -- (8,1.55,0) -- cycle;
    \draw(8-1.55,0,8) -- (8,1.55,8) -- (8,1.55,0);
    \fill[lightgray] (8,0.95,8) -- (8-0.95,0,8) -- (8,0,8) -- (8,0,0) -- (8,0.95,0) -- cycle;
     \draw(8-0.95,0,8) -- (8,0.95,8) -- (8,0.95,0);
    \fill[darkgray] (8,0.55,8) -- (8-0.55,0,8) -- (8,0,8) -- (8,0,0) -- (8,0.55,0) -- cycle;
    \draw (8-0.55,0,8) -- (8,0.55,8) -- (8,0.55,0);
    
    \fill[darkgray] (0,0,8) -- (8-1.95,0,8) -- (8,1.95,8) --(8,8-1.95,8) --(8-1.95,8,8) -- (0,8,8) -- cycle;
    \fill[darkgray] (8-1.95,8,8) -- (0,8,8) -- (0,8,0) -- (8,8,0) -- (8,8,8-1.95) -- cycle;
    \fill[darkgray] (8,8,0) -- (8,8,8-1.95)--(8,8-1.95,8)  -- (8,1.95,8)-- (8,1.95,0) -- cycle;
    
    \foreach \x in{2,4,6}
{   \draw[gray,very thin] (0,\x ,8) -- (8,\x ,8);
    \draw[gray,very thin] (\x ,0,8) -- (\x ,8,8);
    \draw[gray,very thin] (8,\x ,8) -- (8,\x ,0);
    \draw[gray,very thin] (\x ,8,8) -- (\x ,8,0);
    \draw[gray,very thin] (8,0,\x ) -- (8,8,\x );
    \draw[gray,very thin] (0,8,\x ) -- (8,8,\x );
}
    \foreach \x in{0,8}
{   \draw (0,\x ,8) -- (8,\x ,8);
    \draw (\x ,0,8) -- (\x ,8,8);
    \draw (8,\x ,8) -- (8,\x ,0);
    \draw (\x ,8,8) -- (\x ,8,0);
    \draw (8,0,\x ) -- (8,8,\x );
    \draw (0,8,\x ) -- (8,8,\x );
}

\draw[name path=y-axis,->] (0,0,8) -- (0,9,8) node[above]{$x_2$};
\draw[->, name path=x-axis] (0,0,8) -- (9,0,8) node[right]{$x_1$};
\draw[->, name path=z-axis] (8,0,8) -- (8,0,-2) node[right]{$x_3$};

\draw[dashed] (0,0,8) -- (0,0,0);
\draw[dashed] (0,0,0) -- (0,8,0);
\draw[dashed] (0,0,0) -- (8,0,0);
\filldraw [black] (0,8,8) circle (2pt)node[anchor=east]{$1$};
\filldraw [black] (8,0,8) circle (2pt)node[anchor=north]{$1$};
\filldraw [black] (0,0,0) circle (2pt)node[anchor=north]{$1$};
\end{tikzpicture} 
\caption{The last hidden layer of the modified $F$ in the proof of Theorem~\ref{cor:exactformulaclosure}; Adding a small constant $b$ in the output layer in order to not have full dimensional ``$0$-regions".}
   \end{figure}
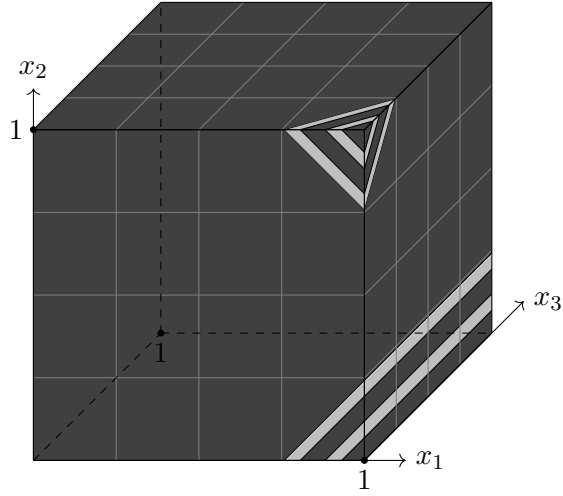 

\closure*

\begin{proof}
Let $F'$ be the neural network constructed in Theorem~\ref{theorem:main}. We want to modify this neural network to obtain a neural network $F$ such that it holds that 
\begin{itemize}
    \item $F^{-1}((-\infty,0))$ and $F'^{-1}((-\infty,0))$ are homeomorphic,
    \item $F^{-1}((-\infty,0))$ is homotopy equivalent to its closure $\overline{F^{-1}((-\infty,0))}$ and
    \item $\overline{F^{-1}((-\infty,0))}= F^{-1}((-\infty,0])$.
\end{itemize}
    In order to guarantee the third property, we need to ensure that $F^{-1}(\{0\}) = \partial F^{-1}((-\infty,0))$. Therefore, we transform the full-dimensional $0-$regions into slightly positive, but still constant regions by adding a small constant $b$, i.e., by simply setting $F(x) = F'(x) + b$, where {$b= \min_{k=2,\ldots,d}\{\frac{1}{8w_k\cdot M}\}$}. 
    
    Next, we argue that $F^{-1}((-\infty,0))$ and $F'^{-1}((-\infty,0))$ are homeomorphic, since adding $b$ also just makes the annuli in $F'^{-1}((-\infty,0))$ thinner. Let $k \in [d]$ and $A$ be an $k-$annuli in $F'^{-1}((-\infty,0))$ and $A_k = p_k(A)$ be its projection onto the first $k$ coordinates. It follows that there is a cutting point $c\in \R^k$ such that $A_k =  B_{q/(4w_k \cdot M)}^k(c)\setminus \overline{B_{(q-1)/(4w_k \cdot M)}^k(c)}$ for a suitable $q=1,...,w_k$. Since $b \leq \{\frac{1}{8w_k\cdot M}\}$ it follows that \[A'_{k,c,q} \coloneqq B_{(q-2b)/(4w_k \cdot M)}^k(c)\setminus \overline{B_{(q-1+2b)/(4w_k \cdot M)}^k(c)}\] is also a $k$-annuli and revisiting the proof of Lemma \ref{lemma:finalLayer} reveals that $F(A'_{k,c,q} \times \R^{d-k}) = (-\frac{1}{4w_k},-b)$ and hence $F(A'_{k,c,q} \times \R^{d-k}) =(-\frac{1}{4w_k} + b, 0)$ since $\frac{1}{8w_k} \geq b$. We conclude that for every $k$-annuli in $F'^{-1}((-\infty,0))$ there is a ``shrunk" $k$-annuli in $F^{-1}((-\infty,0))$ and the ``shrinking" for every annuli is an homeomorphism. The case for the disks at the boundary of the unit cube follows analogously.  Since $F^{-1}((-\infty,0))$ is the disjoint union of these disks and annuli and it clearly holds that $F(x)>0$ for all $x \in F'^{-1}((0,\infty))$, it follows that $F^{-1}((-\infty,0))$ and $F'^{-1}((-\infty,0))$ are homeomorphic. Since disks and annuli are homotopy equivalent to their closures, it follows that $F^{-1}((-\infty,0))$ is homotopy equivalent to $\overline{F^{-1}((-\infty,0))}$, proving the claim.

\end{proof} 
\section{Stability}
In this section we aim to prove Proposition~\ref{prop:robust_to_pertubation}.
Before we prove the stability of our construction, we prove stability for a wider range of neural networks. Throughout this section we will use definitions and statements from Section~\ref{subsubsection:polyhedra}.
First, we define the realization map, that maps, for a given architecture, a vector of weights to its corresponding neural network.
\label{stabilitysection}
\begin{def1}[The realization map]
\label{def:realization map}
Let $(n_0, \ldots n_{L+1})$ be an architecture. Its \emph{ corresponding parameter space} is given by \mbox{$\R^D \cong \bigoplus^{L+1}_{\ell=1} \R^{(n_{\ell-1} +1) \times n_{\ell}}$},  where the vector space isomorphism is given by $p \mapsto (A^{(\ell)}(p),b^{(\ell)}(p))_{\ell=1,...,L+1}$ for $A^{(\ell)}(p) \in \R^{n_{\ell-1} \times n_\ell}, b^{(\ell)}(p) \in \R^{n_\ell}$ and $ \ell =1,\ldots,L+1$. 
For a polyhedron  $K\subseteq \R^d$, we define $\Phi \colon \R^D \to C(K)$ to be the \emph{realization map} that assigns to a vector of weights the function the corresponding neural network computes, i.e., \[\Phi(p) \coloneqq T_{L+1}(p) \circ \sigma_{n_L} \circ T_{L}(p) \circ \cdots \circ \sigma_{n_1} \circ T_1(p)\]
where $T_\ell(p) \colon \R^{n_{\ell-1}} \to \R^{n_\ell}$, $x \mapsto A^{(\ell)}(p) x +b^{(\ell)}(p)$.

Furthermore let \[\Phi^{(\ell)}(p) \coloneqq T_\ell(p) \circ \sigma_{n_\ell} \circ \cdots \circ \sigma_{n_1} \circ T_0(p)\] and  \[\Phi^{(i,\ell)}(p) \coloneqq \pi_i \circ T_{\ell}(p) \circ \cdots \circ \sigma_{n_1} \circ T_0(p),\] which we call the map computed at neuron $(i,\ell)$.   We denote the points of non-linearity introduced by the $i$-th neuron in the $\ell$-th layer by \[\Tilde{H}_{i,\ell}(p) \coloneqq H\left(A_i^{(\ell)}(p),b_i^{(\ell)}(p)\right)\]
\end{def1}
Now we define a sequence of polyhedral complexes associated to a neural network. Every polyhedral complex $\Po^{(i,\ell)}$ in this sequence corresponds to a refinement of the input space $K$ such that the CPWL map computed at a neuron $(i,\ell)$ in the neural network as well as the CPWL map computed at any neuron $(j,k)$ with $(j,k)$ lexicographically smaller than $(i,\ell)$ is affine linear on all polyhedra in $\Po^{(i,\ell)}$. Moreover, it is a refinement of the previous polyhedral complexes $\Po^{(i-1,\ell)}$ in this sequence that results by intersecting $\Po^{(i-1,\ell)}$ with the pullbacks of $\Tilde{H}^s_{i,\ell}(p), s \in \{-1,0,1\}$ by $\Phi^{(\ell)}(p)$.
\begin{def1}[Canonical polyhedral complex (\cite{grigsby2022local})]
\label{def:canonicalPC}
Let $p\in \R^D$ be a vector of weights. Recall that $\Phi(p)$ is the corresponding neural network.

    We recursively define polyhedral complexes $\mathcal{P}^{(i,\ell)}(p, K)$ by \[\mathcal{P}^{(0,0)}(p, K) \coloneqq \{F  \mid F \text{ is a face of } K\}\] and 
     \[\mathcal{P}^{(i,\ell)}(p, K) \coloneqq \{R \cap (\Phi^{(\ell-1)}(p))^{-1}(\Tilde{H}^s_{i,\ell}(p))\mid R \in \mathcal{P}^{(i-1,\ell)}(p, K), s \in \{-1,0,1\} \}\] 
     for $i =2,\ldots n_\ell, \ell=1,\ldots L$ and
    \[\mathcal{P}^{(1,\ell)}(p, K) \coloneqq \{R \cap (\Phi^{(\ell-1)}(p))^{-1}(\Tilde{H}^s_{i,\ell}(p, K))\mid R \in \mathcal{P}^{(n_{\ell-1},\ell-1)}(p, K), s \in \{-1,0,1\} \}\] 
    for $\ell=1,\ldots L$.
    
\end{def1}

Note that for all $j \leq i$, it holds that $\Phi^{(j,\ell)}(p)$  is affine linear on $R$ for each $R \in \mathcal{P}^{(i,\ell)}(p)$ and we denote this affine linear map by $\Phi_{\mid R}^{(j,\ell)}(p)$. For $\ell \in [L], i \in [n_{\ell}]$ and $R \in \mathcal{P}_d^{(i,\ell)}(p, K)$ we denote the points of non-linearity in the region $R$  with respect to the first $\ell-1$ layer map introduced by the $i$-th neuron in the $\ell$-th layer by \[H_{i,\ell,R}(p) \coloneqq (\Phi_{\mid R}^{(\ell-1)}(p))^{-1}(\Tilde{H}_{i,\ell}(p)) = H\left(A_i^{(\ell)}(p)\left(\Phi_{\mid R}^{(\ell-1)}(p)(x)\right),b_i^{(\ell)}(p) \right).\]
For the sake of simplification we set $\Po^{(0,\ell)}(p,K) \coloneqq \Po^{(n_{\ell -1},\ell-1)}(p,K).$ Furthermore, since for $R \in \mathcal{P}_d^{(i-1,\ell)}(p,K), F \in \mathcal{P}^{(i-1,\ell)}(p,K), F \preceq R$ it holds that \[F \cap H^s_{i,\ell,R}(p) = F \cap (\Phi^{(\ell-1)}(p))^{-1}(\Tilde{H}^s_{i,\ell}(p))\] due to the continuity of the function $\Phi(u)$, we have that  \[\mathcal{P}^{(i,\ell)}(p) = \{F \cap H^s_{i,\ell,R}(p) \mid R \in \mathcal{P}_d^{(i-1,\ell)}(p), F \in \mathcal{P}^{(i-1,\ell)}(p), F \preceq R, s \in \{-1,0,1\} \}\] 
We call $\Po(p,K) \coloneqq \Po^{(n_L,L)}(p,K)$ the \emph{canonical polyhedral complex} of $\Phi(u)$ (with respect to $K$). We omit $p$ respectively $K$ whenever $p$ respectively $K$ is clear from the context.

In the following, we find a sufficient condition for a neural network, which is, that the points of non-linearity introduced at any neuron $(i,\ell)$ do not intersect the vertices of the polyhedral complex $\mathcal{P}^{(i,\ell)}(p, K)$, to compute a ``similar looking map on a polytope $K$'' $\Phi(u)$ for ``close enough'' parameters $u$. Note that the boundedness of the polyhedron is required because any perturbation of two parallel unbounded $(d-1)-$faces results in new intersection patterns in the corresponding polyhedral complex.

     \begin{def1}
     \label{def:combstable}
     Let $K\subseteq \R^d$ be a polytope and $\Phi(p) \colon K \to \R$ be a ReLU neural network of architecture $(n_0, \ldots n_{L+1})$. Then we call $\Phi(p)$ \emph{combinatorially stable} (with respect to $K$) if for every $ \ell \in [L+1], i \in [n_\ell]$ and all $R \in \Po_{d}^{(i-1,\ell)}(p,K)$ it holds 
     \begin{enumerate}[label=(\roman*)]
         \item $\dim(H_{i,\ell,R}(p)) = d-1$ and 
         \item $H_{i,\ell,R}(p) \cap R_0 = \emptyset$. 
     \end{enumerate}
     \end{def1}

     We will now prove that this condition is indeed sufficient.
    \begin{prop}
    \label{prop:combstable}
    Let $K$ be a polytope and $\Phi(p) \colon K \to \R$ be a stable ReLU neural network of architecture $(n_0, \ldots n_{L+1})$. Then for every $\varepsilon > 0$, there is a an open set $U \subseteq \R^D$ such that for every $u \in U$ there is an $\varepsilon$-isomorphism $\varphi_u \colon \mathcal{P}(p,K) \to \mathcal{P}(u,K)$.
    \end{prop}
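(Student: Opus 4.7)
The natural strategy is induction on the lexicographic order of the neuron indices $(\ell,i)$, strengthening the conclusion to the following statement $\mathbb{P}^{(i,\ell)}$: for every $\varepsilon'>0$ there exists an open neighborhood $U^{(i,\ell)}\subseteq\R^D$ of $p$ such that for every $u\in U^{(i,\ell)}$ one can construct an $\varepsilon'$-isomorphism $\varphi^{(i,\ell)}_u\colon \mathcal{P}^{(i,\ell)}(p,K)\to \mathcal{P}^{(i,\ell)}(u,K)$. The base case $(i,\ell)=(0,0)$ is trivial since $\mathcal{P}^{(0,0)}(p,K)=\hat K=\mathcal{P}^{(0,0)}(u,K)$ and the identity is an $\varepsilon'$-isomorphism for every $\varepsilon'>0$. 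The target proposition then follows from $\mathbb{P}^{(n_L,L)}$.

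For the inductive step from $(i-1,\ell)$ to $(i,\ell)$, fix $\varepsilon'>0$ and process one full-dimensional region $R\in \mathcal{P}_d^{(i-1,\ell)}(p,K)$ at a time. By combinatorial stability, $H_{i,\ell,R}(p)$ is a genuine hyperplane and satisfies $R_0\cap H_{i,\ell,R}(p)=\emptyset$, so Lemma~\ref{lemma:PolytopeHyperplane} yields $\delta^H_R>0$ such that any hyperplane whose defining pair is $\delta^H_R$-close to that of $H_{i,\ell,R}(p)$ induces an $(\varepsilon'/2)$-isomorphism on the three slices $R\cap H^s_{i,\ell,R}(p)$, $s\in\{-1,0,1\}$. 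Dually, Lemma~\ref{lemma:PolytopePolytope} yields $\delta^P_R>0$ such that any $\delta^P_R$-isomorphism $R\to R'$ induces an $(\varepsilon'/2)$-isomorphism of the slices $R\cap H^s_{i,\ell,R}(p)\to R'\cap H^s_{i,\ell,R}(p)$.

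Next, choose $U^{(i,\ell)}\subseteq U^{(i-1,\ell)}$ small enough so that (a) the inductive $\varphi^{(i-1,\ell)}_u$ is a $\min_R\delta^P_R$-isomorphism, and (b) the coefficients of the affine equation defining $H_{i,\ell,\varphi^{(i-1,\ell)}_u(R)}(u)$ differ from those of $H_{i,\ell,R}(p)$ by less than $\min_R\delta^H_R$ for every full-dimensional $R$. Condition (b) follows from continuity: the coefficients of the affine map $\Phi^{(\ell-1)}_{|\varphi^{(i-1,\ell)}_u(R)}(u)$ depend continuously on $u$ in a neighborhood where the combinatorial structure of $\mathcal{P}^{(i-1,\ell)}$ is preserved (which is exactly what $\mathbb{P}^{(i-1,\ell)}$ guarantees), and likewise for $A_i^{(\ell)}(u)$ and $b_i^{(\ell)}(u)$. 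Composing the $(\varepsilon'/2)$-isomorphism from Lemma~\ref{lemma:PolytopePolytope} with the one from Lemma~\ref{lemma:PolytopeHyperplane} then produces, for each $R$, an $\varepsilon'$-isomorphism $R\cap H^s_{i,\ell,R}(p)\to \varphi^{(i-1,\ell)}_u(R)\cap H^s_{i,\ell,\varphi^{(i-1,\ell)}_u(R)}(u)$.

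The anticipated main obstacle is to glue these regionwise maps into a well-defined $\varphi^{(i,\ell)}_u$ of polyhedral complexes. For any shared face $F=R_1\cap R_2$ the restrictions of $H_{i,\ell,R_1}(p)$ and $H_{i,\ell,R_2}(p)$ to $F$ coincide by continuity of $\Phi^{(\ell-1)}(p)$, and similarly after perturbation; and both local constructions in Lemmas~\ref{lemma:PolytopeHyperplane} and~\ref{lemma:PolytopePolytope} fix the preexisting vertices and define new vertices deterministically, as the unique intersection point of the perturbed hyperplane with each $1$-face it meets. Hence the regionwise isomorphisms agree on shared subfaces and assemble into a single $\varepsilon'$-isomorphism $\varphi^{(i,\ell)}_u\colon \mathcal{P}^{(i,\ell)}(p,K)\to \mathcal{P}^{(i,\ell)}(u,K)$, closing the induction.
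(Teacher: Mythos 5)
Your proposal follows essentially the same route as the paper: a strengthened induction over the lexicographically ordered neurons, applying Lemma~\ref{lemma:PolytopePolytope} to transport each face along the inductive isomorphism and Lemma~\ref{lemma:PolytopeHyperplane} to account for the perturbed hyperplane, then gluing the facewise maps via the compatibility of $H_{i,\ell,R}(p)$ across shared faces. The only minor difference is that the paper splits the hyperplane perturbation into two explicit $\tfrac{\varepsilon}{3}$-steps (first the previous-layer coefficients via the intermediate hyperplane $H_{\varphi_u(R)}(u,p)$, then the current neuron's weights, with explicit constants $C$ and $E$), whereas you merge these into a single step justified by continuity of the region-wise affine maps in $u$, which amounts to the same argument.
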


    \begin{proof}
    We will prove the following stronger statement by induction on the indexing of the neurons.
    \begin{claim}
    For every $ \ell \in [L+1]$, $i \in [n_\ell]$ and every $\varepsilon > 0$, there is a $\delta>0$ such that for all $u \in B_\delta^{\|\cdot\|_\infty}(p)$ there is an $\varepsilon$-isomorphism $\varphi_u^{(i,\ell)} \colon \Po^{(i,\ell)}(p) \to \Po^{(i,\ell)}(u)$.
    \end{claim}
    The induction base is trivially satisfied. \newline
    So we assume that the statement holds for $(i-1,\ell)$. For simpler notation we denote $\varphi_u^{(i-1,\ell)}$ by $\varphi_u$ and $H_{i,\ell,R}(p)$ by $H_{R}(p)$.
    Let $\varepsilon > 0$ and $F \in \Po^{(i-1,\ell)}(p)$. There is an $R \in \Po_d^{(i-1,\ell)}(p)$ such that $F \preceq R$.  In the following we wish to find a $\delta_F >0$ such that there are $\varepsilon$-isomorphisms \[\varphi^{(i,\ell)}_{(u,R,s)} \colon F \cap H^s_{R}(p) \to \varphi_u(F) \cap H^s_{\varphi_u(R)}(u)\] for $s\in \{-1,0,1\}$ and all $u \in B_{\delta_F}^{\|\cdot\|_\infty}(p)$.

    Since $\Phi(p)$ is stable, we obtain by Lemma~\ref{lemma:PolytopePolytope} a $\delta_2 > 0$ such that for all $\delta_2$-isomorphisms $\varphi \colon F \to Q$ there are $\frac{\varepsilon}{3}$-isomorphisms $\gamma^s \colon F \cap H^s_{R}(p) \to \varphi(F) \cap H^s_{R}(p)$. By the induction hypothesis we obtain $\delta_1>0$ such that for all $u \in B_{\delta_1}^{\|\cdot\|_\infty}(p)$ there is an $\delta_2$-isomorphism 
    \[\varphi_u \colon \Po(p)^{(i-1,\ell)} \to \Po(u)^{(i-1,\ell)}\] and hence we obtain $\frac{\varepsilon}{3}$-isomormorphisms \[\gamma^{(s,F)} \colon F \cap  H^s_{R}(p) \to \varphi_u(F) \cap H^s_{R}(p).\] 
    
    Let $H_{\varphi_u(R)}(u,p) \coloneqq H_{R}(u_{1,1},\ldots u_{i-1,\ell},p_{i,\ell},\ldots p_{n_{L+1},L+1})$ with $u_{j,k},p_{j,k} \in \R^{n_k}$ being the parameters associated to the $j$-th neuron in the $k$-th layer. Again, for simpler notation, let the affine maps $\Phi_{\mid R}^{(\ell-1)}(p)$  be given by $x \mapsto Mx + c$ and $\Phi_{\mid \varphi_u(R)}^{(\ell-1)}(u)$ by $x \mapsto Nx + d$ and the non-linearity points introduced by the $i$-th neuron in the $\ell$-th layer by $\Tilde{H}_{i,\ell}(p) = H(a,b)$. Then we have that 
    \[H_{R}(p) = H(a^TM,a^Tc + b)\] and 
    \[H_{\varphi_u(R)}(u,p) = H(a^TN,a^Td + b).\] 
    
    By Lemma~\ref{lemma:PolytopePolytope} we know that $(\varphi_u(F))_0 \cap H_{R}(p) = \emptyset$ and hence by Lemma~\ref{lemma:PolytopeHyperplane} there is a $\delta_3>0$ such that there are $\frac{\varepsilon}{3}$-isomorphisms $\psi^s \colon \varphi_u(F) \cap H^s_{R}(p) \to \varphi_u(F) \cap H^s(y,z)$ for all \mbox{$(y,z) \in B^{d+1}_{\delta_3}((a^TM,a^Tc+b))$}. Let $C \coloneqq n_{\ell-1}\cdot\max\limits_{i=1,...,n_{\ell-1}}\{a_i\}$ and $u \in \R^D$ with $\|u-p\|_\infty < \frac{\delta_3}{C}.$ Then we have that
    \begin{align*}
        \| (a^TM, a^Tc + b) - (a^TN, a^Td + b) \|_\infty &= \max_{i=1,\ldots,d}\bigg\{\sum_{j=1}^{n_{\ell-1}} a_j (m_{ij}-n_{ij}),\sum_{j=1}^{n_{\ell-1}}a_j(c_j-d_j)\bigg\} \\&<\max_{i=1,\ldots,d}\bigg\{\sum_{j=1}^{n_{\ell-1}} a_j \frac{\delta_3}{C},\sum_{j=1}^{n_{\ell-1}}a_j\frac{\delta_3}{C}\bigg\} \\& < \delta_3
    \end{align*}
    and hence there are $\frac{\varepsilon}{3}$-isomorphisms \[\psi^{(s,F)} \colon \varphi_u(F) \cap H^s_{R}(p) \to \varphi_u(F) \cap H^s_{\varphi_u(R)}(u,p)\]

    By Lemma~\ref{lemma:PolytopeHyperplane} we know that $(\varphi_u(F))_0 \cap H_{\varphi(R)}(u,p) = \emptyset$ and hence by the same lemma there is a $\delta_4>0$ such that there are $\frac{\varepsilon}{3}$-isomorphisms $\alpha^s \colon \varphi_u(F) \cap H^s_{\varphi(R)}(u,p) \to \varphi_u(F) \cap H^s(y,z)$ for all $(y,z) \in B^{d+1}_{\delta_4}((a^TN,a^Td+b))$. Let $a' \in \R^{n_{\ell-1}},b' \in \R$ such that $\Tilde{H}_{i,\ell}(u) = H(a'^T,b')$. Then we have that \[H_{\varphi_u(R)}(u) = H( a'^TN,a'^Td + b').\]
    Let $E \coloneqq n_{\ell-1}\cdot\max\limits_{i,j=1,...,n_{\ell-1}}\{n_{ij},d_j\}$ and $u \in \R^D$ with $\|u-p\|_\infty < \frac{\delta_5}{E}.$ Then we have that

     \begin{align*}
        \| (a'^TN,a'^Td + b') - (a^TN, a^Td&+ b) \|_\infty\\
        &= \max_{i=1,\ldots,d}\bigg\{\sum_{j=1}^{n_{\ell-1}} n_{ij} (a_j'-a_j),\big(\sum_{j=1}^{n_{\ell-1}}d_{j}(a_j'-a_j)\big) + (b_j'-b_j) \bigg\} \\&<\max_{i=1,\ldots,d}\bigg\{\sum_{j=1}^{n_{\ell-1}} n_{ij} \frac{\delta_5}{E},\sum_{j=1}^{n_{\ell-1}}n_{ij}\frac{\delta_5}{E}\bigg\} \\& < \delta_4
    \end{align*}
    and hence there are  $\frac{\varepsilon}{3}$-isomorphisms \[\alpha^{(s,F)} \colon \varphi_u(F) \cap H^s_{\varphi(R)}(u,p) \to \varphi_u(F) \cap H^s_{\varphi_u(R)}(u).\]

    Let $\delta_{F} \coloneqq \min\{\delta_2,\frac{\delta_4}{C},\frac{\delta_5}{E}\}$, then for all $u \in B_{\delta_{F}}^{D,\|\cdot\|_\infty}(p)$ there is an $\varepsilon$-isomorphism \[\varphi^{(i,\ell)}_{(u,F,s)} \colon F \cap H^s_{R}(p) \to \varphi_u(F) \cap H^s_{\varphi_u(R)}(u)\] given by \[\varphi^{(i,\ell)}_{(u,F,s)} = \alpha^{(s,F)} \circ \psi^{(s,F)} \circ \gamma^{(s,F)}.\] 
    Lastly, let $\delta = \min\{\delta_F \mid F \in \Po^{(i-1,\ell)}(p)\}$. Since every element of $\Po^{(i,\ell)}(p)$ is of the form $F \cap H^s_{R}(p)$, it now remains to show that the map $\varphi_u^{(i,\ell)} \colon \Po^{(i,\ell)}(p) \to \Po^{(i,\ell)}(u)$ defined by \[\varphi_u^{(i,\ell)}(F \cap H^s_{R}(p)) \coloneqq \varphi_u(F) \cap H^s_{\varphi_u(R)}(u)\]
    is an $\varepsilon$-isomorphism for all $u \in B_\delta^{\|\cdot\|_\infty}(p).$ Since $\varphi_u$ and $\varphi^{(i,\ell)}_{(u,F,s)}$ are bijections, the same holds for $\varphi_u^{(i,\ell)}$. Furthermore let $G \preceq F \cap H^s_{R}(p)$, then there is a $G' \preceq F$ and a $s' \in \{0,s\}$ such that $G = G' \cap H^{s'}_{R}(p)$. Since $\varphi_u$ is an isomorphism by the induction hypothesis, it follows that 

    \[\varphi_u^{(i,\ell)}(G' \cap H^{s'}_{R}(p)) = \varphi_u(G') \cap H^{s'}_{\varphi_u(R)}(u) \preceq \varphi_u(F) \cap H^{s}_{\varphi_u(R)}(u)\]

    and hence $\varphi_u^{(i,\ell)}$ is an $\varepsilon$-isomorphism as claimed. 
    \end{proof}
    Taking the hyperplanes where the output layer equals zero into account, we define a topological stable parameter. 
    \begin{def1}
    \label{def:topstable}
    Let $K$ be a polytope and $\Phi(p) \colon K \to \R$ be a ReLU neural network of architecture $(n_0, \ldots,n_L,1)$. Then we call $\Phi(p)$ \emph{topologically  stable} if it is combinatorially stable (with respect to $K$), and for all $R \in \Po_{d}^{(n_L,L)}(p,K)$ it holds that 
     \begin{enumerate}[label=(\roman*)]
         \item $\dim(H_{1,L+1,R}(p)) = d-1$ and
         \item $H_{1,L+1,R}(p) \cap R_0 = \emptyset$. 
     \end{enumerate}
    \end{def1}
    We now prove that topologically stable is the right definition for our purposes, that is, finding an open set $U \subseteq \R^d$ with $p\in U$ such that the sublevel set of $\Phi(u)$ is homeomorphic to the sublevel set $\Phi(p)$ for all $u \in U$.

    \topstable*
    \begin{proof}
    Let 
    \begin{align*}
        \Po^-(p) &\coloneqq \{F \cap H^s_{1,L+1,R}(p) \mid R \in \mathcal{P}_d(p), F \in \mathcal{P}(p), F \preceq R, s \in \{-1,0\} \}\\ &= \{P \cap F^{-1}((-\infty,0]) \mid P \in \Po\}
    \end{align*}
    be the polyhedral complex consisting of all maximal subpolyhedron of $\Po(p)$ where $\Phi(p)$ takes on non-negative values. Analogously to the proof of Proposition~\ref{prop:combstable} we obtain a $\delta>0$ such that $\Po^-(p,K)$ and $\Po^-(u,K)$ are isomorphic as polyhedral complexes and hence in particular there is a homeomorphism $\varphi \colon |\Po^-(p,K)| \to |\Po^-(u,K)|$ for all $u \in B_\delta(p)$, where $|\Po^-(p,K)|$ denotes the support of $\Po^-(p,K)$. This concludes the proof since $|\Po^-(p,K)| = K \cap \Phi(p)^{-1}((-\infty,0])$ and  $|\Po^-(u,K)| = K \cap \Phi(u)^{-1}((-\infty,0])$.
\end{proof}
Having this at hand, we can finally show the stability of the constructed neural network in Theorem~\ref{cor:exactformulaclosure} for the lower bound of the topological expressive power.
\robust*
\begin{proof}
 Let $p \in \R^D$ such that $\Phi(p)=F$ from Theorem~\ref{cor:exactformulaclosure}. Then, since $\Phi(p)$ is topologically stable with respect to any cube it follows by Proposition~\ref{prop:topstable} that there is an open set in $\R^D$ containing $u$ such that $\Phi(u)^{-1}((-\infty,0)) \cap K$ is homeomorphic to $\Phi(p)^{-1}((-\infty,0)) \cap K$ for all $u \in U$, where $K$ is the unit cube.
\end{proof}

Using the results from above, we can even show that if $p$ is topologically stable, then also $\Phi(p)^{-1}((-\infty,0))$ is homeomorphic to $\Phi(u)^{-1}((-\infty,0))$ for all $u$ in an open set $U\subseteq \R^D$.
\begin{prop}
     For every topologically stable network $\Phi(p)$, there is a $\delta >0$ such that for all $u \in B_\delta(p)$, it holds that $K \cap \Phi(p)^{-1}((-\infty,0))$ is homeomorphic to $K \cap \Phi(u)^{-1}((-\infty,0))$.
\end{prop}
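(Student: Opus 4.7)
The plan is to refine the argument of Proposition~\ref{prop:topstable} so that it distinguishes the strict sublevel set from the weak one. The starting point is that the proof of Proposition~\ref{prop:topstable} does not merely produce a homeomorphism of supports, but in fact an isomorphism of polyhedral complexes $\varphi_u \colon \Po^-(p,K) \to \Po^-(u,K)$ for all $u$ in a sufficiently small ball $B_\delta(p)$. Crucially, the explicit formula from the proof of Proposition~\ref{prop:combstable}, namely $\varphi_u^{(i,\ell)}(F \cap H^s_R(p)) = \varphi_u(F) \cap H^s_{\varphi_u(R)}(u)$, shows that $\varphi_u$ preserves the sign index $s$ at every layer, and in particular at the output layer.

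The next step will be to single out the subcomplex $Z(p) \subseteq \Po^-(p,K)$ consisting of all polyhedra on which $\Phi(p)$ vanishes identically. These are exactly the polyhedra arising with $s=0$ at the final cutting step together with all their faces, so $Z(p)$ is a genuine subcomplex of $\Po^-(p,K)$. Topological stability condition (i) rules out $\Phi(p)|_R \equiv 0$ for any full-dimensional $R \in \Po_d(p,K)$, from which it will follow that $|Z(p)| = K \cap \Phi(p)^{-1}(\{0\})$. Since $\varphi_u$ preserves the sign index at the output layer, it restricts to an isomorphism $Z(p) \to Z(u)$.

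Finally, I would upgrade this polyhedral isomorphism to a PL-homeomorphism of supports $\psi \colon |\Po^-(p,K)| \to |\Po^-(u,K)|$ satisfying $\psi(|Z(p)|) = |Z(u)|$. Restricting $\psi$ to the complements of the zero strata then yields the desired homeomorphism
\[K \cap \Phi(p)^{-1}((-\infty,0)) = |\Po^-(p,K)| \setminus |Z(p)| \cong |\Po^-(u,K)| \setminus |Z(u)| = K \cap \Phi(u)^{-1}((-\infty,0)).\]

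The main obstacle will be the bookkeeping needed to identify $Z(p)$ with the full preimage $K \cap \Phi(p)^{-1}(\{0\})$: one must check that every point of $K$ where $\Phi(p)$ vanishes is captured by some polyhedron in $Z(p)$, and conversely that no polyhedron on which $\Phi(p) \not\equiv 0$ sneaks into $Z(p)$. Both directions reduce to topological stability condition (i), which guarantees that the output-layer hyperplane meets each full-dimensional region in dimension exactly $d-1$ and thus never yields a full-dimensional flat stratum.
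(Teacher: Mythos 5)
Your proposal is correct, and for the part of this proposition that is genuinely new it follows a different route than the paper. Both arguments start from the same machinery: the proof of Proposition~\ref{prop:topstable} (via Proposition~\ref{prop:combstable}) supplies, for $u$ close to $p$, a dimension-preserving isomorphism $\Po^-(p,K)\to\Po^-(u,K)$ which, by the explicit formula, preserves the sign index $s$ at the output layer. The paper then works with topological interiors: it uses stability to show that every zero of $\Phi(p)$ lies on the boundary of a full-dimensional region on which $\Phi(p)$ is strictly positive, deduces $|\Po^-(p,K)|^\circ=K^\circ\cap\Phi(p)^{-1}((-\infty,0))$ (and likewise for $u$, after noting that $\Phi(u)$ is again topologically stable), restricts the homeomorphism to the interiors, and then repeats the argument stratum by stratum over the proper faces of $K$, treating the vertices separately. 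You instead identify the zero locus with the support of the sign-$0$ subcomplex $Z(p)$, observe that the sign-preserving isomorphism carries $Z(p)$ onto $Z(u)$, upgrade to a cell-wise PL homeomorphism of supports, and delete the zero strata; this avoids both the interiority argument and the face-by-face bookkeeping on $\partial K$, and it actually gives slightly more, namely a homeomorphism of pairs $\left(|\Po^-(p,K)|,|Z(p)|\right)\cong\left(|\Po^-(u,K)|,|Z(u)|\right)$, so the closed sublevel sets, zero sets and strict sublevel sets are handled simultaneously. Two minor points: the identity $|Z(p)|=K\cap\Phi(p)^{-1}(\{0\})$ follows already from purity of the canonical complex and the fact that $H_{1,L+1,R}(p)$ is by definition the zero set of the affine extension of $\Phi(p)$ on $R$; condition (i) of Definition~\ref{def:topstable} is not what drives this equality, it rather guarantees that no full-dimensional cell lands in $Z(p)$, which is what keeps the refinement (and hence the isomorphism you want to restrict) dimension-preserving. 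Also, when passing from the combinatorial isomorphism to a homeomorphism of supports, make sure to take the cell-wise PL homeomorphism the paper already invokes, since an arbitrary homeomorphism of supports need not map $|Z(p)|$ onto $|Z(u)|$. Both are cosmetic; your argument goes through.
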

\begin{proof}
    We adapt the notation from the proof of Proposition~\ref{prop:topstable} and we know that $|\Po^-(p,K)|$ and $|\Po^-(u,K)|$ are homeomorphic.    

    We wish to show that $|\Po^-(p,K)|^\circ = K^\circ \cap \Phi(p)^{-1}((-\infty,0])$. Due to the continuity of $\Phi(u)$ it holds that $K^\circ \cap \Phi(p)^{-1}((-\infty,0)) \subseteq |\Po^-(p)|^\circ$. Let $x$ be chosen such that $\Phi(p)(x) = 0$, i.e., $x \in |\Po^-(p)| \setminus \left(K^\circ \cap \Phi(p)^{-1}((-\infty,0))\right)$.  Since $\Po(u)$ is a pure polyhedral complex, there is a full-dimensional polyhedron $R \in \Po_d(u)$ such that $x \in R$. It follows that $x \in H_{1,L+1,R}(p) \cap R$ with $\dim(H_{1,L+1,R}(p)) = d-1$. 
    
    Assume for sake of contradiction that $\dim(H^1_{1,L+1,R}(p) \cap R) < d$, then there would be a face $F \preceq R$ such that $F \subseteq H_{1,L+1,R}(p)$, which is a contradiction to $H_{1,L+1,R}(p) \cap R_0 = \emptyset$. Therefore, the space $H^1_{1,L+1,R}(p) \cap R$ is full-dimensional. As a result, $\Phi(u)$ takes on exclusively positive values on $(H^1_{1,L+1,R}(p) \cap R)^\circ \neq \emptyset$ and hence for every open subset $U \subseteq \R^d$ with $x \in U$, it holds that $U \cap \Phi(p)^{-1}((0,\infty)) \neq \emptyset$. Thus, $x \notin |\Po^-(p)|^\circ$ and hence \[|\Po^-(p)|^\circ = K^\circ \cap \Phi(p)^{-1}((-\infty,0)).\] 
    
    Since $\Po^-(p)$ and $\Po^-(u)$ are isomorphic and $\Phi(u)$ is also topologically stable due to Lemma~\ref{lemma:PolytopePolytope} and Lemma~\ref{lemma:PolytopeHyperplane}, the same arguments can be applied in order to show \[|\Po^-(u)|^\circ = K^\circ \cap \Phi(u)^{-1}((-\infty,0)).\]
   Observing that the restriction of $\varphi$ to the interiors $\varphi_{|\Po^-(p)|^\circ} \colon |\Po^-(p)|^\circ \to |\Po^-(u)|^\circ$ is a homeomorphism as well, we conclude that $K^\circ \cap \Phi(u)^{-1}((-\infty,0))$ and $K^\circ \cap \Phi(p)^{-1}((-\infty,0))$ are homeomorphic. 
    
    Let $F$ now be any face of $K$ with $\dim(F) \neq 0$, then by the same arguments it follows that $F^\circ \cap \Phi(u)^{-1}((-\infty,0))$ and $F^\circ \cap \Phi(p)^{-1}((-\infty,0))$ are homeomorphic. Furthermore, due to the fact that $\Phi(p)$ is topologically stable and the choice of $u$, if $\dim(F)=0$, the fact that \[F \subseteq K \cap \Phi(p)^{-1}((-\infty,0))\] implies that $F \subseteq K \cap \Phi(u)^{-1}((-\infty,0))$ and hence 
    \begin{align*}
        \partial K \cap \Phi(p)^{-1}((-\infty,0)) &= \left(\bigsqcup\limits_{\substack{F \preceq K, F \neq K \\ \dim(F) \neq 0}} F^\circ \sqcup \bigsqcup\limits_{F \in K_0} F\right) \cap \Phi(p)^{-1}((-\infty,0)) \\&\cong\left(\bigsqcup\limits_{\substack{F \preceq K, F \neq K \\ \dim(F) \neq 0}} F^\circ \sqcup \bigsqcup\limits_{F \in K_0} F\right) \cap \Phi(u)^{-1}((-\infty,0)) \\&= \partial K \cap \Phi(u)^{-1}((-\infty,0))
        \end{align*}
    Altogether, we conclude that $K \cap \Phi(p)^{-1}((-\infty,0))$ is homeomorphic to $K \cap \Phi(u)^{-1}((-\infty,0))$.
    \end{proof}

\section{Upper Bound}
\label{upperbound}
In this section we will provide a formal proof for the upper bounds on the Betti numbers of $F^{-1}((-\infty,0])$. For the sake of simplicity, we compute $\beta_k(F^{-1}((-\infty,0])$ using cellular homology. Ideally, we would like to equip $F^{-1}((-\infty,0])$ with a canonical CW-complex structure, i.e., the $k$-cells of the CW-complexes precisely correspond to the $k$-faces of the respective polyhedral complex, and attachment maps are given by face incidences (c.f.~Appendix~\ref{top_background}). However, $F^{-1}((-\infty,0])$ may contain unbounded polyhedra. In particular, an unbounded polyhedron cannot correspond to a CW-cell. To sidestep this issue, we construct a bounded polyhedral complex $\mathcal{Q}$ that is homotopy equivalent to $F^{-1}((-\infty,0])$.

The lineality space $L(P)$ of a polyhedron $P$ is defined as the vector space $V$ such that $p+v \in P$ for all $p \in P$ and $v \in V$. If $\Po$ is a complete $d$-dimensional polyhedral complex(i.e., $|\Po|=\R^d)$, then all polyhedra in $\Po$ have the same lineality space and hence the lineality space of the polyhedral complex is well-defined in this case.
\begin{lemma}
\label{lemma:linealityspace}
    Let $\Po$ be a $d$-dimensional polyhedral complex and let $\Po' \subseteq \Po$ be a subcomplex with $\#\Po'_k  \leq \binom{r}{d-k+1} $. Then there is a polyhedral complex $\mathcal{Q}$ such that \begin{enumerate}
        \item all polyhedra in $Q \in \mathcal{Q}$ contain a vertex,
        \item $|\mathcal{Q}|$ is a deformation retract of $|\Po' |$ and
        \item the number of $k$-faces $\#\mathcal{Q}_k$ is bounded by $\binom{r}{d-k-\ell+1}$, where $\ell$ is the dimension of the lineality space of $\Po$. 
    \end{enumerate} 
    \begin{proof}
    Let $V$ be the lineality space of $\Po$, $W \subseteq \R^d$ the subspace orthogonal to $V$ and $\pi \colon \R^d \to W$ the orthogonal projection. Then it holds that $\pi(P)$ is a face of $\pi(P')$ iff $ P \preceq P'$ for all $P,P' \in \Po'$ and $\mathcal{Q} = \{\pi(P) \mid P \in \Po'\}$ is a polyhedral complex. Furthermore, it holds that $\dim(P) = \dim(\pi(P)) + \ell$, where $\ell$ is the dimension of $V$ and therefore \[\#\mathcal{Q}_k = \#\Po'_{k+l} \leq \#\Po_{k+l} \leq \binom{r}{d-k-\ell+1}.\] Since $W$ is the lineality space of $\Po$ and hence also of $\Po'$, it follows that the map 
    \[R \colon |\Po'| \times  [0,1] \to |\mathcal{Q}|\]
    given by $R(w+v,t) = w +(1-t)v$ with $v \in V, w \in W, w+v \in |\Po'|$ is continuous and therefore a deformation retraction, proving the claim.    
    \end{proof}
\end{lemma}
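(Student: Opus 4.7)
The plan is to construct $\mathcal{Q}$ by collapsing the lineality directions of $\Po$. Let $V$ denote the common lineality space of the polyhedra of $\Po$, let $W\subseteq \R^d$ be its orthogonal complement, and let $\pi\colon\R^d\to W$ be the orthogonal projection. I would define $\mathcal{Q}\coloneqq \{\pi(P)\mid P\in\Po'\}$ and then verify the three properties in turn.

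First I would check that $\mathcal{Q}$ is a polyhedral complex whose combinatorics match those of $\Po'$. Since every $P\in\Po'$ satisfies $P = \pi(P) + V$, the projection $\pi(P)$ is a bounded polyhedron of dimension $\dim(P)-\ell$ that contains a vertex (property~1), and the poset structure is preserved, i.e.\ $F\preceq P$ in $\Po'$ if and only if $\pi(F)\preceq \pi(P)$ in $\mathcal{Q}$, with intersections computed face-wise. From the dimension shift it follows that
\[
\#\mathcal{Q}_k \;=\; \#\Po'_{k+\ell} \;\leq\; \binom{r}{d-(k+\ell)+1} \;=\; \binom{r}{d-k-\ell+1},
\]
yielding property~3.

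Next I would exhibit the deformation retraction of property~2 explicitly. Writing every $x\in|\Po'|$ uniquely as $x = w + v$ with $w\in W$ and $v\in V$, I would define $R\colon |\Po'|\times[0,1]\to|\Po'|$ by $R(x,t)\coloneqq w + (1-t)v$. Continuity is clear, $R(\cdot,0)=\mathrm{id}$, and $R(\cdot,1)$ lands in $|\mathcal{Q}|$ and fixes $|\mathcal{Q}|$ pointwise.

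The only real thing to verify — and the step that requires the lineality hypothesis — is that $R(x,t)\in|\Po'|$ for all intermediate $t$. Fix $P\in\Po'$ with $x\in P$. Since $V\subseteq L(P)$, the point $w = x - v$ also lies in $P$, and by convexity $R(x,t) = tw + (1-t)(w+v)\in P$ as well. Hence the straight-line homotopy stays in the same polyhedron throughout, giving a genuine deformation retraction of $|\Po'|$ onto $|\mathcal{Q}|$. The main subtlety is exactly this invariance argument, together with the initial observation that a common lineality space for the full-dimensional polyhedra exists; once those are in place, the rest is routine.
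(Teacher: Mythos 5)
Your proposal follows essentially the same route as the paper's proof: project $\Po'$ along the common lineality space $V$ onto its orthogonal complement $W$, use the preserved face poset and the dimension shift $\dim(P)=\dim(\pi(P))+\ell$ to obtain the counting bound, and retract via the straight-line homotopy $R(w+v,t)=w+(1-t)v$, whose invariance you justify correctly by convexity and $V\subseteq L(P)$. One minor correction: $\pi(P)$ need not be \emph{bounded} as you claim, only \emph{pointed} (its lineality space is trivial because $V$ has been projected away), which is exactly what property~1 requires.
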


\begin{lemma}\label{lemma:rpositive}
    Let $P \subseteq \R^d$ be a $d$-dimensional unbounded pointed polyhedron for some $d>0$. Then we have $P\cong \R^{d-1}\times [0,\infty)$. 
\end{lemma}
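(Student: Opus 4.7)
The plan is to induct on $d$, exploiting a recession direction of $P$ to peel off one dimension. The base case $d=1$ is immediate: a $1$-dimensional pointed unbounded polyhedron in $\R$ is a closed half-line, which is homeomorphic to $[0,\infty)=\R^0\times[0,\infty)$. For the inductive step, since $P$ is pointed and unbounded, its recession cone $\operatorname{rec}(P)$ is a nonzero pointed cone; pick a nonzero $u$ in it and, after a linear change of coordinates, assume $u=e_d$. Let $\pi\colon\R^d\to\R^{d-1}$ be the projection onto the first $d-1$ coordinates and set $Q:=\pi(P)$, a $(d-1)$-dimensional polyhedron. For each $q\in Q$, the fiber $P\cap\pi^{-1}(q)$ is convex and closed, invariant under translation by $e_d$, but cannot be the entire vertical line (as $P$ contains no line by pointedness); hence it equals $\{q\}\times[\alpha(q),\infty)$ for a well-defined function $\alpha\colon Q\to\R$. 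Since $P$ is polyhedral, $\alpha$ is piecewise-linear, convex and continuous, so $(q,t)\mapsto(q,t-\alpha(q))$ is a homeomorphism $P\cong Q\times[0,\infty)$.

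It then suffices to show $Q\times[0,\infty)\cong\R^{d-1}\times[0,\infty)$ for any $(d-1)$-dimensional polyhedron $Q\subseteq\R^{d-1}$. Decomposing $Q=L\oplus Q_0$ with $L\cong\R^\ell$ the lineality space of $Q$ and $Q_0\subseteq L^\perp$ pointed of dimension $d-1-\ell$: if $Q_0$ is bounded, then $Q_0\cong\overline{D}^{d-1-\ell}$, and the result reduces to a sublemma $\overline{D}^k\times[0,\infty)\cong\R^k\times[0,\infty)$; if $Q_0$ is unbounded pointed, the inductive hypothesis gives $Q_0\cong\R^{d-\ell-2}\times[0,\infty)$, and combining the factors via the identification $[0,\infty)^2\cong\R\times[0,\infty)$ concludes the case.

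The main obstacle lies in verifying the sublemma $\overline{D}^k\times[0,\infty)\cong\R^k\times[0,\infty)$ and the auxiliary identification $[0,\infty)^2\cong\R\times[0,\infty)$. The latter admits an explicit angle-doubling witness in polar coordinates, $(r\cos\theta,r\sin\theta)\mapsto(r\cos 2\theta,r\sin 2\theta)$, which maps the first quadrant onto the upper half-plane. The sublemma then follows by induction on $k$: the base case $[0,1]\times[0,\infty)\cong\R\times[0,\infty)$ is the standard identification of a closed half-strip with the closed half-plane (both are contractible $2$-manifolds whose boundary is connected and homeomorphic to $\R$), and the inductive step uses $[0,1]^k\times[0,\infty)\cong[0,1]^{k-1}\times\R\times[0,\infty)\cong\R^{k-1}\times[0,\infty)\times\R=\R^k\times[0,\infty)$, invoking the $k=1$ case together with the inductive hypothesis. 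All remaining steps are standard manipulations in polyhedral geometry.
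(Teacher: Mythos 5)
Your proposal is correct in substance but takes a genuinely different route from the paper. The paper's argument is a truncation-and-radial-projection one: after translating an interior point of $P$ to the origin, it intersects $P$ with a ball $B_r(0)$ large enough to contain all vertices, identifies the truncated convex body with a half-space-like piece by radial projection from the origin onto the disk $D^d_r$, and then recovers $P$ from the truncation by rescaling along the extreme rays. You instead induct on the dimension: using a recession direction $e_d$ you exhibit $P$ as the epigraph of a polyhedral convex function $\alpha$ over the projection $Q=\pi(P)$ (so $P\cong Q\times[0,\infty)$, with closedness of $\pi(P)$ and continuity of $\alpha$ guaranteed precisely by polyhedrality), then split off the lineality space of $Q$ and finish either by the inductive hypothesis (pointed part unbounded) or by the product identifications $[0,\infty)^2\cong\R\times[0,\infty)$ and $[0,1]^k\times[0,\infty)\cong\R^k\times[0,\infty)$ (pointed part bounded). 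Your route is more structural and elementary, at the cost of more bookkeeping; the paper's is shorter but leans on geometric intuition about radial projection and scaling that it does not spell out. The one soft spot in your write-up is the base case $[0,1]\times[0,\infty)\cong\R\times[0,\infty)$: the parenthetical ``both are contractible $2$-manifolds whose boundary is connected and homeomorphic to $\R$'' is not by itself a proof, since it tacitly invokes a classification of noncompact surfaces with boundary. It is easily repaired with an explicit map, e.g.\ $(x,y)\mapsto\bigl(\tfrac{x}{1+y},\tfrac{y}{1+y}\bigr)$ carries the half-strip homeomorphically onto a closed triangle minus one vertex, hence onto a closed disk minus a boundary point, i.e.\ onto the closed half-plane; alternatively, collapse one closed boundary edge of $[0,1]\times[0,1]$ to a point and note that the quotient map restricts to a homeomorphism on the saturated open complement. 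With that patch, all steps of your argument go through.
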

\begin{proof}
Without loss of generality, we may assume that $0\in P^\circ$  (else we apply translation). There exists a radius $r$ such that all vertices of the polyhedron $P$ are in the interior of the open $r$-ball $B_r(0)\coloneqq \{x: \|x\|_2< r\}$. We show that the space $P\cap B_r(0)$ is homeomorphic to $\R^{d-1}\times [0,\infty)$. Finally, we observe that $P\cap B_r(0)\cong P$ by scaling points in the polyhedra along the extreme rays.

Now consider $P\cap B_r(0)\subset D_r^d$ as a subset of the closed $d$-dimensional disk of radius $r$, i.e., $D_r^d\coloneqq \{x: \|x\|_2\leq  r\}$. Because $P$ and $B_r(0)$ are both convex, so is their intersection, therefore, radial projection from the origin $0\in P$, which fixes the origin and maps the boundary of $P$ to the boundary of $D^d_r$ is the desired homeomorphism.
\end{proof}

\begin{lemma}
\label{lemma:boundingcomplexes}
    Let $\Po$ be a finite polyhedral complex such that each $P\in \Po$ contains a vertex. Then $\Po$ is homotopy equivalent to a bounded polyhedral complex $\mathcal{Q}$ such that the number of $k$-faces $\#\mathcal{Q}_k$ is at most the number of $k$-faces $\#\mathcal{P}_k$.
\end{lemma}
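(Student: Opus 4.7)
The plan is to take $\mathcal{Q} := \mathcal{B}(\Po) = \{P \in \Po : P \text{ is bounded}\}$, the bounded subcomplex of $\Po$, which is automatically a subcomplex because every face of a bounded polyhedron is bounded. By construction $\mathcal{Q}$ is bounded, and $\#\mathcal{Q}_k \le \#\Po_k$ holds trivially since $\mathcal{Q}_k \subseteq \Po_k$. Everything then reduces to proving $|\Po| \simeq |\mathcal{B}(\Po)|$, which I would establish by induction on the number of unbounded polyhedra in $\Po$; the base case of no unbounded polyhedra is immediate.

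For the inductive step, choose an inclusion-maximal unbounded polyhedron $P \in \Po$ (this exists because $\Po$ is finite and has at least one unbounded member), and set $\Po' := \Po \setminus \{P\}$. Maximality is crucial: it ensures $P$ is not a proper face of any element of $\Po$, so $\Po'$ remains closed under face-taking; and the only way to obtain $P$ as an intersection of two elements of $\Po$ is if both equal $P$, so $\Po'$ is also closed under intersections. Hence $\Po'$ is a polyhedral complex still satisfying the pointedness hypothesis, with one fewer unbounded polyhedron, and $\mathcal{B}(\Po') = \mathcal{B}(\Po)$. It therefore suffices to show $|\Po|$ deformation retracts onto $|\Po'|$, after which the induction hypothesis applied to $\Po'$ concludes the proof.

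The heart of the matter is to construct a strong deformation retraction $H_P \colon P \times [0,1] \to P$ of $P$ onto its proper-face boundary $\partial P := \bigcup_{F \prec P} F$ that fixes $\partial P$ pointwise. Working inside $\mathrm{aff}(P)$, Lemma~\ref{lemma:rpositive} yields a homeomorphism $P \cong \R^{d_P-1} \times [0,\infty)$, and by inspection of the radial-projection construction used there the topological boundary of $P$ in $\mathrm{aff}(P)$---which coincides with $\partial P$---is mapped to $\R^{d_P-1}\times\{0\}$. Transporting the canonical strong deformation retraction $((y,s),\tau) \mapsto (y, (1-\tau)s)$ of the half-space onto its boundary through this homeomorphism produces the desired $H_P$. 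Extending $H_P$ by the identity on $|\Po| \setminus P^\circ$ yields a continuous deformation retraction of $|\Po|$ onto $|\Po| \setminus P^\circ = |\Po'|$; well-definedness at the overlap $\partial P = P \cap (|\Po| \setminus P^\circ)$ is guaranteed by the boundary-fixing property of $H_P$.

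The main obstacle is justifying the boundary-preservation of the homeomorphism supplied by Lemma~\ref{lemma:rpositive}. If one prefers to bypass that delicate point, a cleaner alternative is to observe that $(P, \partial P)$ is a CW pair with both $P$ and $\partial P$ contractible (the latter being homeomorphic to $\R^{d_P-1}$ as the half-space boundary), so the inclusion $\partial P \hookrightarrow P$ is a cofibration and a homotopy equivalence, whence standard cofibration theory produces a strong deformation retraction of $P$ onto $\partial P$ fixing $\partial P$. Either way, composing with the induction hypothesis gives $|\Po| \simeq |\Po'| \simeq |\mathcal{B}(\Po')| = |\mathcal{B}(\Po)| = |\mathcal{Q}|$, completing the proof.
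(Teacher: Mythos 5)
Your proof is correct and takes essentially the same route as the paper: induction on the number of unbounded polyhedra, deleting an inclusion-maximal unbounded face and collapsing it onto its relative boundary via the half-space homeomorphism of Lemma~\ref{lemma:rpositive}, extended by the identity on the rest of the support. The only differences are cosmetic — you name the limiting complex explicitly as the bounded subcomplex and offer a cofibration-based fallback, which in any case still rests on the same boundary-preservation fact (invariance of domain) that the paper's argument also invokes.
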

\begin{proof}
    We prove the statement by induction on the number $\ell_\Po$ of unbounded faces of $\Po$. If $\ell_\Po=0$, we can pick $\mathcal{Q}=\Po$ because $\Po$ is bounded. 

    In the following, we construct a polyhedral complex $\mathcal{Q}'\simeq \Po$ with $\ell_{\mathcal{Q}'}=\ell_\Po-1$ and $\#\mathcal{Q}'_k\leq \#\mathcal{P}_k$. By induction hypothesis, we then obtain $\Po\simeq\mathcal{Q}'\simeq\mathcal{Q}$ and $\#\mathcal{Q}_k\leq \#\mathcal{Q}'_k\leq \#\mathcal{P}_k$ for each $k\in \mathbb{N}$, proving the statement.

    Let $P\in \Po$ be an unbounded $n$-dimensional polyhedron that is maximal with respect to inclusion. By Lemma~\ref{lemma:rpositive}, $P$ is homeomorphic to $\R^{n-1}\times [0,\infty)$. It is easy to observe that any homeomorphism (in particular the map $\phi\colon P\to \R^{n-1}\times[0,\infty)$ described in the proof of Lemma~\ref{lemma:rpositive}) maps the (topological) boundary $\partial P$ of $P$ precisely to the boundary $\R^{n-1}\times \{0\}\subset \R^{n-1} \times [0,\infty)$ of the codomain. Hence we have the following commutative diagram:

     \begin{center}
            \begin{tikzcd}
             P \arrow[hookleftarrow]{r}{\iota}\arrow[d, "\cong", "\phi"']& \partial P \arrow[d, "\cong", "\phi|_{\partial P}"']\\
        \R^{n-1}\times[0,\infty) \arrow{r}{\mathrm{pr}}  & \R^{n-1}\times \{0\}
    \end{tikzcd}
    \end{center}
    where $\iota\colon \partial P \hookrightarrow P$ is the canonical inclusion and $\mathrm{pr}$ is the canonical projection onto the first $d-1$ components.
    
    Since $P$ is maximal with respect to inclusion, $\mathcal{Q}'\coloneqq\Po\setminus\{P\}$ is a polyhedral complex with support $|\Po|\setminus \interior{P}$. Moreover, we have  $\ell_{\mathcal{Q}'}=\ell_\Po-1$ and $\#\mathcal{Q}'_k\leq \#\mathcal{P}_k$ for all $k\in \mathbb{N}$.
    
    Using $\Psi\coloneqq(\phi|_{\partial P})^{-1}\mathrm{pr}\circ \phi$, we define the following map $\Sigma\colon|\Po|\to |\Po|\setminus \interior{P}$:
    \begin{center}
        \[\Sigma(x)=\begin{cases}
            \Psi(x), &x\in P\\
            x, &\text{ else}
        \end{cases}\]
    \end{center}

    Note that this map is well-defined and continuous because one can easily observe that $\Psi(x)=x$ for all $x\in \partial P$. Moreover, it is a retraction, in particular a homotopy equivalence. 
\end{proof}

\begin{lemma}
\label{lemma:polyhedraltoCW}
    Let $\Po$ be a subcomplex of a $d$-dimensional complete polyhedral complex. Then there is a CW-complex $X$, such that \begin{enumerate}
        \item $|\Po|$ is homotopy equivalent to $X$ and 
        \item and the number of $k$-cells of $X$ is bounded by $\#\Po_{k+\ell}$,
    \end{enumerate}  
    where $\ell$ is the dimension of the lineality space of $\Po$.
\end{lemma}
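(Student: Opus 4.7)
The plan is to chain together Lemmas~\ref{lemma:linealityspace} and~\ref{lemma:boundingcomplexes} and then endow the resulting bounded polyhedral complex with its canonical CW-structure. More precisely, first I would apply Lemma~\ref{lemma:linealityspace} to $\Po$ (viewed as a subcomplex of the given complete $d$-dimensional polyhedral complex, whose lineality space is well-defined of dimension $\ell$) to obtain a polyhedral complex $\mathcal{Q}$ such that $|\mathcal{Q}|$ is a deformation retract of $|\Po|$, every polyhedron in $\mathcal{Q}$ contains a vertex, and $\#\mathcal{Q}_k = \#\Po_{k+\ell}$. Note that although Lemma~\ref{lemma:linealityspace} states the weaker count $\#\mathcal{Q}_k\le \binom{r}{d-k-\ell+1}$, its proof actually exhibits the equality $\#\mathcal{Q}_k=\#\Po_{k+\ell}$ via the orthogonal projection $\pi$, which is what is needed here.

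Next, I would apply Lemma~\ref{lemma:boundingcomplexes} to $\mathcal{Q}$. Since $\mathcal{Q}$ is a finite polyhedral complex every face of which is pointed (contains a vertex), this produces a bounded polyhedral complex $\mathcal{Q}'$ with $|\mathcal{Q}'|\simeq |\mathcal{Q}|$ and $\#\mathcal{Q}'_k\le \#\mathcal{Q}_k$ for every $k$. Composing the deformation retract from the first step with this homotopy equivalence yields $|\Po|\simeq |\mathcal{Q}'|$, while the number of $k$-faces is still bounded by $\#\Po_{k+\ell}$.

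Finally, since $\mathcal{Q}'$ is a bounded (hence finite and compact) polyhedral complex, it carries a canonical CW-structure in which the $k$-cells are exactly the $k$-dimensional polyhedra of $\mathcal{Q}'$ and the characteristic maps are the inclusions of each closed face into $|\mathcal{Q}'|$; this is precisely the construction described in Appendix~\ref{top_background}, and the closure-finiteness and weak-topology axioms follow from finiteness and the fact that every face of a polyhedron in $\mathcal{Q}'$ is again in $\mathcal{Q}'$. Taking $X$ to be this CW-complex, we have $|X|=|\mathcal{Q}'|\simeq|\Po|$ and the number of $k$-cells of $X$ equals $\#\mathcal{Q}'_k\le \#\mathcal{Q}_k=\#\Po_{k+\ell}$, which gives both claims.

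The only subtle point in this argument is the first step: one has to check that the orthogonal projection along the lineality space of the ambient complete complex is actually injective on each face of $\Po$ and preserves the face poset, so that the polyhedral-complex structure really does transport and the $k$-face count drops by exactly $\ell$. This is implicit in the proof of Lemma~\ref{lemma:linealityspace}, but it is worth stating explicitly, since the remaining two steps (bounding via Lemma~\ref{lemma:boundingcomplexes} and endowing a bounded polyhedral complex with its canonical CW-structure) are essentially routine.
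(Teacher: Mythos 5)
Your proposal is correct and follows essentially the same route as the paper, whose proof simply chains Lemma~\ref{lemma:linealityspace}, Lemma~\ref{lemma:boundingcomplexes}, and the canonical CW-structure on bounded polyhedral complexes from Appendix~\ref{top_background}. Your explicit remark that one must use the exact count $\#\mathcal{Q}_k=\#\Po_{k+\ell}$ from the proof of Lemma~\ref{lemma:linealityspace} (rather than the binomial bound in its statement) is a fair observation about a detail the paper leaves implicit, but it does not constitute a different argument.
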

\begin{proof}
    Follows by Lemma~\ref{lemma:linealityspace}, Lemma~\ref{lemma:boundingcomplexes} and the fact that bounded polyhedral complexes are canonically homeormorphic to CW-complexes(c.f. Appendix \ref{top_background}).
\end{proof}

\begin{lemma}
 \label{lemma:boundingbettinumberbyfaces}
    Let $C$ be a bounded $d$-dimensional polyhedral complex such that $|C|$ is contractible and $X$ a subcomplex of $C$. Then it holds that\[\beta_k(X) \leq \#\{(k+1)-\text{dimensional polyhedra in } C\setminus X\}\]
    for all $k \in [d-1]$.
\end{lemma}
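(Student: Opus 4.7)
My plan is to exploit the long exact sequence of the pair $(|C|,|X|)$ together with the cellular chain complex for relative homology. Since $C$ is bounded, we may regard $C$ (and its subcomplex $X$) as finite CW complexes via the canonical CW-structure on polyhedral complexes (cf.\ Appendix~\ref{top_background}), so cellular and singular homology agree and the relative cellular chain complex is available.

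First, I would invoke contractibility of $|C|$ to conclude that $H_k(|C|)=0$ for all $k\geq 1$. Then the long exact sequence of the pair reads
\[
\cdots\to H_{k+1}(|C|)\to H_{k+1}(|C|,|X|)\to H_k(|X|)\to H_k(|C|)\to\cdots .
\]
For $k\geq 1$, both outer groups vanish, so the connecting map $H_{k+1}(|C|,|X|)\to H_k(|X|)$ is surjective, which yields
\[
\beta_k(X)\leq \operatorname{rk} H_{k+1}(|C|,|X|).
\]
(The case $k\in[d-1]$ required by the statement starts at $k=1$, so I do not need to worry about $k=0$; this is precisely the range in which the contractibility of $|C|$ kills the right-hand term.)

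Next, I would compute $H_{k+1}(|C|,|X|)$ using relative cellular homology. The relative cellular chain group $C^{\mathrm{cell}}_{k+1}(|C|,|X|)$ is the free abelian group generated by the $(k+1)$-cells of $C$ that are not in $X$, i.e.\ by the $(k+1)$-dimensional polyhedra in $C\setminus X$. Since $H_{k+1}(|C|,|X|)$ is a subquotient of this free abelian group, its rank is bounded by the number of generators:
\[
\operatorname{rk} H_{k+1}(|C|,|X|)\leq \#\{(k+1)\text{-dimensional polyhedra in } C\setminus X\}.
\]
Combining with the previous inequality gives the claim.

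The only mildly delicate point is ensuring that the CW-pair $(|C|,|X|)$ is well-behaved enough for the relative cellular chain complex to compute $H_*(|C|,|X|)$; this is standard for finite CW-pairs (which we have since $C$ is bounded and $X\subseteq C$ is a subcomplex), so no real obstacle arises. The entire argument is a two-line appeal to the long exact sequence plus the definition of the relative cellular chain complex.
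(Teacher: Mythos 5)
Your argument is correct, and it takes a genuinely different route from the paper. The paper proves the bound by induction on the number of $(k+1)$-dimensional polytopes of $C\setminus X$, deleting them one at a time and exhibiting explicit injections of cellular homology groups ($\beta_k$ increases by at most one per deleted $(k+1)$-cell, with the base case handled by contractibility of $|C|$ via its $(k+1)$-skeleton). You instead invoke the long exact sequence of the CW pair $(|C|,|X|)$: contractibility kills $H_{k+1}(|C|)$ and $H_k(|C|)$ for $k\geq 1$, so the connecting map $H_{k+1}(|C|,|X|)\to H_k(|X|)$ is surjective, and the relative cellular chain group in degree $k+1$ is free on exactly the $(k+1)$-cells of $C$ not in $X$, giving $\beta_k(X)\leq \rk H_{k+1}(|C|,|X|)\leq \#\{(k+1)\text{-cells of } C\setminus X\}$. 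All the ingredients check out: the bounded polyhedral complex carries the canonical finite CW structure with $X$ as a CW subcomplex, relative cellular homology computes $H_*(|C|,|X|)$ for such a pair, the rank of a subquotient of a finitely generated free abelian group is bounded by its rank, and you correctly restrict to $k\in[d-1]$, i.e.\ $k\geq 1$, which is exactly the range where contractibility is used (the paper handles $\beta_0$ separately anyway). Your route is shorter and avoids the somewhat delicate cell-deletion bookkeeping (and the garbled-looking final inequality) in the paper's induction; its only cost is that it leans on machinery (relative homology, excision/cellular chains for pairs, the long exact sequence) that the paper's appendix deliberately does not introduce, which is presumably why the authors chose a self-contained inductive argument instead.
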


\begin{proof}
    Since $C$ only contains bounded polyhedra, we can equip $C$ with a CW-complex structure (c.f. Appendix \ref{top_background}) and compute the Betti numbers using cellular homology.  
    
    Let $Y = C \setminus X$, then $Y$ is a set of polytopes, but not necessarily a polyhedral complex. By abuse of notation, we denote by $Y_k$ the $k$-dimensional polytopes in $Y$ and by $\sk_k(Y) = \{\ell-$dimensional polytopes in $Y \mid  \ell \leq k\}$. By the definition of cellular homology it holds that $\beta_k(\sk_{k+1}(X)) = \beta_k(X)$ and hence, if $\#Y_{k+1}=0$, then \[\beta_k(X) = \beta_k(\sk_{k+1}(C \setminus Y)) = \beta_k(\sk_{k+1}(C) \setminus (\sk_{k}(Y) \cup Y_{k+1})) \leq \beta_k(\sk_{k+1}(C)) = 0\] settling the induction hypothesis. To show the induction step, let $\sk_{k+1}(Y)= D \cup \hat{P}$, where $\hat{P}$ consists of one $(k+1)$-polytopes $P$ and all its faces in $Y$. Therefore, $\#Y_{k+1} = \#D_{k+1} +1$. Furthermore, let $B= C \setminus D$ and $A = C \setminus Y$. By the induction hypothesis we know that $\beta_k(B) \leq \#D_{k+1}$.

 Our goal is to embed the cellular homology group $H_{k}(B)$ into $\mathbb{Z}\oplus H_{k}(A)$. Such an embedding readily implies that $\beta_{k}(B)\leq 1+\beta_{k}(A)$. From this, the induction step follows:
\[\beta_k(B)\leq 1+\beta_k(A)\leq 1\] 

We first delete the $(k+1)$-dimensional polytope $P$ itself (that is, without deleting the redundant faces, resulting in a polyhedral complex whose support we denote by $B'$), and observe by the elaborate definition of cellular homology groups that this induces a map $\phi_1\colon H_{k}(B')\to \mathbb{Z}\oplus H_{k}(A)$. One can additionally observe that this map is an embedding: Notice that the homology arises from the (relative) homologies of the chain complex
\[\ldots \rightarrow C_{k+1}\xrightarrow{\partial_{k+1}} C_k \xrightarrow{\partial_k} C_{k-1}\rightarrow \ldots \]
Deleting $P$ decreases the image of the boundary map $\partial_{k+1}$ and hence increases the $k$-th homology; however, it is straightforward to observe that
\[\phi_1\colon H_{k}(B')\cong \altfrac{\ker\partial_k}{\im\partial_{k+1}}\to [\sigma]\oplus \altfrac{\ker\partial_k}{\im\partial_{k+1}\cup [\sigma]}\cong \mathbb{Z}\oplus H_{k}(A)\]

which maps a homology class $[\sum_{\tau\in C_k} c_{\tau}\tau]$ from the domain to $(c_{\sigma},[\sum_{\tau\in \altfrac{C_k}{[\sigma]}} c_{\tau}\tau])$ is an embedding, where $\sigma$ is the generator corresponding to $P$.

To finish off the construction of the embedding, we finally define $\phi_2\colon H_k(B)\to H_k(B')$, i.e., the map induced by deleting all faces of $P$ in $\hat{P}$. This operation might reduce the kernel of the $k$-th boundary map, and hence potentially decrease the $k$-th Betti number. It is, however, again straightforward to observe that the map is injective in any case, in a similar fashion as above.

The claimed embedding is now $\phi_1\circ\phi_2$, finishing the proof for $k \in [d-1]$.
\end{proof}

\upperbound*

\begin{proof}
Theorem 1 in \citet{Bounding_Serra} states that $F$ has at most $r \coloneqq \sum_{(j_1,\ldots,j_{L})\in J} \prod_{l=1}^{L} \binom{n_l}{j_l}$ linear regions.
    Let $\Po$ be the canonical polyhedral complex of $F$. Since $\Po$ is complete, it follows that $\#\Po_k \leq \binom{r}{d-k+1}$. Furthermore, for $P \in \Po$ it holds that $F_{\mid P}$ is affine linear and hence $F_{\mid P}^{-1}((-\infty,0])$ is a half-space and therefore $P \cap F^{-1}((-\infty,0]) = P \cap F_{\mid P}^{-1}((-\infty,0])$ is a polyhedron. We define $\Po^- =\{P \cap F^{-1}((-\infty,0]) \mid P \in \Po\}$ and $ \Po^+ = \{P \cap F^{-1}([0,\infty)) \mid P \in \Po\}$, which due to the continuity of $F$ are subcomplexes of the refinement of $\Po$ where $F$ takes on exclusively non-positive respectively non-negative values on all polyhedra. It follows immediately from the definition that $\#\Po^+_k \leq \#\Po_k \leq \binom{r}{d-k+1}$. Let $s$ be the dimension of the lineality space of the complete polyhedral complex $\Po^- \cup \Po^+$.  By Lemma~\ref{lemma:polyhedraltoCW} we obtain a CW-complex $C$ that is homotopy equivalent to $\Po^- \cup \Po^+$, and therefore in particular contractible. Furthermore, since $\Po^-$ and $\Po^-$ are subcomplex of $\Po^- \cup \Po^+$, we obtain subcomplexes $X$ and $Y$ of $C$ such that $X$ is homotopy equivalent to $F^{-1}((-\infty,0])$ and $Y$ is homotopy equivalent to $F^{-1}([0,\infty))$. It clearly holds that $C\setminus X \subseteq Y$ and hence by Lemma~\ref{lemma:boundingbettinumberbyfaces} it follows that \[\beta_k(F^{-1}((-\infty,0])) = \beta_k(X) \leq \#\{(k+1)\text{-dimensional polyhedra in }(C \setminus X)\} \leq \binom{r}{d-k-s}, \]
    proving the claim.
\end{proof}
\end{document}